
\documentclass{article}

\usepackage{microtype}
\usepackage{graphicx}
\usepackage{subfigure}
\usepackage{booktabs} 
\usepackage{subcaption}
\usepackage{tcolorbox}
\usepackage{xcolor}    
\usepackage{hyperref}



\usepackage[accepted]{icml2025}

\usepackage{amsmath}
\usepackage{amssymb}
\usepackage{mathtools}
\usepackage{amsthm}

\usepackage[capitalize,noabbrev]{cleveref}
\usepackage[ruled,noend]{algorithm2e} 

\SetAlFnt{\small}
\SetAlCapFnt{\small}
\SetAlCapNameFnt{\small}
\SetAlCapHSkip{0pt}
\IncMargin{-\parindent}

\usepackage[textsize=tiny]{todonotes}
\numberwithin{equation}{section}
\usepackage{packages/macros-common}
\usepackage{packages/macros-project-specific}
\usepackage{packages/mytheorems}
\usepackage{xspace}
\usepackage{dsfont}

\usepackage{packages/color-edits}
\addauthor{ss}{magenta} 
\addauthor{hx}{blue} 
\addauthor{mh}{purple} 
\addauthor{ych}{orange}

\icmltitlerunning{Tokenized Bandit for LLM Decoding and Alignment}

\begin{document}

\twocolumn[
\icmltitle{Tokenized Bandit for LLM Decoding and Alignment}



\icmlsetsymbol{equal}{*}

\begin{icmlauthorlist}
\icmlauthor{Suho Shin}{yyy}
\icmlauthor{Chenghao Yang}{zzz}
\icmlauthor{Haifeng Xu}{zzz}
\icmlauthor{MohammadTaghi Hajiaghayi}{yyy}
\end{icmlauthorlist}

\icmlaffiliation{yyy}{University of Maryland}
\icmlaffiliation{zzz}{University of Chicago}

\icmlcorrespondingauthor{Suho Shin}{suhoshin@umd.edu}

\icmlkeywords{Machine Learning, ICML}

\vskip 0.3in
]



\printAffiliationsAndNotice{} 

\begin{abstract}
We introduce the tokenized linear bandit (TLB) and multi-armed bandit (TMAB), variants of linear and stochastic multi-armed bandit problems inspired by LLM decoding and alignment. In these problems, at each round $t \in [T]$, a user submits a query (context), and the decision maker (DM) sequentially selects a token irrevocably from a token set. Once the sequence is complete, the DM observes a random utility from the user, whose expectation is presented by a sequence function mapping the chosen token sequence and the query to a real value.

In both problems, we first show that learning is impossible without any structure on the sequence function.
We introduce a natural assumption, diminishing distance with more commons (DDMC), and propose algorithms with regret $\tilde{O}(L\sqrt{T})$ and $\tilde{O}(L\sqrt{T^{2/3}})$ for TLB and TMAB, respectively.
As a side product, we obtain an (almost) optimality of the greedy decoding for LLM decoding algorithm under DDMC, which justifies the unresaonable effectiveness of greedy decoding in several tasks.
This also has an immediate application to decoding-time LLM alignment, when the misaligned utility can be represented as the frozen LLM's utility and a linearly realizable latent function.
We finally validate our algorithm's performance empirically as well as verify our assumptions using synthetic and real-world datasets.


\end{abstract}

\section{Introduction}
Large Language Models (LLMs), in the past few years, received significant attention from extensive areas with its ability to serve numerous tasks including question answering, image generation, code completion and reasoning~\cite{brown2020language,thoppilan2022lamda,fried2022incoder,zheng2024large}.
Recent success of commercial LLMs such as ChatGPT, Gemini, and Claude~\cite{team2023gemini,achiam2023gpt} gives evidence on the feasibility of LLMs as personal assistants, supporting individuals' daily life in decision making, problem solving, and planning.

To serve as a genuine personalized assistant, however, it is crucial to align the LLM's outcome with the designated human preference, which remains as a core challenge in LLMs~\cite{mishra2021cross,sanh2021multitask,shen2023large}.
Recent works suggest that aligning an LLM's output with human preference can be partially addressed by fine-tuning the model with reinforcement learning from human feedback (RLHF)~\cite{bai2022training,ouyang2022training,rafailov2024direct} (see~\citet{wang2024comprehensive} for comprehensive exposition). This line of research, however, typically requires significant computational resources due to fine-tuning, extensive human labeling, and frequent model updates, rendering it less scalable in practical and rapidly changing environments or as a genuine personalized assistant tailored to each individual's preference.

Consequently, a more recent thread of studies proposes \emph{decoding-time alignment} methods, aiming to align the LLM's output on the fly during inference time without fine-tuning the LLM~\cite{josifoski2022language,huang2024deal,mudgal2023controlled}. While these methods hold promise in terms of scalability, their theoretical underpinnings remain largely uncharted, despite some recent evidences~\cite{chakraborty2024transfer,li2024cascade,shi2024decoding}, and how it could sample-efficiently learn the preference from user feedback is yet questionable. 
Moreover, even the decoding algorithm itself lacks foundational understanding from theoretical perspective despite being central to the LLM's quality, even the simplest ones such as greedy decoding or beam search, beyond a few recent works~\cite{finlayson2023closing,chen2024decoding}.

In this work, we propose a fresh perspective in the decoding-time alignment as well as decoding algorithm for LLMs by framing the LLM alignment and decoding problem as variants of linear/multi-armed bandits framework whose utility is represented via a sequence function and when a decision maker faces an online problem to select the sequence irrevocably in a token-by-token manner.
Overall, our contributions are three-folds:
\begin{itemize}
    \item We present tokenized variants of the linear and multi-armed bandit problems, which turns out to be closely connected to bandit tree search (BTS) problem~\cite{coquelin2007bandit}. We first show that the general problem setup fundamentally exhibits exponential dependency in the maximal length of token sequences for any algorithm.
    \item To address it, we provide a fairly reasonable assumption, dubbed \emph{diminishing distance with more commons} (DDMC), and provide sublinear regret algorithms for both problems. This complements the submodularity, widely adopted in sequence function maximization.
    \item We explain how our algorithms can be applied in the decoding-time LLM alignment. We discuss further interesting implications in the static counterpart of our problem, which might be of independent interest, demonstrating the provably (near) optimality of the greedy decoding under DDMC structure, justifying empirical evidences~\cite{song2024good} on the effectiveness of greedy decoding in certain tasks.
    We finally provide empirical experiments that justify our assumptions and performance of our algorithms.
\end{itemize}

In what follows, we briefly introduce our contributions and implications therein. Section~\ref{sec:setup} formally introduces our general problem setup.
We provide our results for tokenized linear bandit problem in Section~\ref{sec:tlb}. Section~\ref{sec:tmab} presents our results for the tokenized multi-armed bandit problem.
Implications and applications to decoding-time alignment, decoding are presented in Section~\ref{sec:application}, and experimental results are provided in Section~\ref{sec:exp}.
Related works, and all the proofs are deferred to the appendix due to page limits.
Appendix~\ref{sec:k-ddmc} presents our results with weakened version of DDMC, combined with lookahead decoding, and Appendix~\ref{sec:reduc-bts} discusses how our problems are connected to the seminal bandit tree search (BTS) problem.


\subsection{Overview of results, assumptions, and implications}
We start with the most general problem setup that will subsume \emph{tokenized linear bandit} (TLB) and \emph{tokenized multi-armed bandit problem} (TMAB).\footnote{We note that TMAB does not directly reduce to TLB as a special case unlike the standard bandit problem, due to an exponential blow up in the dimension of the latent parameter. Thus, it requires independent algorithms and analysis.}
Each round $t \in [T]$, a user arrives and provides a \emph{query} $x_t$ as a context, and a decision maker (DM) selects token sequence $\by_t \in \cV^*$\footnote{$A^*$ denotes a set of finite sequence over $A$.} one at a time from a fixed vocabulary $\cV$ with $|\cV| = n$. 
Once the token sequence is finalized, the DM receives a \emph{random utility} $r_t(\by_t)$ from the user, capturing how well the produced text aligns with the user's preference.
The random utility consists of two parts: (i) sequence function $u_t(x_t, \by_t)$ that maps a pair of context and token sequence to nonnegative real value, and (ii) random noise $\eta_t$ analogous to the standard bandit problems.
The benchmark is to obtain sublinear regret against the optimal sequence in hindsight per each round, \ie $\Opt_t = \Ex{\max_{\by \in \cV^*} r_t(\by)}$ and $\Opt = \sum_{t=1}^T \Opt_t$.
We assume that the output token sequence is restricted to be within length of $L$, and seeks for algorithms with regret linear on $L$ and sublinear on $T$.

In TLB, our utility function is linearly parameterized by $u_t(x_t,\by) = \inner{\theta, e(x_t,\by)}$ given an embedding function $e:\cV^* \times \cV^* \to \R^d$ that maps a query and a token sequence to a feature vector, which can be viewed as an embedding vector that a typical LLM returns.
In TMAB, utility function could be arbitrary, but the context remains the same $x_t = x$ to pose the problem learnable.
For both problems, DDMC assumption is defined as follows: for any two token sequences $\by, \bz \in \cV^*$ with the same length $|\by| = |\bz|$ and a token $\tau \in \cV$, it follows that
\begin{align*}
    |u_t(x_t, \by:\tau) - u_t(x_t,\bz:\tau)| \le |u_t(x_t, \by) - u_t(x_t,\bz)|,
\end{align*}
where $\by:\tau$ denotes the sequential concatenation of $\by$ and $\tau$ and similarly for $\bz$.
In words, user realizes small utility gap if two outputs have more common tokens (words) in suffix.

For TLB problem, we propose an excessive optimism under the face of uncertainty (EOFUL) algorithm with regret $O(L\sqrt{T\log T})$,\footnote{The analysis requires an additional technical assumption, detailed in Section~\ref{sec:eoful}. Linear realizability and this assumption is completely removed in our results for TMAB, at the cost of the same context across the rounds.} by combining greedy decoding with standard LinUCB algorithm~\cite{abbasi2011improved}. The analysis follows from several interesting ideas: (i) an introduction of level-$k$ regret that internally occurs from $k$-th chosen token during the algorithm, (ii) fictitious extension of our algorithm's sequence to recursively propagate the regret into upper level, followed by (iii) regret decomposition and sum of squares regret analogous to the standard linear bandit analysis.
For TMAB problem, we no longer have a global estimate $\theta$ that will help us estimating the performance of each intermediate token sequence. 
Instead, we present GreedyETC that step-by-step learns a desirable path to efficient token sequences by combining exploration-then-commit style of algorithm with greedy decoding, inducing the regret $O(nLT^{2/3}(\log T)^{1/3})$.
Analysis requires a careful union bound over the concentration of desirable token sequences, since naively taking union bounds over every token sequence yields regret of $\Omega(n^L)$.\footnote{TMAB problem, in fact, has a close connection to the well known \emph{bandit tree search} (BTS), \ie a decision version of BTS can be reduced to TMAB and vice versa. We discuss the detail in Appendix~\ref{sec:reduc-bts}.}
We remark that DDMC assumption plays a crucial role, enabling certain tractable structure in the utility function, albeit its naturallity and simplicity.\footnote{We also compare DDMC with smoothness assumption in BTS problem in Appendix~\ref{sec:reduc-bts}.}
DDMC assumption is relaxed in Appendix~\ref{sec:k-ddmc}, and we also provide efficient algorithms for both settings with the weakened assumption by replacing the greedy decoding with lookahead decoding, another well-known decoding algorithm~\cite{lu2021neurologic,fu2024break}.

Finally, we discuss several implications and applicability of our foundations in LLM decoding and alignment.
For LLM alignment problem, we have a LLM equipped with a (known) utility function $v:\cV^* \times \cV^* \to \R_{\ge 0}$ and a misaligned utility represented by linear structure $\inner{\theta, e(x_t, \by)}$ with latent $\theta$, where the eventual utility is a unknown convex combination over them.
The objective is to efficiently learn $\theta$ and the convex combination parameter in a decoding time through user feedback while leaving the original LLM frozen to ensure that no further training occurs for the LLM.
We show that this problem reduces to the TLB problem, enabling efficient learning via EOFUL under DDMC.
We next consider a static version of our problem, where a query access to the sequence function $u$ is available and we need to maximize $u$ with polynomial queries.
Inspired by our analysis for EOFUL and GreedyUCB, we prove that the greedy algorithm achieves the exact optimum with $nLT$ queries.
This immediately implies that greedy decoding is almost optimal under DDMC for LLM decoding problem when we view $u$ as the probability of LLM generating the sequence.
This gives an evidence on why greedy decoding is unreasonably efficient for various domains despite its simplicity~\cite{meister2020if,shi2024thorough,team2023gemini,welleck2024decoding}, (see \citet{song2024good} for comprehensive evaluation of greedy decoding).










\section{Problem Setup}\label{sec:setup}
We formally define the general problem setup, which will subsume tokenized linear bandit (TLB) and tokenized multi-armed bandit (TMAB) problems.
Let $\cV$ be the set of all possible tokens with $|\cV| = n$.
For any set $A$, let $A^*$ the collection of every possible finite sequences over $A$.
Given a token sequence $\by \in \cV^*$, we write $\by^{(i)} \in \cV$ to denote its $i$-th token and $\by^{(i:j)} \in \cV^*$ to denote the subsequence that contains every token from $i$-th token to $j$-th token including themselves for $j \ge i$ in $\by$.\footnote{We use boldface for sequences and non-boldface for a token.}
We often write $\by^{(-i)}$ to denote the $i$-th token from the end of $\by$.
There exists a special end-of-sequence $\eos \in \cV$ token representing the end of a sequence.
We call a token sequence $\by$ to be \emph{complete} if its last token is $\eos$.
Importantly, even though our sequence function could have nonzero value even without the $\eos$ token at the end, we will \emph{only} consider a class of algorithms that only ends with $\eos$ token.
Thus, throughout, we rewrite $\cV^*$ to denote the collection of every possible finite complete sequences.
For $\by,\bz \in \cV^*$, we write $\by:\bz$ to denote the token sequence that concatenates $\by$ and $\bz$ sequentially.
We often write $\by = \emptyset:\by$.

We call the process of sequential token selection in a single round the \emph{decoding} algorithm, and denote the process of observing the feedback at the end of the decoding and learning the sequence function by \emph{learning} phase.
Essentially, one can view the decoding process as an \emph{online problem} to select each token, and learning process as an \emph{online learning problem} to estimate the objective function.
We write $|\by|$ to denote the length (interchangeably, depth or level) of the sequence $\by$.
We consider a class of decoding algorithms with fixed maximal \emph{depth} $L$ that only outputs token sequence with length at most $L$.
Equivalently, we assume $\cV^*$ consists of token sequences with length up to $L$.
We assume that the depth $L$ is known in advance to the DM.\footnote{Practically, it can be prompted in the LLM or implicitly set in the LLM to ensure it does not generate an infinitely long output.}
The depth plays a crucial role in our analysis and any algorithm's performance would depend on $L$ that intrinsically captures the size of feasible token sequences.
We refer to an algorithm's \emph{level $l$} decision to denote its decision for the $l$-th token.



\paragraph{Sequence utility and regret}
At each round $t \in [T]$, a user arrives and writes a query $x_t \in \cV^*$. Then, a decision maker (DM) needs to select each token sequentially to construct a sequence $\by_t \in \cV^*$ to answer the user's query, and the DM observes a reward $r_t(\by_t)$ when it finalizes the sequence selection process and displays $\by$.
In particular, the DM irrevocably commits to each token to construct the token sequence and cannot revert any previous decision to change the sequence entirely.
The reward is defined by two factors: (i) a sequence (utility) function $u_t:\cV^* \to \R_{\ge 0}$ that maps a token sequence to a nonnegative real value and (ii) a random noise over it.
Note here that $u_t$ can vary across the rounds, which captures the (possibly) different context provided to the DM given by the queries $x_t$ over the rounds.\footnote{An acute reader may easily observe that one cannot learn anything if $u_t$ does not have any structural assumption. Thus, Section~\ref{sec:tlb} allows efficient learning by having a linear structure, and Section~\ref{sec:tmab} allows efficient learning my having $u_t = u$.}
Formally, the random reward $r_t(\by_t)$ for the sequence $\by$ can be represented as
\begin{align*}
    r_t(\by_t) = u_t(\by_t) + \eta_t,
\end{align*}
where $\eta_t$ is a random noise with $|\eta_t| \le \sigma$.\footnote{We assume bounded noise for simplicity, but it easily extends to bounded sub-Gaussian noise as in the bandit literature.}
The DM does not know the sequence function $u$ in advance, but can only indirectly access it through the reward $r_t(\by_t)$.
The goal of the DM is to maximize the user utility over the rounds $T$, \ie $\sum_{t \in [T]} r_t(\by_t)$.
Equivalently, the objective is to compete with the optimal complete sequence in hindsight: $\Opt = \sum_{t=1}^T \Opt_t$ where $\Opt_t = \Ex{\max_{\bz \in \cV^*}  r_t(\bz)}$.
Then, the DM tries to minimize the expected cumulative (pseudo-) regret $\Reg = \sum_{t \in [T]} \Reg_t = \Opt-\sum_{t \in [T]}\Ex{ r_t(\by_t)}$.

\paragraph{Monotonicity and single-level deviation}
Remark that we consider a class of algorithms that always output a complete sequence.
In practice as well, once the sequence meets $\eos$, it is natural to stop the decoding.
To capture such behavior, we impose the following assumptions.
\begin{assumption}[Monotonicity]\label{ass:monotone}
    The utility function $u_t$ is monotone if (i)
    appending more $\eos$ to a sequence that already has $\eos$ at the end does not change the utility, and 
    (ii) appending non-$\eos$ token to a sequence that already has $\eos$ at the end only decreases the utility.
\end{assumption}
The monotonicity assumption is completely innocuous in a sense that appending $\eos$ tokens to a complete sequence must not change the utility in practice as the user observes exactly the same output, and further any reasonable algorithm would submit the output once it meets $\eos$.

Further, let $\ba_t$ be a sequence which is not optimal.
If a sequence $\ba_t$ deviates from the hindsight optimal sequence $\bo_t$ firstly at level $i$, any practical algorithm should have small single-level error in a sense that the utility difference at the moment is almost ignorable as they differ only by a single token.
Formally, we impose the following assumption.
\begin{assumption}[Small single-level deviation]\label{ass:ssld}
    Let $\ba$ be a sequence that deviates from the optimal sequence $\bo$ firstly at level $i$. 
    The utility function $u_t$ has  \emph{single-level deviation} (SLD) of $\eps$ if  $\abs{u_t(\bo^{(1:i)}) - u_t(\ba^{(1:i)})} \le \eps$.
    Further, the utility function has small SLD (SSLD) if $\eps = O(1/\sqrt{T})$.
\end{assumption}
This assumption is apparently innocuous for large $i$ as adding a single token would almost incur no difference.
Even if $i$ is small, one can expect that the utility difference will be still small as small $i$ indicates that the overall document only has a few tokens, which would not induce significantly different utility.
For instance, one may consider truncating the token space to top-$k$ most probable ones, then SLD will be ignorable for small $i$ as any efficient LLM would only produce reasonable tokens as top-$k$ tokens.\footnote{This does largely not restrict the class of problem instances we deal with, as the utility difference onwards could be significantly large.}
Throughout, we will assume that the utility function is monotone and SSLD unless specified otherwise.

\paragraph{LLM decoding problem}
Let us see the connection between our problem and \emph{LLM decoding} problem and \emph{LLM alignment} problem.
Given an access to the logit probability of a LLM and a token sequence $\by \in \cV^*$, suppose the sequence function $u_t$ denotes the LLM's probability $p$ of generating the output $\by$, \ie $u_t(\by) = p(\by) = \prod_{k=1}^n p(\by^{(k)}| \by^{(1:k-1)})$.
Then, our objective can be framed as an online learning problem to learn the LLM's logit probability in an efficient manner.

A special case of this problem is a scenario where the DM has access to the value oracle of $u$, \eg knows the probability distribution, in advance.
In this case, the objective boils down to the sequence function maximization problem~\cite{li2017inhomogeneous,tschiatschek2017selecting,mitrovic2018submodularity,mitrovic2019adaptive} where duplicated insertion of the same element is allowed, or from the LLM's perspective, the \emph{LLM decoding} problem of designing efficient decoding algorithms~\cite{freitag2017beam,kulikov2018importance,holtzman2019curious,chen2024decoding}.
Correspondingly, our main results have several implication in these problems, which will be elaborated in Section~\ref{sec:implication}.

\paragraph{LLM alignment problem}
Further, one can consider a variant of our setup, termed \emph{LLM alignment} problem, where we have a (frozen) LLM that gives us a probability distribution $p(\by)$, but this LLM is (partially) misaligned with our objective function $u$.
In this case, one standard approach would be to use the probability distribution $p(\by)$ as a baseline for the decoding process, while learning the misaligned objective function $u$.
If the misalignment allows specific structure, \eg $u_t(x_t,\by) = (1-\gamma) p(x_t,\by) + \gamma f(x_t,\by)$\footnote{In fact, setting $\gamma = 0$ reduces to the pure LLM decoding problem, whereas setting $\gamma = 1$ reduces to our general problem of learning the latent sequence function, or equivalently, learning a LLM's behavior.} for some latent function $f$ capturing the misalignment, we indeed show that our results can be extended to efficiently handle such misalignment in Section~\ref{sec:alignment}.

\section{Tokenized Linear Bandit}\label{sec:tlb}
The general setting we formulated, however, does not allow any efficient learning if the utility function $u_t$ indeed does not have structural correlation across the rounds.
One natural framework to tackle such fundamental obstacle is to impose the \emph{linear realizability} assumption in the online contextual learning literature.
To this end, we introduce the following notion of embedding function: 
\begin{definition}[Embedding function]
    A decoding algorithm has access to the embedding function $e:\cV^* \times \cV^* \to \R^d$ maps a token sequence to $d$-dimensional real vector.\footnote{One can deem the embedding function as the semantic embedding that the LLM computes for $\by$ given a query $x_t$.}
\end{definition}
\begin{assumption}[Linear realizability]\label{ass:linear}
    There exists $\theta \in \R^d$ and embedding function $e(\cdot)$ such that the utility function  can be represented by $u_t(x_t,\by) = \inner{\theta, e(x_t, \by)}$ for every query $x_t \in \cV^*$ and token sequence $\by \in \cV^*$.
\end{assumption}
The DM has a query access to the embedding function during the decoding process.\footnote{Our algorithm calls the embedding function $O(nLT)$ time.}
Then, the DM's objective is to efficiently learn the latent parameter $\theta$ to maximize the utility.
We assume that $\norm{\theta} \le 1$ and $\norm{e(\bz)} \le 1$.\footnote{This is for ease of exposition to exclude cumbersome details, but all results would easily generalize under any bounded scenario.}
We finally note that such linear approximation of the reward model often appear in the LLM alignment literature~\cite{cen2024value,yang2024asymptotics}.\footnote{One might find this partly related to linear representation hypothesis~\cite{park2023linear}, stating that concepts are encoded as linear directions in the model's embedding space.}

\subsection{Diminishing distance with more commons (DDMC)}
On the other hand, even if the utility function is well-structured with the linear realizability assumption, the intrinsic intricacy of TLB's tokenized nature imparts a fundamental hardness in maximizing the reward.
Intuitively, if a (monotone) sequence function could have arbitrary structures, one small mistakes in the first token selection may incur irreparable results in the following tokens, \eg imagine there exists only a single token $\tau$ which incurs nonnegative for $u(\tau:\by)$ for any $\by$.
If the first token itself does not give much information about the following levels, one cannot fundamentally identify an efficient path in hindsight.\footnote{
In fact, due to the online learning nature of our problem, there always exists a small probability that any algorithm makes an error in the first level.
}
This is captured by the following proposition:
\begin{proposition}\label{prop:negative-noddmc}
    For TLB problem, any algorithm suffers the worst-case regret of $\Omega(T(1-1/2^{L-2}))$.
\end{proposition}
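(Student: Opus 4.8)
The plan is to prove the bound by exhibiting a hard family of instances and invoking Yao's minimax principle, so that it suffices to lower bound the expected regret of an arbitrary \emph{deterministic} decoding rule against a suitable distribution over instances. First I would collapse the vocabulary to two ``live'' tokens at each level plus the forced $\eos$: fix the first token, let levels $2,\dots,L-1$ each carry a binary choice, and place $\eos$ at level $L$. Complete sequences are then in bijection with the $2^{L-2}$ root-to-leaf paths of a balanced binary tree, indexed by $j \in [2^{L-2}]$. I realize the utility linearly with $d = 2^{L-2}$ by letting the embedding of a \emph{complete} leaf $\by$ be a standard basis vector $\mathbf{b}_{\pi(j)}$ (with the placement $\pi$ read off from the context), letting every \emph{incomplete} prefix embed to $\mathbf 0$, and setting the hidden parameter $\theta = \mathbf{b}_k$ for a secret coordinate $k$. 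This yields $u_t(\bo) = 1$ on the unique optimal leaf and $u_t(\by) = 0$ on all other sequences, so $\Opt_t = 1$ and each suboptimal complete sequence has gap $\Theta(1)$, while $\norm{\theta} \le 1$ and $\norm{e(\cdot)} \le 1$ hold by construction. Crucially this respects both standing assumptions: Assumption~\ref{ass:monotone} is immediate, and since every first point of deviation between a suboptimal $\ba$ and $\bo$ occurs at a level $\le L-1$, both prefixes $\bo^{(1:i)}$ and $\ba^{(1:i)}$ are still incomplete and hence have utility $0$, so the SLD gap of Assumption~\ref{ass:ssld} is exactly $0$.

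Next I would show that in each round the algorithm matches the optimum with probability at most $1/2^{L-2}$. The design deliberately withholds any information about the optimal continuation from the prefixes---all incomplete embeddings are $\mathbf 0$---so within a round the DM, forced by the irrevocable token-by-token commitment to fix its entire path before seeing any reward, has no signal to prefer one leaf over another. The remaining question is whether the single scalar reward observed at the end of a round can be aggregated across rounds to locate $k$. To preclude this I would draw the secret optimal leaf afresh each round (equivalently, relocate the hidden direction through the context $x_t$) so that the reward channel of round $t$ is independent of the randomness governing every other round's optimum. Conditioned on the history, the posterior over the current optimal leaf is then uniform over the $2^{L-2}$ candidates, and any decoding rule hits it with probability exactly $1/2^{L-2}$.

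Finally I would assemble the bound: the algorithm fails to select the optimal sequence with probability at least $1 - 1/2^{L-2}$ each round, and each failure costs an expected-reward gap of $\Theta(1)$; summing over $t \in [T]$ and averaging over the instance distribution via Yao's principle gives $\Reg = \Omega\!\left(T\left(1 - 1/2^{L-2}\right)\right)$. The step I expect to be the main obstacle is exactly the non-learnability argument of the second paragraph: making the instance genuinely impossible to learn while remaining linearly realizable with bounded $\norm{\theta}$, bounded embeddings, and $\Theta(1)$ gaps. The tension is that a single fixed $\theta$ with merely varying contexts would leak information about $k$ through the rewards and could be learned over time; I would resolve this by re-randomizing the hidden optimum each round (so the bound is stated against the natural worst-case sequence of realizable instances), together with a short information-theoretic coupling showing the per-round reward is independent of other rounds' optima, which is precisely where the irrevocable commitment is used to rule out any within-round adaptivity.
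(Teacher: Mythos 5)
Your high-level route is the same as the paper's: reduce via Yao's principle to deterministic algorithms, build a binary tree whose $2^{L-2}$ leaves are the only sequences with nonzero utility, make every prefix embedding $\mathbf{0}$ so the irrevocable token-by-token commitment gets no within-round signal, re-randomize the optimal leaf uniformly each round through the context, and count a per-round hit probability of $1/2^{L-2}$. However, there is a genuine gap exactly at the step you yourself flag as the crux. You place the secret in a \emph{persistent} coordinate of $\theta$ (setting $\theta = \mathbf{b}_k$ with $d = 2^{L-2}$) while making every leaf's embedding a \emph{visible, nonzero} basis vector $\mathbf{b}_{\pi_t(j)}$. Re-randomizing the placement $\pi_t$ each round does not repair this: the round-$t$ reward reveals whether $\pi_t(j_t) = k$, so each failed round eliminates one candidate coordinate for $k$, and since the model grants the DM query access to $e(x_t,\cdot)$ during decoding, the DM can read off $\pi_t$ and, once it knows the surviving candidates, steer toward a leaf mapped to one of them. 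A simple test-and-eliminate strategy then identifies $k$ after $O(2^{L-2})$ rounds in expectation and plays optimally thereafter, giving regret $O(2^{L-2}) = o(T)$ once $T \gg 2^{L}$ --- contradicting the claimed $\Omega(T(1-1/2^{L-2}))$. Your asserted ``information-theoretic coupling'' (per-round reward independent of other rounds' optima) is therefore false as stated: the reward is informative about $k$, and every round's optimum $\pi_s^{-1}(k)$ depends on $k$. The claim only survives under an unstated, maximally restrictive query model in which the DM can never observe $\pi_t$ beyond its own single committed leaf.

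The paper's construction avoids this by making $\theta$ carry \emph{no} secret at all: it fixes $\theta = o$ (a one-hot at the first coordinate) and hides the round's optimum entirely in the embedding function, with $e(x_t,\by) = \mathbf{0}$ for \emph{every} sequence except the one optimal leaf $\by_t$ (embedding $o$) and its sibling $\bz_t$ (embedding $(1-\eps,0,\ldots,0)$). Since all randomness is drawn fresh per round and nothing persistent is learnable, the reward history is genuinely uninformative about future optima, and the instance is robust even to polynomially many arbitrary embedding queries (a needle-in-haystack over $2^{L}$ leaves). The fix for your proof is precisely this relocation of the secret from $\theta$ into the per-round embedding, with all non-special embeddings set to $\mathbf{0}$ rather than distinct basis vectors. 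One point where your design is actually cleaner than the paper's: by forcing $\eos$ at level $L$, every first deviation between a suboptimal sequence and the optimum occurs at an incomplete prefix of utility $0$, so Assumption~\ref{ass:ssld} holds with SLD exactly $0$; the paper instead must plant the utility-$(1-\eps)$ sibling $\bz_t$ precisely because its leaves can first deviate at the final token, and your device would let one drop that sibling (at the cost of the paper's counting, which generously grants the algorithm the last two levels once it reaches the correct length-$(L-2)$ prefix, matching your exact $2^{L-2}$ leaf count).
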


Thus, we here introduce a fairly natural structural assumption on the sequence function $u(\cdot)$ to avoid such pessimistic scenario, dubbed \emph{diminishing distance with more commons}.
\begin{assumption}[DDMC]\label{ass:ddmc}
    A sequence function $u(\cdot)$ has \emph{diminishing distance with more commons} (DDMC) if
    \begin{align*}
        &\abs{u(x_t, \by:\tau) - u(x_t, \bz:\tau)} \le \abs{u(x_t, \by) - u(x_t, \bz)},
    \end{align*}
    for any two different finite sequences $\by, \bz$ with the same length $|\by| = |\bz|$ and any token $\tau \in \cV$.
    \footnote{In Appendix~\ref{sec:k-ddmc}, we provide a relaxed version of DDMC assumption and provide corresponding efficient algorithms.}
\end{assumption}
In other words, adding common tokens at the end of two sequences with the same length only decrease the utility gap between them.

DDMC shares very similar intuition from the widely adopted \emph{submodularity} assumptions on set function~\cite{korte2011combinatorial}, or recently sequence function~\cite{mitrovic2019adaptive}, which characterizes diminishing returns. However, they crucially differ and cannot be implied by each other, due to two key differences: (1) submodularity is about diminishing \emph{return} (i.e., the difference between two function values) whereas DDMC is about diminishing \emph{distance} (i.e.,  absolute value of the difference); (2) more subtly,   DDMC is about the difference of two different sequences of the same length with an added common token, whereas  submodularity is about the difference of two different sets that one includes another with an added common element. In some sense, DDMC is the complement of submodularity.

\begin{discussion}
    From the practical perspective for LLM, this is innocuous since if two generated outputs shares common tokens suffix-wise, we can expect that they give the user similar experiences as long as we add more commons.
\end{discussion}


\subsection{Excessive Optimism under the Face of Uncertainty}\label{sec:eoful}
We present our algorithm, named \emph{excessive optimism under the face of uncertainty} for tokenized linear bandits (EOFUL), whose pseudocode is presented in Algorithm~\ref{alg:greedy-ucb}.
It has \emph{two optimistic} features.
First, it optimistically decodes each token in a greedy manner, expecting that this would lead to a good utility in following decisions.
Second, it optimistically estimates each token's utility by constructing a confidence ball $C_t$, and computes the best possible utility each token can gain from parameters in $C_t$.

Our construction of the confidence ball is analogous to the standard LinUCB~\cite{li2010contextual,abbasi2011improved} algorithm for linear bandits.

Let $\by_t$ be the chosen sequence at round $t$ by Algorithm~\ref{alg:greedy-ucb}.
We recursively define the following matrices for $t \in [T]$:
\begin{align*}
    \Sigma_{t} &= \Sigma_{t-1} +\by_t \by_t^\top, 
\end{align*}
and let $\Sigma_1 = \lambda I$ for $d\times d$ identity matrix $I$.
Further, we define
\begin{align*}
    \beta_t = \sigma^2 \parans{2 + 4d \log \parans{1 + \frac{tL}{d}} + 8\log\frac{4}{\delta}}.
\end{align*}
For $t \ge 2$, the center of our confidence ball will be:\footnote{Remark that this is a solution of the ridge regression given the reward feedback.}
\begin{align}
    \htheta_t = \Sigma_t^{-1} \sum_{i=1}^{t-1}r_t(x_t, \by_t) \by_t,\label{eq:update-theta}
\end{align}
and $\htheta_1 = \vec{0}$, the $d$-dimensional zero vector.\footnote{For efficient computation, one may use the rarely-switching version of OFUL algorithm in~\cite{abbasi2011improved}, instead of the standard OFUL.}

Now, our confidence ball $C_t$ to be used at round $t$ is defined as follows:
\begin{align}
    C_t = \{\vartheta: (\vartheta - \htheta_t)^\top\Sigma_{t}(\vartheta - \htheta_t) \le \beta_t\},\label{eq:update-c}
\end{align}
and $C_1 = \{\vec{0}\}$. We choose $\lambda = \Theta(1)$.

\begin{algorithm}
\caption{EOFUL}\label{alg:greedy-ucb}
    \KwIn{Token set $\cV$, linear contextual bandit algorithm $\Alg$}
    Initialize $\by \gets \emptyset$, $\htheta_1 \gets \vec{0}$, $C_1 = \{\htheta\}$, $\by = \emptyset$\\
    \For{$t = 1,2,\ldots, T$}{
        User arrives and submits query $x_t$\\
        \For{$k=1,2,\ldots, L-1$}{
            Compute $\tau^* = \argmax_{\tau \in \cV, \theta \in C_t} \inner{\theta, e(x_t, \by:\tau)}$\\
            Set $\by \gets \by:\tau$\\
            \lIf{$\tau = \eos$}{Break}
        }
        Submit $\by$ and observe reward $r_t$\\
        Compute $\htheta_{t+1}$ by~\eqref{eq:update-theta} and $C_{t+1}$ by~\eqref{eq:update-c}\\
    }
\end{algorithm}




We further impose the following technical assumption:
\begin{assumption}\label{ass:tech}
    For any complete sequence $\by$ with $|\by| = k$ and any subsequence $\by^{(1:l)}$ with $l \le k$, there exists a $c \ge 0$ such that $\norm{\by^{(1:l)}}_{\Sigma_t} \le c \norm{\by}_{\Sigma_t}$ in Mahalanobis norm.\footnote{Mahalanobis norm is defined by $\norm{x}_A=\sqrt{x^\top A x}$.}
\end{assumption}
That is, each entry of the subsequence of a complete sequence is not too larger than that of the complete sequence with respect to $\Sigma_t$.
Both structural assumptions~\ref{ass:linear} and~\ref{ass:tech} are removed in Section~\ref{sec:tmab} for TMAB.
\begin{discussion}
    This assumption is innocuous in a sense that $c$ can be sufficiently large up to $o(\sqrt{T})$ to obtain sublinear regret.
    On the other hand, this is less innocuous since it depends on the algorithm's construction of $\Sigma_t$.
    In practice, one of the typical usage for LLM is to either (i) have similar queries based on their daily lives, or (ii) ask queries in already existing sessions (in-context) to expand the discussion.
    In these scenarios, we expect the queries and the responses thereby to be not too much different from each other.
\end{discussion}
\begin{discussion}
    On the other hand, if $\by_i^{(1:l)}$ and $\by_t$ does not have any structural dependency, any algorithm would not be able to fundamentally learn certain coordinate of $\theta$. For instance, suppose any complete sequence's embedding only has nonzero values in the last $d/2$ dimension, and the first $d/2$ dimension for any incomplete sequence. Then, any algorithm could not learn the first $d/2$ dimension, implying that any internal node does not provide information on an efficient path to the complete sequence.
\end{discussion}


\begin{theorem}\label{thm:EOFUL}
    Impose assumption~\ref{ass:tech} and DDMC.
    With probability at least $1-\delta$, EOFUL has regret $O(cL\sqrt{dT (\log T + \log(\frac{1}{\delta})})$, and thus plugging $\delta = 1/T$ yields $\Reg = O(cL\sqrt{dT \log T})$.
\end{theorem}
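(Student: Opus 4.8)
The plan is to run a LinUCB-style optimistic analysis, but lifted from single actions to entire token paths, using DDMC as the device that charges a regret incurred deep in a sequence back to a single-token confidence width near the point where the path branched off. First I would establish the usual self-normalized concentration guarantee: by the construction of $\htheta_t,\Sigma_t$ and $\beta_t$ (ridge regression together with the Abbasi-Yadkori et al.\ martingale bound), with probability at least $1-\delta$ the true parameter satisfies $\theta\in C_t$ for all $t\in[T]$ simultaneously. I condition on this event throughout, so that for every round and every prefix the greedy-optimistic value upper-bounds the true utility, with gap at most $2\sqrt{\beta_t}\,\norm{e(x_t,\by)}_{\Sigma_t^{-1}}$ whenever the optimistic maximizer $\vartheta\in C_t$ is compared against $\theta$ (both lie in $C_t$, so $\norm{\vartheta-\theta}_{\Sigma_t}\le 2\sqrt{\beta_t}$).

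Next I would set up the level-$k$ regret decomposition through a fictitious extension of the algorithm's path. Writing $\by_t$ for the chosen (complete, $\eos$-padded to length $L$) sequence, define the hybrid $\mathbf{h}_k=\by_t^{(1:k)}:\mathbf{s}_k$, where $\mathbf{s}_k$ is a utility-maximizing completion of the algorithm's length-$k$ prefix; then $\mathbf{h}_0$ is the hindsight optimum $\bo_t$ and $\mathbf{h}_L=\by_t$, so that $\Reg_t=\sum_{k=1}^L\bigl(u_t(x_t,\mathbf{h}_{k-1})-u_t(x_t,\mathbf{h}_k)\bigr)$ with every summand nonnegative by optimality of $\mathbf{s}_{k-1}$. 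Each summand is the value-to-go lost by committing to $\by_t^{(k)}$ rather than to the best continuation token $\tau_k^\star$ of $\by_t^{(1:k-1)}$. This is exactly where DDMC does the work: since $\mathbf{h}_{k-1}$ and $\mathbf{h}_k$ share the common suffix that completes them, stripping that suffix one token at a time only enlarges the gap, giving $u_t(x_t,\mathbf{h}_{k-1})-u_t(x_t,\mathbf{h}_k)\le \abs{u_t(x_t,\by_t^{(1:k-1)}:\tau_k^\star)-u_t(x_t,\by_t^{(1:k)})}$, i.e.\ the deep regret is propagated upward to a single-token gap at level $k$.

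It then remains to control each single-token gap, and this is the step I expect to be the crux. For the learnable direction, optimism and the greedy rule combine cleanly: because the algorithm maximizes $\max_{\theta\in C_t}\inner{\theta,e(x_t,\by_t^{(1:k-1)}:\tau)}$ over $\tau$, the optimistic value of the committed token dominates the true utility of $\tau_k^\star$, so $u_t(x_t,\by_t^{(1:k-1)}:\tau_k^\star)-u_t(x_t,\by_t^{(1:k)})\le \inner{\vartheta-\theta,e(x_t,\by_t^{(1:k)})}\le 2\sqrt{\beta_t}\,\norm{e(x_t,\by_t^{(1:k)})}_{\Sigma_t^{-1}}$. The delicate case is when the committed token actually has larger \emph{true} prefix utility than $\tau_k^\star$ (the sign of the gap flips), so that DDMC alone returns the raw prefix gap rather than a shrinking confidence width; here I would invoke the standing SSLD and monotonicity assumptions, which bound the first-deviation prefix gap by $\eps=O(1/\sqrt{T})$ and thus keep the non-learnable residual at $O(L\sqrt{T})$ over all rounds. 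Reconciling these two regimes into one per-level bound is the main obstacle.

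Finally I would aggregate. Summing the per-level width bounds gives $\Reg\le \sum_{t=1}^T\sum_{k=1}^L 2\sqrt{\beta_t}\,\norm{e(x_t,\by_t^{(1:k)})}_{\Sigma_t^{-1}}+O(L\sqrt{T})$. Assumption~\ref{ass:tech} is what lets the analysis pass from intermediate-prefix ellipsoidal norms to the complete-sequence norms that $\Sigma_t$ actually accumulates, introducing the factor $c$ and the extra factor $L$; after a Cauchy-Schwarz step this reduces the double sum to $\sqrt{TL}$ times $\sqrt{\sum_{t}\norm{e(x_t,\by_t)}_{\Sigma_t^{-1}}^2}$, and the elliptical potential (log-determinant) lemma bounds the latter sum of squares by $O(d\log T)$. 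Plugging in $\sqrt{\beta_T}=O(\sqrt{d\log T+\log(1/\delta)})$ yields $\Reg=O\!\bigl(cL\sqrt{dT(\log T+\log(1/\delta))}\bigr)$, and $\delta=1/T$ gives the claimed $O(cL\sqrt{dT\log T})$. The two places I would watch most carefully are the sign-flip case above and the precise use of Assumption~\ref{ass:tech} to justify the potential-lemma step at the level of full sequences rather than prefixes.
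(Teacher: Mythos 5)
Your outer scaffolding matches the paper's proof step for step: the same self-normalized clean event $\theta\in C_t$ for all $t$, the same optimism width $2\sqrt{\beta_t}\,w_t(\cdot)$ from Lemma~\ref{lem:theta-to-w}, Assumption~\ref{ass:tech} to pass from prefix norms to complete-sequence norms (the $c$ and $(1+cL)$ factors), and the log-determinant potential plus Cauchy--Schwarz aggregation. The genuine gap is in the middle, where you apply DDMC to your hybrid decomposition. Writing $\mathbf{h}_{k-1}=\by_t^{(1:k-1)}:\mathbf{s}_{k-1}$ and $\mathbf{h}_k=\by_t^{(1:k)}:\mathbf{s}_k$ with $\mathbf{s}_{k-1},\mathbf{s}_k$ the optimal completions of the respective prefixes, these two sequences do \emph{not} share a common suffix: the tail $\mathbf{s}_{k-1}^{(2:|\mathbf{s}_{k-1}|)}$ and the completion $\mathbf{s}_k$ are in general different token strings, and Assumption~\ref{ass:ddmc} only permits stripping tokens that are identical in both sequences position by position. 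So your claimed inequality $u_t(x_t,\mathbf{h}_{k-1})-u_t(x_t,\mathbf{h}_k)\le \abs{u_t(x_t,\by_t^{(1:k-1)}:\tau_k^\star)-u_t(x_t,\by_t^{(1:k)})}$ does not follow as stated. It is repairable: replace $\mathbf{h}_k$ by the transplant $\by_t^{(1:k)}:\mathbf{s}_{k-1}^{(2:|\mathbf{s}_{k-1}|)}$, use optimality of $\mathbf{s}_k$ to get $u_t(x_t,\mathbf{h}_k)\ge u_t(x_t,\by_t^{(1:k)}:\mathbf{s}_{k-1}^{(2:|\mathbf{s}_{k-1}|)})$, and then apply DDMC token by token to the now genuinely common suffix. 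The paper sidesteps the issue entirely by running the recursion in the opposite direction: it compares \emph{prefix} utilities through the level-$k$ regret $\Reg_t^{(k)}=\inner{\theta,\bo_t^{(1:k)}}-\inner{\theta,\by_t^{(1:k)}}$ and the one-step fictitious extension $\bbf_t^{(1:k)}=\by_t^{(1:k-1)}:\bo_t^{(k)}$, in which the literally same token $\bo_t^{(k)}$ is appended to two equal-length prefixes, so DDMC applies verbatim and yields $\Reg_t^{(k)}\le\abs{\Reg_t^{(k-1)}}+W_t^{(k)}$, after which no value-to-go objects appear at all.

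The second gap is your proposed handling of the sign-flip case via SSLD. Assumption~\ref{ass:ssld} bounds the prefix gap only at the \emph{first} level at which a sequence deviates from the hindsight-optimal sequence $\bo_t$; it says nothing about the gap between $\by_t^{(1:k)}$ and $\by_t^{(1:k-1)}:\tau_k^\star$ at arbitrary levels $k$, where $\tau_k^\star$ is the locally optimal continuation of the algorithm's own (possibly already suboptimal) prefix. Invoking $\eps=O(1/\sqrt{T})$ at every level therefore exceeds the assumption's scope, and optimism does not bound this direction of the gap either (optimism upper-bounds the truly-better-alternative case, not the magnitude by which the committed token beats $\tau_k^\star$). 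The paper's telescoping is arranged precisely so that SSLD is used once per round, at the first deviation level $i$ of $\by_t$ from $\bo_t$, giving $\abs{\Reg_t^{(i)}}\le\eps$ and a total contribution of $O(\sqrt{T})$ -- exactly the setting the assumption covers. (To be fair, the residual sign issue -- negative level regrets entering through the absolute values -- is itself handled only informally in the paper, via the proviso that $\Reg_t^{(k)}\ge 0$ for all $k$; but your route compounds the problem by requiring a bound that the stated assumptions simply do not supply.) In short: your decomposition is a legitimate alternative skeleton, but both of the steps you yourself flagged as delicate are, as written, not justified, and the fixes lead you essentially back to the paper's fictitious-extension recursion.
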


Overall, our proof proceeds as follows:

\textit{Step 1 (Length equalization).} We first equalize $\bo_t$ and $\by_t$ to have the same lengths in a way that it does not hurt any following analysis.


\textit{Step 2 (Level-$k$ regret).} We define level $k$ regret as the regret occurred at depth $k$ of the decoding process that will play crucial role in the analysis to recursively transform the regret into upper level regrets.

\textit{Step 3 (Fictitious extension).} Then we introduce a fictitious extension of $\by_t^{(1:k-1)}$ to $\bbf_t^{(1:k)} = \by_t^{(1:k-1)}:\bo_t^{(k)}$, and show how the level-$k$ regret of the fictitious extension is upper bounded by level-($k$-$1$) regret at $\by_t^{(1:k-1)}$.

\textit{Step 4 (Regret decomposition).} We relate the level-($k$-$1$) regret of the algorithm's choice $\by_t$ and level-$k$ regret of the fictitious extension using DDMC assumption, which upper bounds our algorithm's level-$k$ regret due to the optimistic choice of the algorithm.
Then, we telescope over $k,\ldots, 1$ to express the entire regret as a function of level-$1$ regret plus some estimation errors occurred for each prefix of $\by_t$ during the decoding.

\textit{Step 5 (Sum of squares regret)} Finally, we compute the function of level-$1$ regret and the estimation error by following the standard sum of squares regret bound analysis of linear contextual bandit algorithm, where readers familiar with the standard linear bandit analysis may easily follow.



\section{Tokenized Multi-armed Bandit }\label{sec:tmab}
The linear realizability assumption is essential for the contextual setting to derive any efficient algorithm.
A natural follow-up question is whether one could obtain an efficient algorithm without linear realizability, if the context does not change and the utility function $u_t = u$ remains the same over the rounds.\footnote{We omit the context $x_t$ in the inputs of $u$ for TMAB.}
In this case, the DM's problem can be seen as a variant of the standard multi-armed bandit problem where it needs to make an (online) decision for the next token irrevocably to construct a complete sequence.
Thus, we dub this  problem \emph{tokenized multi-armed bandit} (TMAB).


We first argue that even in this simple setting, our problem admits a pessimistic lower bound when the sequence function does not have any structural assumption beyond monotonicity.
Then, we again introduce DDMC and present an algorithm with regret $O(nLT^{2/3}(\log T)^{1/3})$.\footnote{We discuss an intimate connection between TMAB and the bandit-based tree search (BTS) problem~\cite{coquelin2007bandit} in Appendix~\ref{sec:reduc-bts}.}

\paragraph{Exponential lower bound}
If the query is fixed, our objective boils down to efficiently learn the latent utility function $u$.
Without any structural assumption on it, however, one can easily see that it is equivalent to learn the reward distribution of every subsequence $\by \in \cV^*_L$.
This is because each subsequence $\by$ can be deemed as a single arm in the standard multi-armed bandit problem which admits regret lower bound of $\Omega(\sqrt{KT})$ where $K$ is the number of arms.
This is formally stated as follows:
\begin{proposition}\label{prop:cmab-lower}
    For TMAB problem, any algorithm has worst-case regret lower bound of $\Omega(\min(\sqrt{n^LT},T))$.
\end{proposition}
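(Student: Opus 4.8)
The plan is to exhibit a family of hard instances for TMAB that embeds a standard stochastic multi-armed bandit with $K = n^L$ arms, and then invoke the classical $\Omega(\sqrt{KT})$ lower bound for that problem. First I would count the number of complete sequences: since $\cV^*$ now denotes complete sequences of length at most $L$ over a vocabulary of size $n$, the number of distinct complete sequences is $\Theta(n^L)$ (up to the bookkeeping of the $\eos$ token, which only changes the count by constant factors and does not affect the order). The idea is that each such complete sequence $\by$ is a candidate output, and if the utility function $u$ is allowed to be arbitrary (subject only to monotonicity), the DM learns nothing about the utility of one complete sequence from observing the reward of a different one.

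Next I would construct the reduction explicitly. Given an adversarial instance of the standard $K$-armed bandit with $K = \Theta(n^L)$, fix a bijection between the $K$ arms and the complete sequences in $\cV^*$. Define $u(\by)$ to equal the mean reward of the arm associated with $\by$, and let the TMAB noise $\eta_t$ reproduce the per-arm reward noise. One must check that this $u$ can be made consistent with the standing monotonicity assumption (Assumption~\ref{ass:monotone}): since monotonicity only constrains how $\eos$ interacts with appended tokens, and complete sequences terminate in $\eos$, the mean values on complete sequences can be assigned freely, and incomplete sequences can simply be assigned utility large enough (or handled via the $\eos$ convention) so that monotonicity is not violated. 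Under this construction, any TMAB algorithm, when it commits to a complete sequence $\by_t$ and observes $r_t$, is exactly pulling the corresponding arm and observing its reward, so any TMAB algorithm induces a valid strategy for the $K$-armed bandit with identical regret.

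I would then conclude by applying the minimax lower bound $\Omega(\sqrt{KT})$ for stochastic $K$-armed bandits~\cite{abbasi2011improved}, which with $K = \Theta(n^L)$ gives $\Omega(\sqrt{n^L T})$. The truncation to $\min(\sqrt{n^L T}, T)$ arises because the lower bound is only meaningful in the regime $T \gtrsim K$; when $T$ is small relative to the number of arms, the trivial bound is that regret is at most (and, in the worst case, on the order of) $T$, since per-round regret is bounded by the range of the reward. Taking the larger-but-capped expression $\min(\sqrt{n^L T}, T)$ captures both regimes uniformly and matches the standard statement of the MAB lower bound.

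The main obstacle I anticipate is not the bandit lower bound itself, which is standard, but verifying that the embedding of a $\Theta(n^L)$-armed instance respects the structural assumptions that TMAB implicitly carries over from the general setup — in particular monotonicity and the convention that only complete (($\eos$-terminated) sequences are submitted. The delicate point is ensuring that assigning arbitrary means to complete sequences does not force inconsistent constraints on shared prefixes or on incomplete sequences; I expect this is resolvable because monotonicity only restricts the behavior of appending tokens \emph{after} an $\eos$, leaving the means on genuinely distinct complete sequences unconstrained. A secondary subtlety is making the counting of complete sequences precise enough that the constant inside $\Theta(n^L)$ does not degrade the exponent, but this is routine.
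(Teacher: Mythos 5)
Your proposal is correct and follows essentially the same route as the paper's own proof: both embed a stochastic $K$-armed bandit with $K = \Theta(n^L)$ by identifying each complete ($\eos$-terminated) sequence with an arm, check that the construction is consistent with monotonicity (Assumption~\ref{ass:monotone}), and invoke the classical $\Omega(\sqrt{KT})$ minimax lower bound, capped at $T$. Two cosmetic notes: the paper also remarks that the arm distributions must be structured so the standing SSLD assumption (Assumption~\ref{ass:ssld}) holds, a point your proposal passes over silently (though it is resolvable exactly as you anticipate, since monotonicity and SSLD leave means on distinct complete sequences essentially unconstrained), and the $\Omega(\sqrt{KT})$ bound is due to the standard information-theoretic MAB argument (Auer et al.), not \citet{abbasi2011improved}, which concerns linear bandits.
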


\paragraph{DDMC and GreedyETC}
The pessimistic exponential dependency on depth $L$  naturally motivates the use of DDMC Assumption~\ref{ass:ddmc}.
Unlike the TLB problem, however, there exists no global latent parameter $\theta$ that we could use throughout the decoding process, but they can genuinely arbitrary beyond DDMC (and monotonicity).
Thus we use a rather direct approach to explore each level sufficiently, and commit to the seemingly optimal token at each level then move on to the next level.

\begin{algorithm}
\caption{GreedyETC}\label{alg:greedy-etc}
    \KwIn{Token set $\cV$, exploration parameter $N$}
    Initialize $\by \gets \emptyset$\\
    \For{$k=1,2,\ldots, L$}{
        \If{$|\by| = L-1$}{
            $\by \gets \by:\eos$\\
            Break
        }
        \For{$t \in \cV$}{
            Submit $\by:t:\eos$ for $N$ times and observe rewards\\
            Set $\bar{r}(\by:t)$ be the averaged cumulative rewards\\
        }
        Select $\tau^* = \argmax_{t \in \cV} \bar{r}(\by:\tau)$\\
        Set $\by \gets \by:\tau^*$\\
        \If{$\tau^* = \eos$}{
            Break\\
        }
    }
    For the rest of the rounds, submit $\by$
\end{algorithm}

We prove that under (a slight variant of formally presented in) DDMC assumption, a natural combination of greedy decoding with exploration-then-commit (ETC) style of algorithm, as presented in Algorithm~\ref{alg:greedy-etc}, yields sublinear regret w.r.t. $T$ with linear dependency on $d$:
\begin{theorem}\label{thm:greedy-etc}
    Under Assumption~\ref{ass:monotone} and~\ref{ass:ddmc2}, Algorithm~\ref{alg:greedy-etc} achieves regret $O(nLT^{2/3}(\log T)^{1/3})$.
\end{theorem}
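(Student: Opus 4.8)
The plan is to decompose the total regret into an \emph{exploration} phase (the inner $k$-loop, where each candidate $\by:t:\eos$ is submitted $N$ times) and a \emph{commitment} phase (the final line, where the committed sequence $\by$ is replayed for all remaining rounds), and to pick the exploration budget $N = \Theta(T^{2/3}(\log T)^{1/3})$ so that the two contributions balance. The exploration phase pulls at most $nL$ distinct arms (one per token at each of the $\le L$ committed levels), each exactly $N$ times, so it occupies at most $nLN$ rounds; since utilities and noise are bounded, each such round costs $O(1)$ regret, giving an exploration regret of $O(nLN) = O(nLT^{2/3}(\log T)^{1/3})$.

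For the commitment phase I first build a high-probability \emph{good event}. Each explored arm $\by:t$ has an empirical mean $\bar r(\by:t)$ formed from $N$ i.i.d.\ samples of $u(\by:t:\eos)+\eta$, so Hoeffding gives $\abs{\bar r(\by:t) - u(\by:t:\eos)} \le \eps$ with $\eps = O(\sqrt{\sigma^2\log(1/\delta)/N})$. The decisive point is that I union-bound only over the $O(nL)$ arms the algorithm actually queries along its committed path, rather than over all $n^L$ sequences; taking $\delta = 1/(nLT)$ yields $\eps = O(\sqrt{\log(nLT)/N})$ and a good event of probability $\ge 1-1/T$ on which every estimate used by the greedy rule is $\eps$-accurate. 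This is exactly what circumvents the $\Omega(\min(\sqrt{n^LT},T))$ lower bound of Proposition~\ref{prop:cmab-lower}.

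The core is to bound the per-round commitment gap $\Opt_t - u(\by)$ on the good event, where $\by$ is the committed sequence and $\bo$ the round optimum (so $\Opt_t = u(\bo)$). After padding the shorter of $\by,\bo$ with $\eos$ tokens---which leaves utilities unchanged by Monotonicity (Assumption~\ref{ass:monotone})---I telescope through the hybrids $\mathbf{h}_k = \by^{(1:k)}:\bo^{(k+1:\abs{\bo})}$ interpolating from $\mathbf{h}_0 = \bo$ to $\mathbf{h}_{\abs{\bo}} = \by$:
\begin{align*}
    u(\bo) - u(\by) = \sum_{k} \parans{u(\mathbf{h}_{k-1}) - u(\mathbf{h}_k)} \le \sum_k \delta_k, \quad \delta_k := \abs{u(\by^{(1:k-1)}:\bo^{(k)}) - u(\by^{(1:k-1)}:\by^{(k)})},
\end{align*}
where each inequality applies DDMC (Assumption~\ref{ass:ddmc}) repeatedly to the common suffix $\bo^{(k+1:\abs{\bo})}$ shared by $\mathbf{h}_{k-1}$ and $\mathbf{h}_k$, collapsing the difference down to the level-$k$ \emph{prefix} gap $\delta_k$. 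The greedy rule guarantees that $\by^{(k)}$ empirically maximizes $u(\by^{(1:k-1)}:\tau:\eos)$ over $\tau$, so on the good event the associated \emph{eos-completion} gap is at most $2\eps$; invoking the DDMC variant of Assumption~\ref{ass:ddmc2} to transfer this control to the prefix gap gives $\delta_k \le 2\eps$, hence $\Opt_t - u(\by) \le 2L\eps$.

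Combining, the commitment regret is at most $T\cdot 2L\eps = O(LT\sqrt{\log(nLT)/N}) = O(LT^{2/3}(\log T)^{1/3})$ for the chosen $N$, the bad event contributes $O(1)$, and adding the exploration regret yields $\Reg = O(nLT^{2/3}(\log T)^{1/3})$. I expect the telescoping step to be the main obstacle, and within it the single delicate point is passing from the eos-completion gap, which the greedy rule controls, to the prefix gap $\delta_k$ that survives the DDMC collapse: plain DDMC only bounds the eos-completion gap \emph{by} the prefix gap (the wrong direction), so this is precisely where the variant Assumption~\ref{ass:ddmc2} must do its work. A secondary, more routine, subtlety is the length-equalization of $\by$ and $\bo$ and verifying that every hybrid $\mathbf{h}_k$ remains an admissible sequence under monotonicity.
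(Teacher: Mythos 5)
Your high-level architecture matches the paper's proof: an explore-then-commit regret decomposition with exploration cost $O(nLN)$, Hoeffding concentration with the union bound restricted to the $O(nL)$ arms actually queried along the committed path (this is indeed the paper's device for escaping the $\Omega(n^L)$ blow-up), a per-round commitment gap of order $L\eps$, and the same balancing $N = \Theta(T^{2/3}(\log T)^{1/3})$. The genuine gap is exactly at the step you yourself flagged as delicate: the claim $\delta_k \le 2\eps$ does not follow from Assumption~\ref{ass:ddmc2}. DDMC' states $\abs{u(\by:\tau:\eos) - u(\bz:\tau:\eos)} \le \abs{u(\by:\eos) - u(\bz:\eos)}$; it propagates a gap between \emph{eos-completions} of two prefixes to a gap between eos-completions one level deeper, after a common token is appended. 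It never converts an eos-completion gap into a raw prefix gap $\delta_k = \abs{u(\by^{(1:k-1)}:\bo^{(k)}) - u(\by^{(1:k-1)}:\by^{(k)})}$. No assumption in the paper relates $u(\bw)$ to $u(\bw:\eos)$ for an incomplete $\bw$ (monotonicity only constrains sequences already ending in $\eos$), so these quantities can be arbitrarily far apart: the $2\eps$ control that the greedy rule and the clean event give you on $u(\by^{(1:k-1)}:\bo^{(k)}:\eos) - u(\by^{(1:k-1)}:\by^{(k)}:\eos)$ says nothing about $\delta_k$. A secondary mismatch: your suffix collapse invokes plain DDMC (Assumption~\ref{ass:ddmc}), whereas the theorem is stated under DDMC' only, which applies solely to eos-terminated comparisons.

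The repair is the paper's route: never leave the space of eos-completed quantities. After length equalization your hybrids $\mathbf{h}_k = \by^{(1:k)}:\bo^{(k+1:\abs{\bo})}$ do end in $\eos$, so DDMC' does let you collapse the common suffix --- but the collapse terminates at the eos-completion gap $\abs{u(\by^{(1:k-1)}:\bo^{(k)}:\eos) - u(\by^{(1:k-1)}:\by^{(k)}:\eos)}$, not at $\delta_k$, and this absolute value is only one-sidedly controlled by greedy: the algorithm's token may beat $\bo^{(k)}$ by a large margin at the eos level, making the absolute value large even though the error favors the algorithm. The paper instead runs an induction on the \emph{signed} eos-gap $u(\bo^{(1:i)}:\eos) - u(\ba^{(1:i)}:\eos)$: writing the fictitious extension $\bbf = \ba^{(1:i)}:\bo^{(i+1)}$, it splits off the term $u(\bbf:\eos) - u(\ba^{(1:i+1)}:\eos) \le 2d$ from the greedy choice under the clean event, checks whether $u(\bo^{(1:i+1)}:\eos) - u(\bbf:\eos) \le 0$ (in which case the induction closes immediately), and only otherwise applies DDMC' to bound it by the level-$i$ eos-gap, accumulating $2d$ per level to reach $2Ld$ at level $L$, where completeness makes the eos-gap coincide with the true utility gap. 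Note also that the paper needs SSLD (Assumption~\ref{ass:ssld}, a standing assumption) to absorb the raw prefix gap at the first deviation level --- a term for which your scheme has no counterpart, which is a further symptom that your transfer step assumes more than the hypotheses provide.
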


One subtle part is that, if one naively construct the \emph{clean} event that the rewards for each sequence are concentrated to its expectation and take union bound over all of them, this essentially gives the regret of $\Omega(n^L)$, which is undesirable. 
Instead, we only take the union bound along the path that our algorithm is selected, which enables us to recover the linear dependency on $L$.
Then, we obtain a recursive inequality relating the regret at level $k$ with that at level $k-1$ by considering fictitious extension as defined in the proof of Theorem~\ref{thm:EOFUL}, and conclude that the regret remains small with high probability.

\section{Applications}\label{sec:application}
\subsection{LLM alignment}\label{sec:alignment}

One interesting application of our results is the LLM alignment problem.
In this problem, we want to generate aligned outputs given access to a frozen LLM that cannot be further fine-tuned.
Consider a LLM parameterized by a token distribution $p: \cV^* \times \cV^* \to [0,1]$ that maps any token sequence given a user's query to a probability that the LLM generates this sequence.
The user's utility given token sequence $\by$ is misaligned from the LLM such that $u(x_t,\by) = \gamma v(p(x_t, \by)) + (1-\gamma) f(x_t, \by)$, where $v$ is a monotone function that maps probability to reward and $f$ is the latent function to be learned that marginalizes the misaligned part of the LLM.

Such convex combination between multiple rewards typically appear in the LLM alignment literature~\cite{rame2024rewarded,shahriar2024inference,jang2023personalized,shi2024decoding}, when one wants to align with multiple human preferences by well combining multiple models or parameters.
Our utility function as the convex combination over $v$ and $f$ can also be interpreted in a similar spirit.
One typical example for the function $v$ would be the log probability or perplexity obtained from the frozen LLM, or more explicit user utility such as classification accuracy or factuality~\cite{lin2022truthfulqa}.
We assume the DM knows the probability to utility mapping function $v$.

If we assume $f$ is linearly realizable, \ie there exists an embedding function $e: \cV^* \times \cV^* \to \R^d$ such that $f(\cdot ) = \inner{\theta, e(x_t, \by_t)}$, we can reduce the problem of learning misaligned function $f$ to the TLB problem.
Formally, we construct $\theta' \in \R^{d+1}$ as a vector that concatenates $\theta$ and $1$ in $d+1$-th dimension after multiplying each by $1-\gamma$ and $\gamma$ respectively, \ie $\theta' = [(1-\gamma)\cdot \theta: \gamma \cdot 1]$.
Then we can rewrite the utility as
\begin{align*}
    u(x_t,\by) &= 
    \inner{\theta, (1-\gamma)e(x_t,\by_t)} + \gamma v(p(x_t,\by))
    \\
    &= \inner{\theta', \left[e(x_t,\by):v(p(x_t,\by))\right]}.
\end{align*}
This is exactly the feedback structure in our TLB problem, where the DM observes $e(x_t,\by)$ and $v(p(x_t,\by))$ to construct the concatenated vector and tries to learn $\theta'$.
Accordingly, EOFUL enjoys the same regret bound in Theorem~\ref{thm:EOFUL}.\footnote{Note that our formulation differs from standard KL-divergence based problem~\cite{yang2024asymptotics}. One might integrate the KL divergence in our problem too by redefining the function over the distribution rather than a fixed sequence, however, it's not clear to formulate in the language of multi-armed bandit problem. This remains an interesting future direction.}

\begin{figure*}
     \centering
            \includegraphics[width=0.25\textwidth]{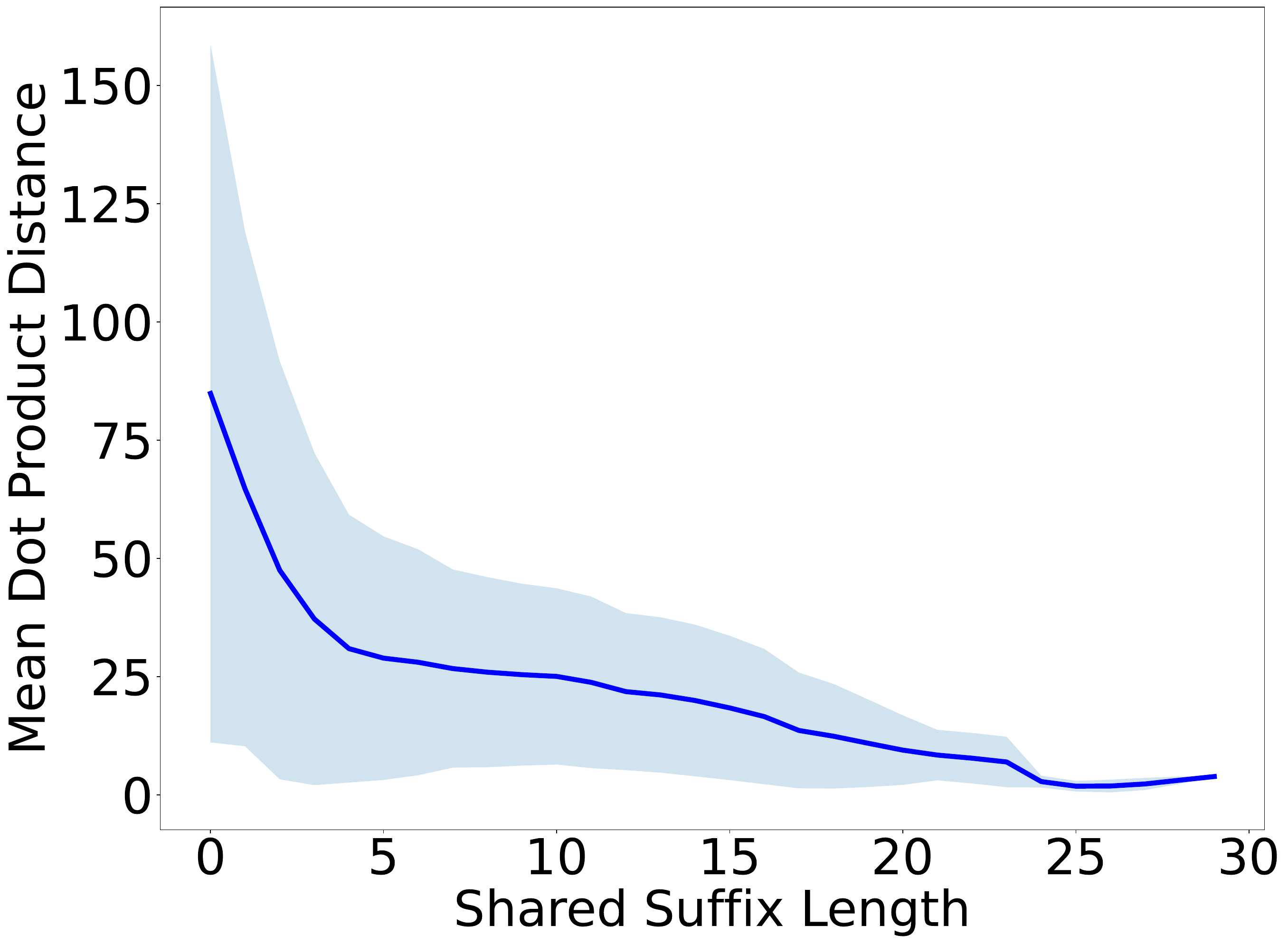}   
            \includegraphics[width=0.25\textwidth]{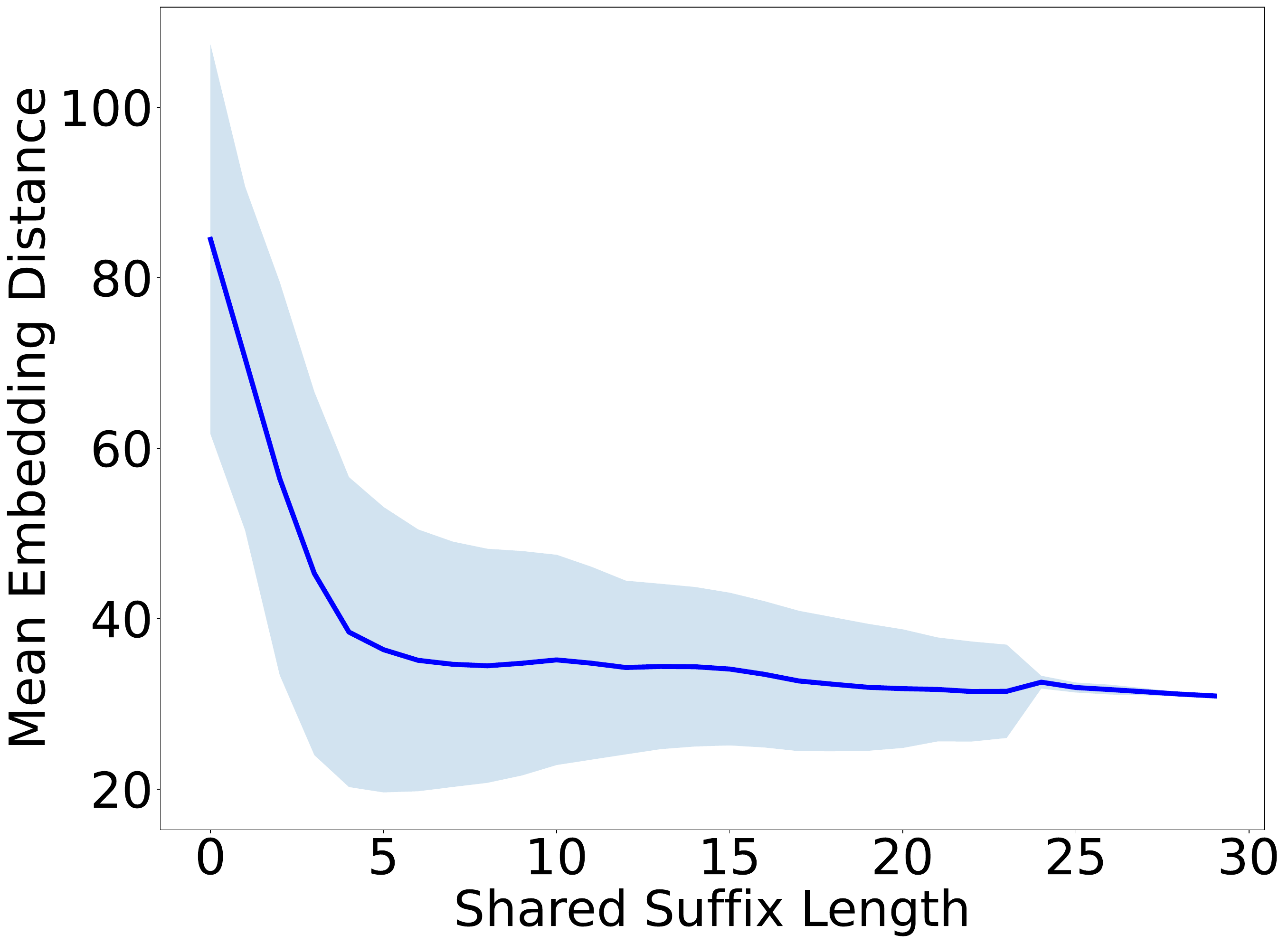}
            \includegraphics[width=0.245\textwidth]{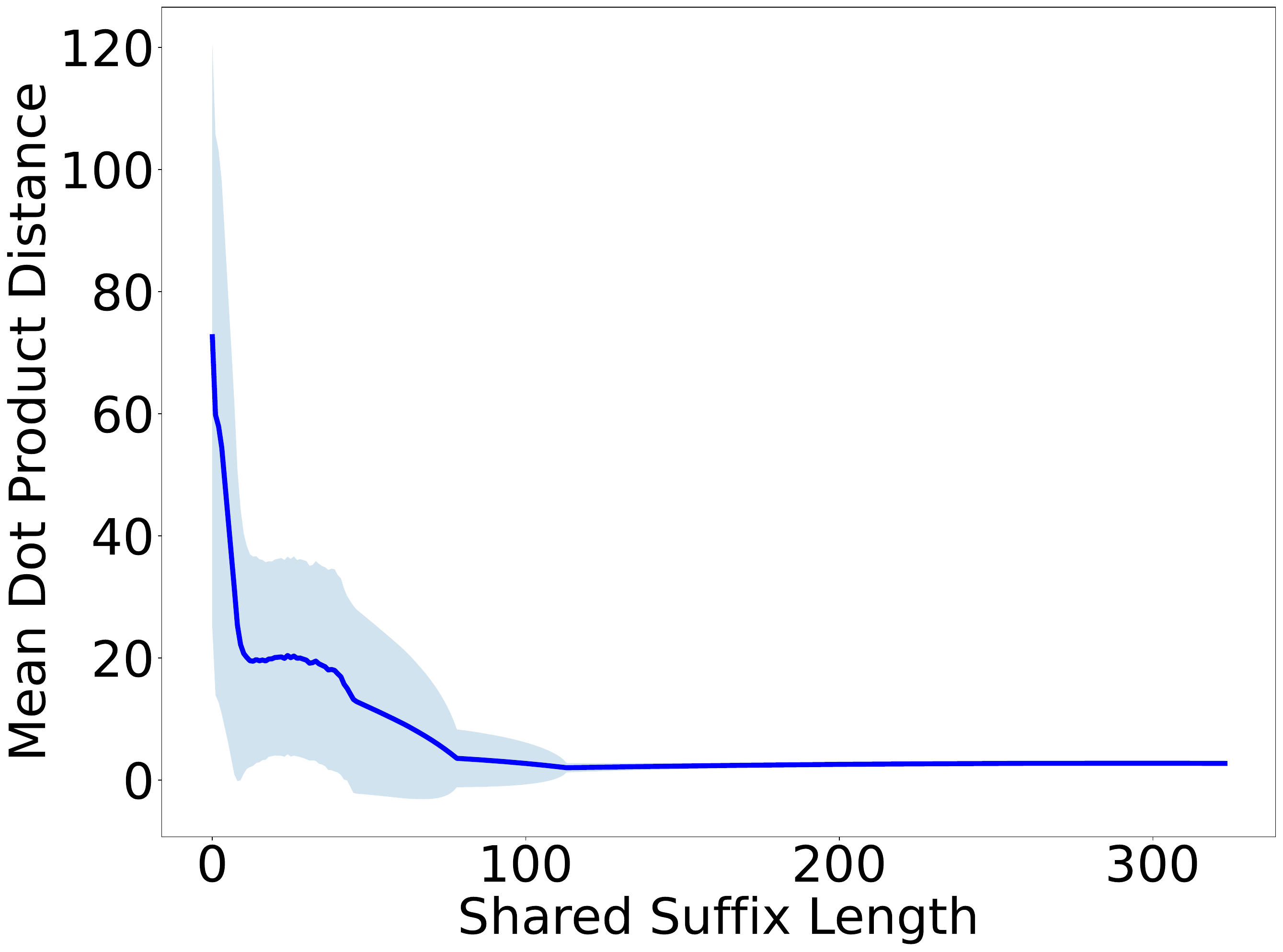}
            \includegraphics[width=0.238\textwidth]{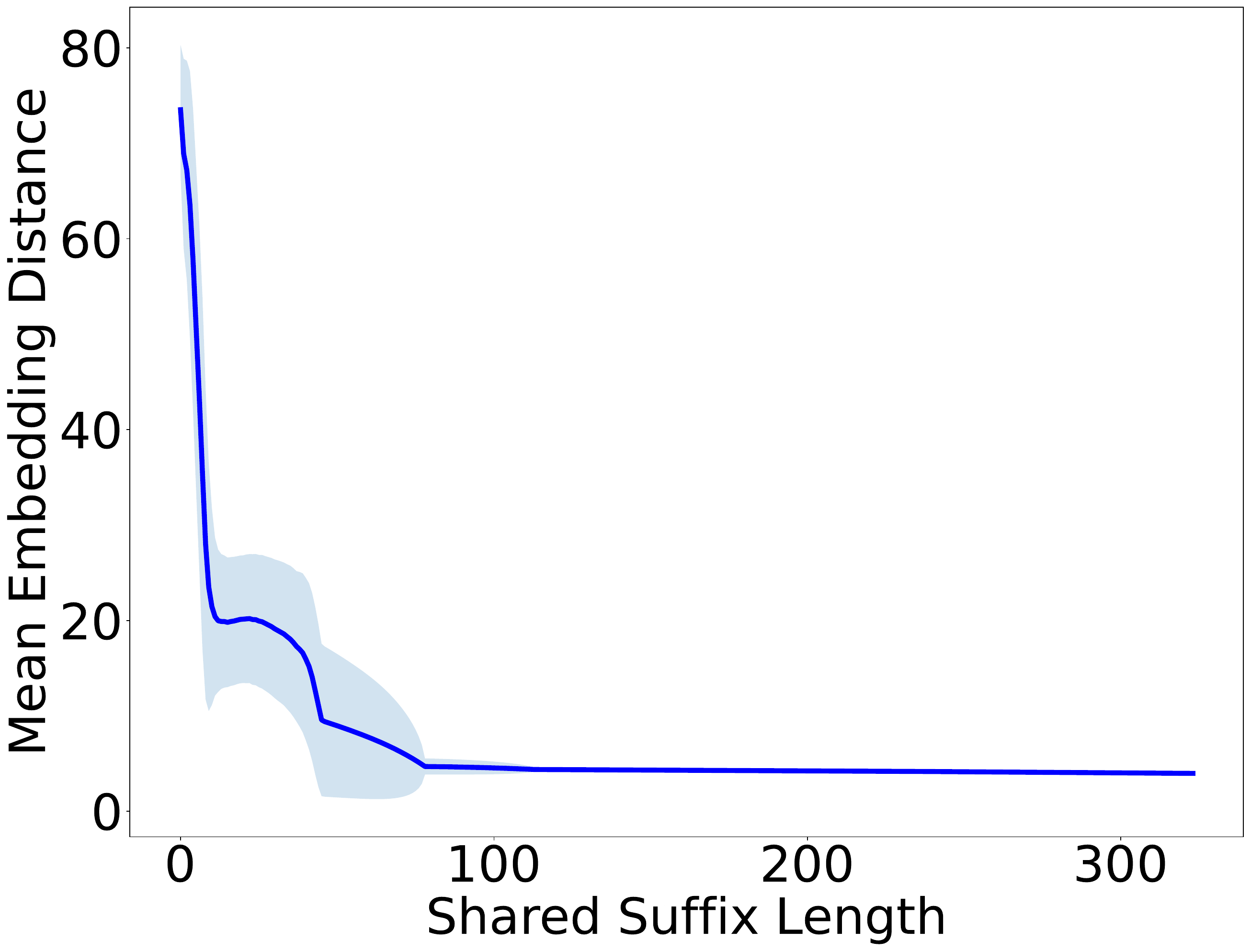}
      \caption{The first two show distance gap under TruthfulQA and the next two is under HH-RLHF, with distance function $d_1$ and $d_2$.}
      \label{fig:ddmc}
\end{figure*}

\subsection{LLM decoding}\label{sec:implication}
Our results and analysis therein have a few interesting implications in (i) the sequence function maximization problem~\cite{li2017inhomogeneous,tschiatschek2017selecting,mitrovic2018submodularity,alaei2021maximizing} and (ii) the problem of designing efficient decoding algorithms for text generation in language model~\cite{freitag2017beam,kulikov2018importance,holtzman2019curious,chen2024decoding}, if we consider a special case when the function $u$ is known to the DM and consider a static version with a single round.

First, this is exactly a sequence function maximization problem to maximize $u$ given query access to a value oracle that outputs $u(x_t,\by)$ given a sequence $\by$.
One common objective here is to obtain an approximate algorithm with polynomial query complexity.
If $u$ satisfies a notion of monotonicity and sequence-submodularity~\cite{alaei2021maximizing}, it is known that the greedy algorithm has $1-1/e$ approximation.\footnote{It also admits the APX-hardness with $1-1/e$ by the hardness of submodular set-function maximization, unless P=NP.} 
On the other hand, our result implies that greedy algorithm is almost optimal under DDMC assumption:\footnote{We remark that our setup is slightly different from standard sequence maximization problems as we allow the selection of the same element over different position, and the special $\eos$ token governs the end of the sequence by naturally encoding it in the monotonicity of the utility function.}
\begin{theorem}\label{thm:greedy-decoding}
    For any monotone sequence function, greedy decoding is almost optimal under DDMC in a sense that it is $(1-\eps)$-approximation where $\eps$ is the SLD parameter.
\end{theorem}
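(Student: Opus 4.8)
The plan is to cast greedy decoding as an approximate dynamic program and to control the per-level suboptimality through DDMC. First I would perform \emph{length equalization}: pad both the greedy output $\by$ and the hindsight optimum $\bo$ with $\eos$ tokens up to length $L$, which by Assumption~\ref{ass:monotone} leaves their utilities unchanged and lets me compare them level by level. I then introduce the \emph{value-to-go} $V(\bz) = \max_{\mathbf{w}} u(\bz:\mathbf{w})$, the best utility attainable from a prefix $\bz$, and write $G(\bz)$ for the utility greedy attains when started from $\bz$. By construction $V(\emptyset) = u(\bo)$ and $G(\emptyset) = u(\by)$, so the target $u(\by) \ge (1-\eps)u(\bo)$ reduces to bounding $V(\emptyset) - G(\emptyset)$.

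The key lemma is a DDMC-based one-step guarantee. Fix a prefix $\bz$, let $\tau^* = \argmax_{\tau} u(\bz:\tau)$ be greedy's choice and $\tau^\circ = \argmax_\tau V(\bz:\tau)$ be the optimal next token, and set $\delta(\bz) = u(\bz:\tau^*) - u(\bz:\tau^\circ) \ge 0$. Writing $\mathbf{w}^\circ$ for the optimal suffix of $\bz:\tau^\circ$ and applying DDMC token by token to the equal-length sequences $\bz:\tau^*$ and $\bz:\tau^\circ$ extended by the common suffix $\mathbf{w}^\circ$, I get $\abs{u(\bz:\tau^*:\mathbf{w}^\circ) - u(\bz:\tau^\circ:\mathbf{w}^\circ)} \le \delta(\bz)$, hence $V(\bz:\tau^*) \ge V(\bz:\tau^\circ) - \delta(\bz)$. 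In words, greedy's myopic token costs at most its immediate utility gap in continuation value. Plugging this into the Bellman-style identities $V(\bz) = V(\bz:\tau^\circ)$ and $G(\bz) = G(\bz:\tau^*)$ and telescoping along the greedy path $\emptyset, \by^{(1:1)}, \dots, \by$, I obtain $V(\emptyset) - G(\emptyset) \le \sum_{k=0}^{L-1} \delta(\by^{(1:k)})$. It remains to bound each $\delta(\by^{(1:k)})$: the greedy sequence deviates from the best completion of $\by^{(1:k)}$ first at level $k+1$, so Assumption~\ref{ass:ssld} applied to that (local) optimum gives $\delta(\by^{(1:k)}) = \abs{u(\by^{(1:k+1)}) - u(\by^{(1:k)}:\tau^\circ)} \le \eps$; for $\eps = 0$ this already forces $V = G$ and hence exact optimality of greedy.

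I expect the main obstacle to be preventing these per-level gaps from accumulating into an $L\eps$ loss rather than the claimed single $\eps$. Two points need care. First, the utility of an incomplete prefix is not monotone in length (Assumption~\ref{ass:monotone} only governs $\eos$), so one cannot compare raw prefix utilities $u(\by^{(1:k)})$ and $u(\bo^{(1:k)})$ directly and must argue entirely through the value-to-go $V$ to avoid an uncontrolled "overshoot" where a greedy prefix beats the optimal prefix by a large amount and then crashes. Second, to collapse $\sum_k \delta(\by^{(1:k)})$ down to a single $\eps$ I would exploit DDMC's contraction: once greedy first departs from $\bo$ at some level $i$, appending $\bo$'s optimal suffix to $\by^{(1:i)}$ shows $V(\by^{(1:i)}) \ge u(\bo) - \eps$, and I would then induct on the remaining depth, tracking that DDMC shrinks the residual gap at every subsequent level so that later deviations are \emph{dominated by}, rather than \emph{added to}, the first. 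Making this domination precise --- verifying that the contraction genuinely absorbs the downstream $\delta$'s --- is the delicate step on which the sharp $(1-\eps)$ factor hinges, and it is exactly where the DDMC assumption (as opposed to submodularity) does the essential work.
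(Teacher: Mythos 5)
There is a genuine gap, and it sits exactly where you flagged it. Your value-to-go decomposition is sound as far as it goes --- the one-step lemma $V(\bz:\tau^*) \ge V(\bz:\tau^\circ) - \delta(\bz)$ via token-by-token DDMC is correct, as is the telescoping $u(\bo)-u(\by) \le \sum_{k}\delta(\by^{(1:k)})$ --- but it proves the wrong bound and stops before the theorem's content. Two problems. First, bounding each $\delta(\by^{(1:k)})$ by $\eps$ invokes SLD at the best completion of each greedy prefix, whereas Assumption~\ref{ass:ssld} is stated only for deviations from \emph{the} optimal sequence $\bo$; it therefore covers only the first deviation level $i$, and for $k \ge i$ the greedy prefix is no longer a prefix of $\bo$, so the assumption gives nothing. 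Second, even granting such a ``local SLD'' everywhere, you get $L\eps$, and your proposed collapse to a single $\eps$ is asserted rather than proved: DDMC contracts the gap between two \emph{fixed} continuations as common tokens are appended, but each greedy step opens a \emph{fresh} comparison between $\tau^*$ and the new conditionally optimal token, and nothing in your argument relates this new $\delta$ to the earlier, contracted gap. Your own intermediate step shows the difficulty: $V(\by^{(1:i)}) \ge u(\bo)-\eps$ is fine (global SLD plus DDMC along $\bo$'s suffix), but what remains --- that greedy from $\by^{(1:i)}$ loses nothing further against $V(\by^{(1:i)})$ --- is the whole theorem again one level down, so the induction as set up re-pays $\eps$ at each level and closes at $L\eps$, not $\eps$.

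The paper's proof avoids accumulation by choosing a different potential: the prefix regret $\Reg^{(l)} = u(\bo^{(1:l)}) - u(\bg^{(1:l)})$ after length equalization, not the value-to-go. The point is that greedy \emph{exactly} maximizes the one-step prefix utility, so against the fictitious extension $\bbf^{(1:l)} = \bg^{(1:l-1)}:\bo^{(l)}$ one has $u(\bg^{(1:l)}) \ge u(\bbf^{(1:l)})$ with no error term, while DDMC (common token $\bo^{(l)}$) gives $u(\bo^{(1:l)}) - u(\bbf^{(1:l)}) \le \abs{\Reg^{(l-1)}}$; combining, $\Reg^{(l)} \le \abs{\Reg^{(l-1)}}$ with \emph{zero} additive per-step loss --- unlike the EOFUL analysis of Theorem~\ref{thm:EOFUL}, where the estimation-error term $W_t^{(k)}$ appears, here $u$ is known and that term vanishes. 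Loss thus enters exactly once, at the first deviation level $i$, where SLD is correctly applicable ($\bg$ deviates from $\bo$ first at $i$), giving $\abs{\Reg^{(i)}} \le \eps$ and hence $u(\bo)-u(\bg) \le \eps$ at level $L$, where both padded sequences are complete. Your overshoot worry about comparing raw prefix utilities is not fatal to this route --- it is precisely the absolute value in the recursion, and to be fair the paper only establishes the one-sided inequality $\Reg^{(l)} \le \abs{\Reg^{(l-1)}}$, implicitly taking level regrets nonnegative when telescoping, as in the EOFUL proof --- but avoiding prefixes is what created your accumulation problem in the first place: measured in value-to-go, greedy is not stepwise optimal and each step costs a fresh $\delta$; measured in prefix utility, greedy is stepwise optimal and DDMC propagates the single SLD error without amplification.
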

From the LLM perspective, this result has one interesting corollary.
Given a fixed query $x$, consider a LLM parameterized by $p: \cV^* \times \cV^* \to [0,1]$.
Then, Theorem~\ref{thm:greedy-decoding} implies that if $p$ is monotone DDMC function, the greedy decoding can exactly output the token sequence that maximizes the logit probability.
This supports the unreasonable effectiveness of the greedy decoding, despite its simplicity, justifying why it works well and widely adopted in the literature~\cite{meister2020if,shi2024thorough,song2024good}, \eg in Google's Gemini report~\cite{team2023gemini}, even though it is only known as an approximate maximum-a-posterior decoding algorithm~\cite{welleck2024decoding}.

On the other hand, greedy decoding is often dominated by sampling methods such as nucleus sampling~\cite{Holtzman2020The} or self-consistency~\cite{wang2022self} for various tasks, \eg ~\cite{song2024good} find that vanilla multinomial sampling can be better than greedy for open-ended creative generation.
We conjecture that such phenomenon largely depends on how the sequence function generated by LLM, \ie the logit probabilities, would look like and thereby the type of the tasks.
Investigating other regimes that another decoding algorithm provably outperforms greedy remains an intriguing open question.

\begin{figure*}
     \centering
            \includegraphics[width=0.48\textwidth]{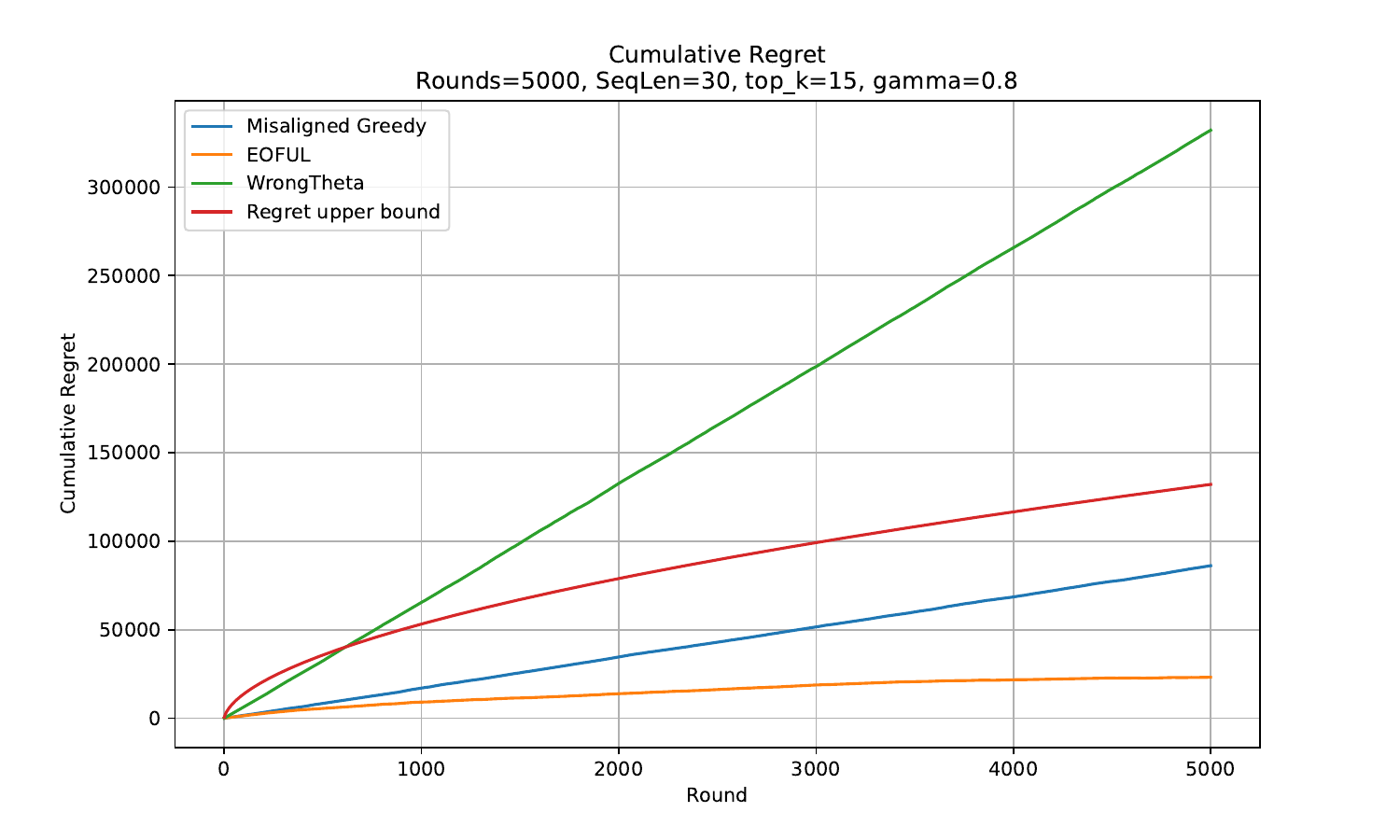}
            \includegraphics[width=0.48\textwidth]{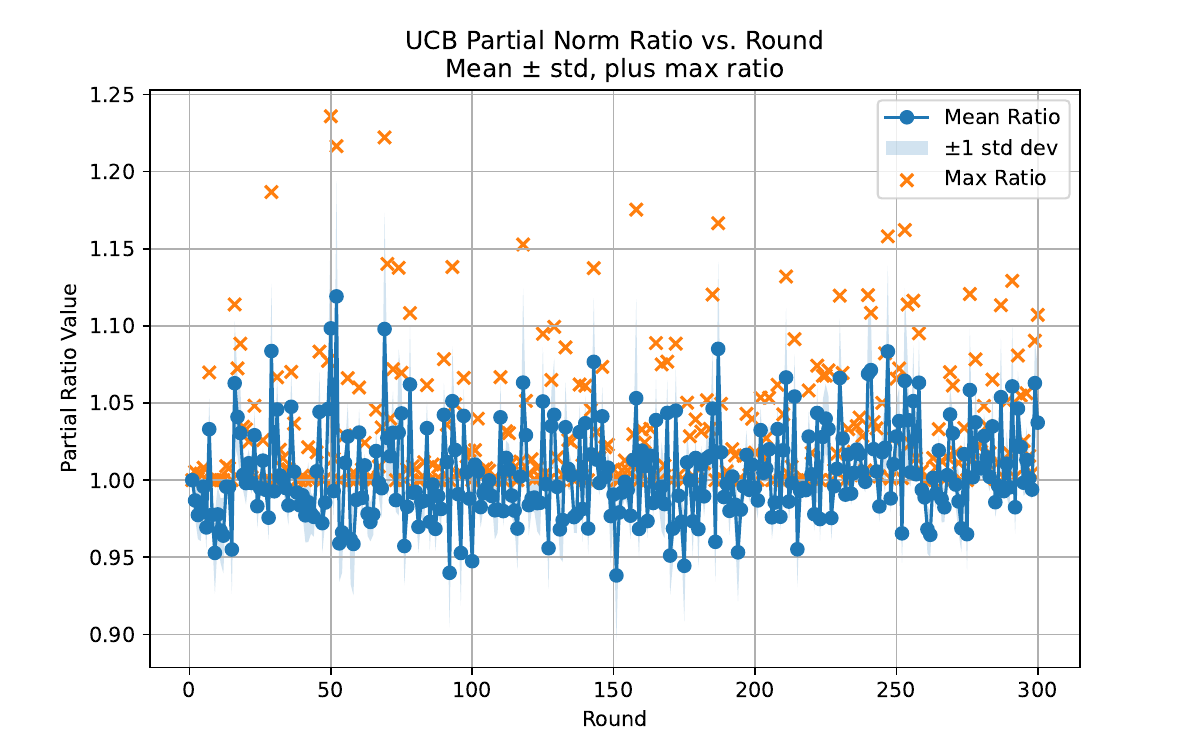}   
      \caption{The first figure compares the regret of algorithms.
      The second figure represents the ratio $\rho^{l}_t = \norm{\by^{(1:l)}}_{\Sigma_t}/\norm{\by}_{\Sigma_t}$ over the rounds. Max Ratio and Mean Ratio denote the maximum and average ratio, respectively, for $l \in [L]$ at each round.}
      \label{fig:regret_ratio}
\end{figure*}

\section{Experiments}\label{sec:exp}
This section shows our experimental results. 
All our experiments, including the embedding function $e$ in subsection \ref{subsec:ddmc} and token generation in subsection \ref{subsec:eoful}, are from   Llama3-8B-Instruct model~\cite{llama3modelcard}.
In particular, we adopt mean pooling to obtain $d$-dimensional embedding vectors from the model's last-layer hidden states.

\subsection{Validating DDMC}\label{subsec:ddmc}
The first is on validation of DDMC assumption.
Further experimental results are presented in Appendix~\ref{apd:further-exp}.

\paragraph{Setup}
For the token sequences, to validate our DDMC in real-world datasets, we use TruthfulQA dataset~\cite{lin2022truthfulqa} and HH-RLHF dataset~\cite{bai2022training},\footnote{Both datasets are standard for LLM alignment, where TruthfulQA validates truthfulness and hallucination and HH-RLHF evaluates helpfulness and harmlessness. In HH-RLHF, data are organized in (prompt:response) format. For simplicity, we reuse ``query'' to refer to ``prompt'' and ``answer'' to refer to ``response''. } by concatenating each query and answer pair.

For each item in the dataset, we group every (query:answer) pair by number of common tokens in its suffix.
For instance, image the following two token sequences:
\begin{align*}
    \bx = (a,b,c,d,e,f), \; \by = (a,x,y,d,e,f).
\end{align*}
Then, we can group each subsequence as the following:
\begin{itemize}
    \item $0$ common suffix: $(\bx^{(1:2)},\by^{(1:2)}),(\bx^{(1:3)},\by^{(1:3)})$ 
    \item $1$ common suffix: $(\bx^{(1:1)},\by^{(1:1)}),(\bx^{(1:4)},\by^{(1:4)})$
    \item $2$ common suffix: $(\bx^{(1:5)},\by^{(1:5)})$
    \item $3$ common suffix: $(\bx^{(1:6)},\by^{(1:6)})$
\end{itemize}
We evaluate each distance $d(\bx^{(1:k)},\by^{(1:k)})$ in each group and obtain the average and variance.


For the distance functions, we evaluate two candidates:
\begin{align*}
    d_1(x,y) &= \inner{\theta, \bx - \by} \\
    d_2(x,y) &= \norm{\bx - \by}_2,
\end{align*}
where $\theta = \vec{1}_{4096}$, \ie 4096-dimensional one vector.
The first distance function $d_1$ is consistent with our DDMC assumption, and it in fact can be viewed as a $\ell_1$ distance between two vectors.
Thus, the second distance function $d_2$ can be viewed as a generalization of $d_1$ into $\ell_2$ distance.

\paragraph{Result}
The first two figures in Figure~\ref{fig:ddmc} refers to the TruthfulQA, and the next two denotes the HH-RLHF.
Each experiment is presented for distance function $d_1$ and then $d_2$.
Overall, we observe a tendency to have decreasing distance gap as the number of common tokens in suffices increases.
This holds for both the distance function $d_1$ and $d_2$.
Interestingly, the decreasing curvature tends to be stronger when the number of common tokens are rather small. 
Further, the decreasing gap seems to be more drastic under HH-RLHF, suggesting that different tasks might possess different structures in their sequence function, \ie logit probability structure.
Overall, despite verified in an average sense, this justifies the DDMC assumption, which we believe could be largely useful for following works.


\subsection{Performance of EOFUL}\label{subsec:eoful}
We numerically validate the performance of EOFUL using synthetic data under the LLM alignment scenario presented in Section~\ref{sec:alignment}.
In particular, we set the length of each sentence to be $L = 30$, and truncate top-$15$ tokens for every algorithm for efficiency. Further, we set $\gamma = 0.8$, $\theta = (0.5,0.5,\ldots, 0.5)$.
Details in the query generation can be found in Appendix~\ref{apd:further-exp}.
We compare three reasonable benchmarks: (i) theoretical regret upper bound (\emph{under-scaled} by 0.1 to make the plot more visible), (ii) WrongTheta that uses wrongly estimated $\theta' = (-0.5,-0.5,\ldots, -0.5)$ and choose a token that maximizes the resulting convex combination, and (iii) Misaligned greedy that simply performs greedy decoding with respect to the LLM’s probability.
As observed in the first part of Figure~\ref{fig:regret_ratio}, we observe that EOFUL effectively achieves sublinear regret. 
Given that the regret upper bound is under-scaled,the actual performance may be much better than that of the theoretical guarantees.

In practice, one could blend some standard alignment techniques with EOFUL. For instance, one can run standard alignment algorithms in the earlier rounds as EOFUL could suffer significant errors in the beginning.
In the mean time one can operate EOFUL in backgrounds to learn the confidence region, and then adopt a combination of them in later rounds.
This can be more practical particularly if the effect of coldstart in EOFUL, \eg apparently misleading outcomes, could largely degrade the user experience.

\paragraph{Validating Assumption~\ref{ass:tech}}
Further, in the same experimental setup, we validate Assumption~\ref{ass:tech} for our EOFUL.
Notably, we observe that the ratio is universally upper bounded above by $1.25$, even after taking maximum over every token position.
Recalling that such upper bound scales the regret upper bound in a linear manner, this would only blow up the regret to be $1.25$ times that of the standard linear contextual bandit, let alone the linear dependency on $L$.

\section{Conclusion}
We introduce tokenized variants of linear and multi-armed bandit problems, where a decision maker needs to maximize cumulative rewards which can be represented as a randomly perturbed sequence function by selecting each token in a sequential manner.
We introduce a novel but natural assumption called diminishing distance with more commons (DDMC), and show that both problems admit efficient algorithms with regret linear on the depth of the maximal sequence and sublinear on the time horizon.
We provide several applications and implications on LLM alignment problem as well as decoding problem, and validate our assumptions and performance of our algorithms through synthetic and real-world datasets.

\section*{Acknowledgement}
Part of the work is done while Suho Shin is visiting Computer Science at the University of Chicago.
This work is partially supported by DARPA QuICC, ONR MURI 2024 award on Algorithms, Learning, and Game Theory, Army-Research Laboratory (ARL) grant W911NF2410052, NSF AF:Small grants 2218678, 2114269, 2347322.
Haifeng Xu is supported by the AI2050 program at Schmidt Sciences (Grant G-24-66104), Army Research Office Award W911NF23-1-0030 and NSF Award CCF-2303372.

\section*{Impact Statement}
This paper aims to advance the understanding of the multi-armed bandit problem, LLM alignment and decoding.
There could be several potential societal consequences of our work, none of which we feel must be specifically highlighted here.

\bibliography{ref}
\bibliographystyle{icml/icml2025}

\newpage

\appendix
\onecolumn

\section{Related work}\label{sec:related}
\paragraph{Multi-armed bandit and variants}
Dating back to~\cite{lai1985asymptotically}, multi-armed bandits and its linear bandit variant have been central to modern recommender system~\cite{auer2002finite,li2010contextual,abbasi2011improved}.
There exists a few works investigating its variant to tree search model~\cite{coquelin2007bandit}, analyzing efficient algorithms under certain structural assumption, \eg smoothness of tree.\footnote{More detailed discussion can be found in Appendix~\ref{sec:reduc-bts}.}

\paragraph{Decoding-time alignment}
There are a line of works that use heuristic methods for decoding-time alignment using external modules that guide the decoding process.
\cite{josifoski2022language} suggested aligning the LLM in the decoding time to align the likelihood and utility based on some external information.
\cite{mudgal2023controlled} present a token level Markov decision process (MDP) framework called controlled decoding using a separate prefix scorer rule that is trained to learn a value function from rewards.
\cite{huang2024deal} present a decoding scheme as a heuristic-guided search process to align various preferences.
\cite{chakraborty2024transfer} propose a transfer decoding to transfer-learn the unknown $Q$ function in the token level MDP, and analyze the upper bound on the gap between the optimal policy and their algorithm.
Meanwhile, our decoding-time alignment problem is solely framed as a bandit problem not MDP.

\paragraph{Foundation of alignment}
Recent works have provided some theoretical foundations on LLM alignment.
\cite{yang2024asymptotics} derived a closed-form characterization of the optimal KL-constrained reinforcement learning (RL) alignment problem, and conclude that the famous best-of-$N$ policy and KL-constrained RL is asymptotically equivalent.
\cite{beirami2024theoretical} analyze the upper bound on the KL divergence of best-of-$N$ policy.
We remark that our LLM alignment problem is formulated in a largely different manner (see Section~\ref{sec:alignment}), thus less relevant to these literature.

\paragraph{Foundation of decoding}
\cite{basu2020mirostat} analyze the perplexity of several sampling methods assuming that the token probabilities follow a Zipf distribution.
\cite{finlayson2023closing} provided a theoretical justification on using truncation to avoid the excessive probability assignment on unreliable tail, empirically observed by~\cite{holtzman2019curious}.
Recently, \citet{chen2024decoding} propose a decoding problem as a two-player game, and provide near-optimal strategies that encompass greedy decoding and some of its variants.
Our LLM decoding problem is formulated as a sequence function maximization problem to maximize the posterior probability given a user query, which is largely different from above.

\section{$k$-DDMC and Lookahead Decoding}\label{sec:k-ddmc}
Interestingly, we can relax the DDMC assumption and obtain analogous results by replacing the greedy decoding part with the \emph{lookahead} decoding.
In $k$-lookahead decoding, it generates all possible combination of the following $k$ tokens, and select the token sequence that maximizes the utility (probability).
This allows us to recover the fictitious extension argument used in both the analysis of TLB and TMAB.

First, $k$-DDMC is defined as follows:
\begin{assumption}[$k$-DDMC]\label{ass:k-ddmc}
    A sequence function $u(\cdot)$ is $k$-DDMC if
    \begin{align*}
        \abs{u(x_t, \by:\ba) - u(x_t, \bz:\ba)} \le  \abs{u(x_t, \by) - u(x_t, \bz)},
    \end{align*}
    for any two finite sequences $\bx, \by$ with the same length $|\bx| = |\by|$, any sequence $\ba \in \cV^*$ with $|\ba| = k$, and any query $x_t$ such that LHS is nonnegative.
\end{assumption}

\begin{algorithm}
\caption{$k$-Lookahead-EOFUL}\label{alg:k-eoful}
    \KwIn{Token set $\cV$, linear contextual bandit algorithm $\Alg$}
    Initialize $\by \gets \emptyset$, $\htheta_1 \gets \vec{0}$, $C_1 = \{\htheta\}$, $\by = \emptyset$\\
    \For{$t = 1,2,\ldots, T$}{
        User arrives and submits query $x_t$\\
        \For{$k=1,2,\ldots, L-1$}{
            Compute $\bz^* = \argmax_{\bz \in \cV^*, |\bz| = k, \theta \in C_t} \inner{\theta, e(x_t, \by:\bz)}$\\
            Set $\by \gets \by:\bz^*$\\
            \lIf{$(\bz^*)^{(-1)} = \eos$}{Break}
        }
        Submit $\by$ and observe reward $r_t$\\
        Compute $\htheta_{t+1}$ by~\eqref{eq:update-theta} and $C_{t+1}$ by~\eqref{eq:update-c}\\
    }
\end{algorithm}

\begin{algorithm}
\caption{$k$-LookaheadETC}\label{alg:lookahead-etc}
    \KwIn{Token set $\cV$, exploration parameter $N$}
    Initialize $\by \gets \emptyset$\\
    \For{$k=1,2,\ldots, L$}{
        \If{$|\by| = L-1$}{
            $\by \gets \by:\eos$\\
            Break
        }
        \For{$\bz \in \cV^*$ with $|\bz| = k$}{
            Submit $\by:t:\eos$ for $N$ times and observe rewards\\
            Set $\bar{r}(\by:\bz)$ be the averaged cumulative rewards\\
        }
        Select $\bz^* = \argmax_{\bz \in \cV^*, |\bz| = k} \bar{r}(\by:\bz)$\\
        Set $\by \gets \by:\bz^*$\\
        \If{$(\bz^*)^{(-1)} = \eos$}{
            Break\\
        }
    }
    For the rest of the rounds, submit $\by$
\end{algorithm}

Correspondingly, we can modify our algorithms as presented in Algorithm~\ref{alg:k-eoful} and Algorithm~\ref{alg:lookahead-etc}.

Now we given an outline of how one could extend our analysis for EOFUL and GreedyETC.
Notice that the key part for the regret analysis is to compare an algorithm's choice and optimal algorithm's choice $\ba_t^{(1:i)}$ and $\bo_t^{(1:i)}$, respectively, up to $i$-th token at round $t$ by considering a fictitious extension $\bbf = \ba_t^{(1:i)}:\bo_t^{(i+1)}$.
With $k$-DDMC, we extend this analysis by considering alternative fictitious extension such that $\bbf = \ba_t^{(1:i)}:\bo_t^{(i+1:i+k)}$.
With such extension, until level $j \le k$, our algorithm is guaranteed to decode the (empirically) optimal token sequence given the estimates.

Then, for level $j > k$ such that $j = i+k$, we have:
\begin{align*}
    \bo_t^{(1:j)} - \ba_t^{(1:j)}= \bo_t^{(1:i+k)} - \ba_t^{(1:i+k)}
    &= \bo_t^{(1:i+k)} - \bbf + \bbf - \ba_t^{(1:i+k)}
    \\
    &\le \abs{\bo_t^{(1:i)} - \ba_t^{(1:i)}} + \bbf - \ba_t^{(1:i+k)} \tag{By $k$-DDMC}
    \\
    &\le \abs{\bo_t^{(1:i)} - \ba_t^{(1:i)}},\tag{By optimistic choice of algorithms}
\end{align*}
thereby connecting $j$-level regret to $j-k$-level regret.
With telescoping arguments and following our analysis for EOFUL and GreedyETC, one can prove that $k$-Lookahead-EOFUL and $k$-LookaheadETC can obtain regret sublinear in $T$ and linear in $L$.
We leave the detailed analysis to the readers.

\section{Connection to Bandit Tree Search problem}\label{sec:reduc-bts}
In bandit tree search (BTS) problem by~\cite{coquelin2007bandit}, there exists a binary tree with depth $d$.
Each leaf node $i$ is assigned a random variable $X_i$ with bounded support $[0,1]$ and expectation $\mu_i$.
At each round $t \in [T]$, the DM chooses a leaf node $I_t$ and observe the random reward $X_{I_t}$.
The DM, however, is unknown about the tree structure and needs to explore it.
Likewise, the DM irrevocably selects each child in the tree until it reaches a leaf node.
A static and decision version of this problem is to decide whether there exists a leaf node $l$ whose utility function $v(l) \ge A$ for some $A \ge 0$.
Likewise, a static and decision version of TMAB is to decide whether there exists a subsequence $\by$ with $u(\by) \ge A$ for some $A \ge 0$ in polynomial time.
We will see that static decision versions of these problems can be reduced from each other.
Illustrative examples of such reductions are provided in Figure~\ref{fig:reduction}.

The following theorem formally argues that one can reduce BTS to TMAB:
\begin{theorem}
    An instance of static and decision version of BTS can be reduced to an instance of static and decision version of TMAB and monotone utility function.\footnote{The reduction can easily be extended to $k$-ary tree version of BTS.}
\end{theorem}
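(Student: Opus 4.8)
The plan is to build a TMAB instance whose complete token sequences are in one-to-one correspondence with the root-to-leaf paths of the given tree, so that a leaf of value at least $A$ exists if and only if a sequence of utility at least $A$ exists. Given a depth-$d$ binary tree, I would take the vocabulary $\cV = \{0,1,\eos\}$ and set the maximal depth to $L = d+1$. A root-to-leaf path is encoded as the binary string recording the left/right ($0$/$1$) choice at each of the $d$ internal nodes, and the corresponding complete sequence is this length-$d$ binary string followed by a single $\eos$. Thus each leaf $l$ corresponds to exactly one complete sequence $\by_l = b_1 b_2 \cdots b_d : \eos \in \cV^*$, and every full-depth complete sequence arises this way. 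The intended TMAB threshold is the same value $A$.

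For the utility, let $\mathbf{w}(\by)$ denote the \emph{binary prefix} of an arbitrary finite sequence $\by$, i.e., the maximal prefix of $\by$ consisting only of tokens in $\{0,1\}$ (equivalently, $\by$ truncated at its first $\eos$). I would define
\begin{align*}
u(\by) = \begin{cases} v\!\left(\mathrm{leaf}\big(\mathbf{w}(\by)^{(1:d)}\big)\right) & \text{if } |\mathbf{w}(\by)| \ge d,\\[2pt] 0 & \text{otherwise,}\end{cases}
\end{align*}
where $\mathrm{leaf}(\cdot)$ sends a length-$d$ binary string to the leaf reached by the corresponding path. Since the BTS leaf values lie in $[0,1]$, $u$ is a nonnegative sequence function, and for the stochastic formulation one may additionally attach to $\by_l$ the leaf random variable $X_l$ so the reward distributions match as well. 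The crucial observation is that $u(\by)$ depends on $\by$ only through $\mathbf{w}(\by)$: appending any token to a sequence already ending in $\eos$ leaves the position of the first $\eos$ unchanged, hence leaves $\mathbf{w}(\by)$ and $u(\by)$ unchanged, so both clauses of the monotonicity Assumption~\ref{ass:monotone} hold (with equality). This yields the required \emph{monotone} utility function; note that I deliberately do \emph{not} impose DDMC, which is precisely why the constructed instance can remain as hard as BTS.

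It then remains to check the decision equivalence and that the reduction is polynomial. If some leaf $l$ satisfies $v(l) \ge A$, its associated complete sequence obeys $u(\by_l) = v(l) \ge A$. Conversely, any sequence $\by$ with $u(\by) \ge A > 0$ must have $|\mathbf{w}(\by)| \ge d$ (otherwise $u(\by)=0$), so $u(\by) = v(l')$ for the leaf $l'$ determined by $\mathbf{w}(\by)^{(1:d)}$, giving $v(l') \ge A$; the boundary case $A = 0$ is trivially satisfiable on both sides. Hence the BTS and TMAB decision answers coincide. Polynomiality is immediate: $|\cV| = 3$, $L = d+1$, and a single value-oracle call for $u(\by)$ reduces to reading the first $d$ binary tokens of $\by$ and making one value-oracle call to the BTS instance.

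The only delicate point — and the place I would spend the most care — is defining $u$ consistently on \emph{every} finite sequence (pure binary strings with no $\eos$, sequences carrying stray non-$\eos$ tokens after an $\eos$, and paths that stop early at internal nodes) while simultaneously keeping $u$ monotone and ensuring that no malformed or short sequence clears the threshold without corresponding to a genuine leaf. Routing everything through $\mathbf{w}(\cdot)$ resolves both at once: monotonicity is automatic because $u$ ignores all tokens at and after the first $\eos$, and the threshold can only be cleared by a full-depth path. The residual work is the routine well-posedness check over all sequences of length up to $L$ and, for the stochastic version, confirming the attached distributions reproduce the $X_l$. I would finally remark that the construction extends verbatim to $k$-ary trees by taking $\cV = \{0,1,\dots,k-1,\eos\}$, matching the footnote's claim.
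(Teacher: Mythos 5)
Your reduction is correct, and it establishes the same decision equivalence as the paper's, but via a genuinely different encoding. The paper encodes the tree \emph{node-by-node}: it fixes, for each depth $l$, an injection $\sigma_l$ from the depth-$l$ nodes into $\cV\setminus\{\eos\}$ (one token per leaf, so the vocabulary has size $2^d+1$, exponential in the depth), and defines the utility piecewise --- zero when the sequence does not end in $\eos$, the leaf value $v(\sigma_l^{-1}(\by^{(-2)}))$ read off the penultimate token when all tokens are covered, and invariance under trailing $\eos$ tokens added explicitly as a third clause to secure monotonicity. You instead encode branch \emph{choices}: $\cV=\{0,1,\eos\}$ with $L=d+1$, and a utility that factors through truncation at the first $\eos$. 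Your route buys three things: a constant-size alphabet, so the TMAB instance stays polynomial in the tree description (the paper's exponential vocabulary is harmless for the decision reduction but less parsimonious); an exact bijection between full-depth complete sequences and root-to-leaf paths (in the paper's construction the utility depends only on the penultimate token, so ``inconsistent'' sequences whose tokens do not form a path still receive leaf values --- benign for correctness, but your well-posedness is cleaner); and a uniform one-line monotonicity check, since $u$ ignores everything at and after the first $\eos$, rather than the paper's case analysis. You also handle the stochastic rewards and the $k$-ary extension explicitly, which the paper leaves implicit. One pedantic point: clause (ii) of Assumption 1 says appending a non-$\eos$ token after $\eos$ ``only decreases'' the utility, and your construction keeps it constant; this is fine under the intended weak (non-increasing) reading, which the paper's own construction also relies on, but it is worth stating that you interpret the clause weakly.
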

\begin{proof}
    Let $\cL$ be the set of leads and $d$ be the depth of the tree in BTS.
    For each leaf node $l \in \cL$, we construct a corresponding token $\tau(l) \in \cV$ and add $\eos$ token, \ie $\cV = \{\tau(l): l \in \cL\} \cup \{\eos\}$.
    Let $v(\cdot)$ be the value function for BTS that maps leaf to nonnegative real values.
    We write $d(x)$ to denote the depth of the any (possibly internal) node $x$ in the instance of BTS.
    Let $A_l$ be the set of nodes in depth $l$ in BTS.
    We arbitrarily define an injection $\sigma_l$ from $A_l$ to $\cV \setminus \{\eos\}$.
    We say that $\tau \in \cV$ is \emph{covered} at level $l$ if there exists a node $x$ with depth $l$ in BTS such that $\sigma_l(x) = \tau$.

    Now, we construct sequence function $u:\cV^* \to \R_{\ge 0}$ as follows: for every $\by \in \cV^*$
    \begin{enumerate}
        \item If $\by^{(-1)} \neq 0$, then $u(\by) = 0$.
        \item If $\by^{(-1)} \neq 0$ and if every $\by^{(k)}$ for $k \in [|\by-1|]$ is covered, then $u(\by) = v(\sigma_l^{-1}(\by^{(-2)}))$. 
        \item If $\by^{(-i)} = \eos$ for $i=1,2,\ldots, k$ for $k \ge 2$ and $\by^{(-k)} \neq \eos$, then $u(\by) = u(\by^{(1:|\by|-(k-1))})$.
    \end{enumerate}
    It is straightforward to check that No instance of the static and decision version of BTS enforces No instance to that of TMAB, since no sequence would have utility larger than or equal to $A$.
    Consider Yes instance of the static and decision version of BTS and let $l$ be the leaf node with $v(l) \ge A$.
    Now we define $\by$ to satisfy:
    \begin{enumerate}
        \item $|\by| = d(l) + 1$.
        \item $\by^{(k)} = \sigma_k(l)$ for every $k \in [d(l)]$.
        \item $\by^{(d(l)+1)} = \eos$.
    \end{enumerate}
    Then, due to our construction of the utility function above, we have $u(\by) = v(l) \ge A$, finishiing the reduction.

    Now it remains to prove that the constructed utility function is monotonicity as defined in Assumption~\ref{ass:monotone}.
    This naturally follows by step 3 in the construction of $u(\cdot)$, since adding more $\eos$ does not change the utility.
    

    \begin{figure}
     \centering
        \includegraphics[width=0.7\textwidth]{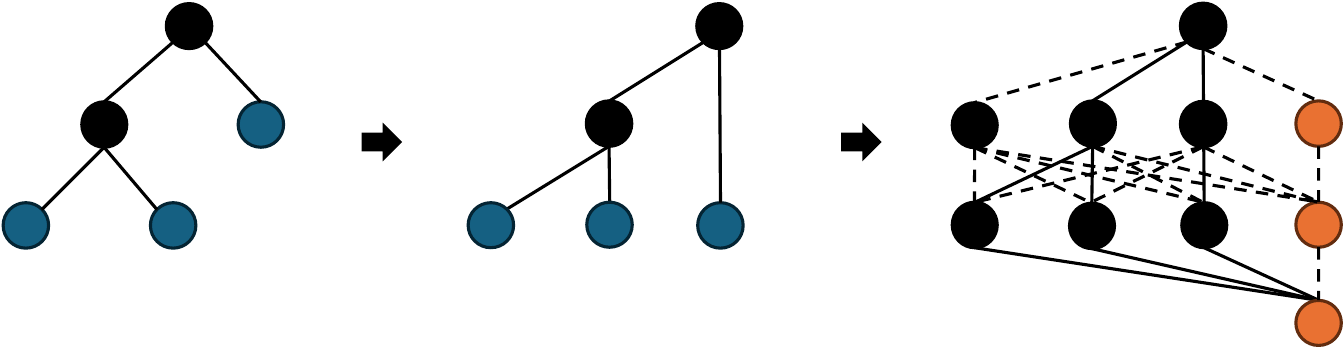}
        \includegraphics[width=0.7\textwidth]{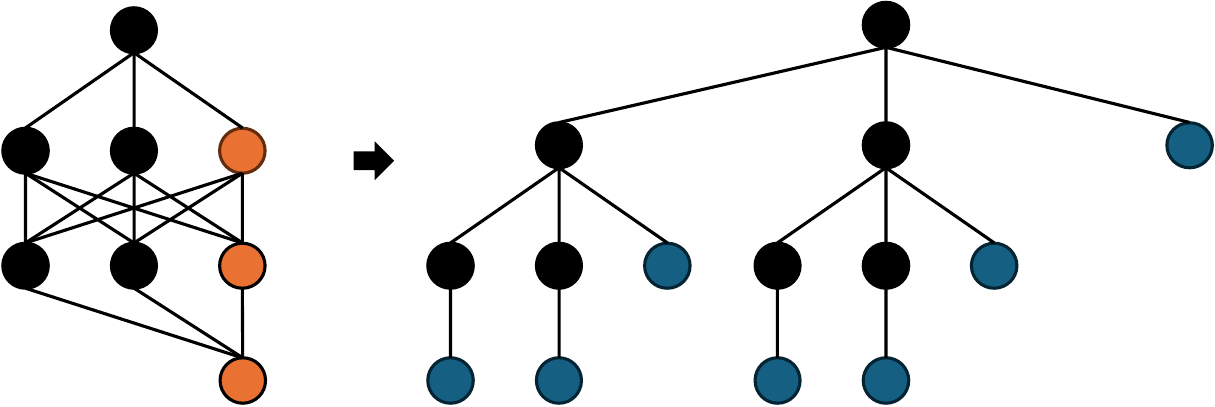}
      \caption{Illustration of reductions. The above figure is an example of reduction from BTS to TMAB, and the below is from TMAB to generalized BTS (with $k$-ary tree). In both figures, blue nodes represent leaf nodes and orange node denotes $\eos$ token. Dotted edge in the rightmost graph implies that the sequence contains such edge would have utility zero. Thus, only the sequences without dotted lines have (potentially) nonzero utility corresponding to the utility at the leaf node in the original instance of BTS. In the below figure, each path to $\eos$ in the instance of TMAB becomes a leaf node in the instance of generalized BTS.}
      \label{fig:reduction}
    \end{figure}
\end{proof}
We remark that one might think that TMAB is not a search problem unlike BTS since we know the set of feasible sequences in advance, but it can indeed be viewed as a search problem since the ultimate goal is to find a node with larger, nonzero, utility.

\begin{theorem}
    An instance of static and decision version of TMAB can be reduced to an instance of static and decision version of TS.
\end{theorem}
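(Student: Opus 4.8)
The plan is to embed the TMAB instance into the natural \emph{token tree} and read a BTS instance off of it, exploiting the fact that BTS imposes no structural restriction on its leaf values; so, unlike the forward reduction from BTS to TMAB, here we need neither monotonicity nor DDMC on the constructed instance, and the correspondence can be made essentially verbatim.

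First I would construct the tree. Take the complete $n$-ary tree of depth $L$ whose root is the empty sequence $\emptyset$ and in which every internal node has exactly $n$ children, one per token $\tau \in \cV$; the node reached by a given root-to-node path is identified with the token sequence $\by$ formed by the edge labels along that path. A node is declared a \emph{leaf} exactly when its incoming edge is labeled $\eos$, so that the path terminates at its first $\eos$, and additionally any node at depth $L-1$ that has not yet emitted $\eos$ is turned into a leaf by appending a trailing $\eos$, exactly as in Algorithm~\ref{alg:greedy-etc}. By construction, the map $l \mapsto \by_l$ sending each leaf to its associated complete sequence (the unique sequence ending at its first $\eos$) is a bijection between the leaves of the tree and the complete sequences producible by the TMAB decoding process, which is precisely the relevant search space.

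Second I would transfer the payoffs and verify decision-equivalence. To each leaf $l$ I assign the random variable $X_l := r(\by_l) = u(\by_l) + \eta$, so that its BTS value is $v(l) = \Ex{X_l} = u(\by_l)$; since $u$ and $\eta$ are bounded, rescaling by their combined range places the support in $[0,1]$ as BTS requires. Because the leaf-to-sequence map is a bijection with $v(l) = u(\by_l)$, we obtain $\max_l v(l) = \max_{\by \in \cV^*} u(\by)$, hence $\exists\, l : v(l) \ge A$ iff $\exists\, \by : u(\by) \ge A$, and a Yes (resp.\ No) instance of TMAB maps to a Yes (resp.\ No) instance of BTS. The reduction is oracle-efficient: a single pull of leaf $l$ in BTS is simulated by one submission of $\by_l$ in TMAB, and descending the tree corresponds step-for-step to sequentially committing tokens, so BTS's irrevocable child selection matches TMAB's irrevocable token selection.

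Finally, since the base model of BTS is a binary tree, I would binarize by replacing each $n$-way branching with a balanced binary gadget of depth $\lceil \log_2 n \rceil$ whose $n$ gadget-leaves carry the token choices. This only inflates the depth to $L\lceil \log_2 n \rceil$, and---crucially---the internal gadget nodes carry no payoff, so no spurious leaf values are introduced and the bijection and decision-equivalence above are preserved. The main point requiring care is exactly this handling of $\eos$ together with the forced termination at depth $L-1$, since that is what guarantees the leaf-to-complete-sequence correspondence is a clean bijection onto the reachable sequences; everything else is immediate because BTS is strictly more permissive than TMAB, so no structural assumption on the target instance is needed.
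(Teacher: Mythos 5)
Your reduction is correct and takes essentially the same route as the paper's: both identify nodes of an $n$-ary token tree with sequence prefixes, declare a node a leaf exactly at its first $\eos$ (so leaves biject with the reachable complete sequences), assign $v(l) = u(\by_l)$, and read off decision-equivalence from the bijection. The only differences are refinements rather than a new idea: the paper stops at the $n$-ary ``generalized BTS'' and leaves the details to the reader, while you additionally spell out the forced-$\eos$ truncation at depth $L-1$, the rescaling of rewards into $[0,1]$, and a standard $\lceil \log_2 n \rceil$-depth gadget to binarize the tree so as to land in the original binary-tree BTS formulation.
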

\begin{proof}
    We sort the token set $\cV$ and index by $\sigma:\cV \to \{1,2,\ldots, n\}$ where such that $\sigma(\eos) = n$.
    Given an instance of BTS problem with $\cV$ we construct $n$-ary tree.
    For depth $d$ of the tree, there will be $n-1$ NORMAL nodes who have $n$ children, and one END node with no children.
    We will arbitrarily assign index $i \in [n-1]$ to each normal node and $n$ to the END node.
    
    Consider any complete sequence $\by$ with length $l$.
    Then, we can select a corresponding leaf node $l$ such that along the path $p$ to $l$, the node in depth $d$ in path $p$ has index $\sigma(\by^{(d)})$.
    Since $\by$ is complete, the corresponding leaf node $l$ is indeed a leaf since its last  index is $\sigma(\by^{(-1)}) = n$, which is END node that does not have any children.
    Thus, we assign utility $v(l) = u(\by)$.
    Then, the reduction easily follows, and we leave the details to the readers.
\end{proof}

\begin{discussion}
    Accordingly, one natural question is what it means by the binary tree in BTS having DDMC structure.
    As noted in the proof and Figure~\ref{fig:reduction}, the DDMC assumption in TS means that each node has exactly the same set of children, which corresponds to the given token set.
    Then, one can encode the DDMC assumption in TS by considering each node as the corresponding specific token.
\end{discussion}

\paragraph{DDMC assumption in BTS, smoothness in TMAB}
Given the close connection between two problems, one might wonder what the DDMC assumption means in BTS.
If we consider $k$-ary tree version of BTS, then we can imagine an ordering between children for each node, let the index be $1,2,\ldots, n$.
Then, in BTS, our DDMC assumption asserts that each two nodes at the same depth append the children with the same index $i$, the resulting utility gap only decreases compared to the utility gap between the parent nodes.

On the other hand, \cite{coquelin2007bandit} presents a rather immediate and strong assumption on the tree, named smoothness, stated as follows:
\begin{assumption}[Smoothness]
    Given an instance of BTS, for any depth $d < L$, there exists $\delta_d > 0$ such that for at least one optimal node $i$ of depth $d$ (that leads to the optimal leaf node), for all leaves $j$ in the subtree of $i$, it follows that
    \begin{align*}
        u(l^*) - u(u_j) \le \delta_d,
    \end{align*}
    where $l^*$ denotes the optimal leaf node.
\end{assumption}
That is, it upper bounds the utility gap at depth $d$ by the parameter $\delta_d$, and provide a regret upper bound parameterized by such $\delta_d$.
This could analogously be applied in our TMAB problem, by upper bounding the utility gap between any sequence of length $l$ and the subsequence of length $l$ that leads to the optimal subsequence by some parameters $\delta_l$.
On the other hand, our DDMC assumption does not introduce any such upper bound on the utility gap directly, but rather imposes a minor structural relation between sequences with length difference $1$.
Thus, we believe that our DDMC assumption can also play a crucial role in the BTS problem, given that it imposes certain structures on the tree implicitly, and the empirical evidence captured in Section~\ref{sec:exp}.

\section{More Experiments and Omitted Details}\label{apd:further-exp}

\paragraph{Details of experiments in Section~\ref{sec:exp}}
For the queries for regret comparison and assumption~\ref{ass:tech} validation, we randomly generate the prompt each round based on a manually specified list of templates and a list of user's interests.

Specifically, we use the following template for prompts to generate queries.
\begin{tcolorbox}[colback=gray!10, colframe=gray!50, width=\textwidth, sharp corners]
[
        "I work as a software engineer in New York and I love \teal{interest}. Any advice on balancing both?",
        "I'm looking for ways to improve my skills in \teal{interest} without burning out at my tech job. Any tips?",
        "Could you recommend some resources or places in NYC to enjoy \teal{interest} after work?",
        "What are some practical time-management strategies to fit in \teal{interest} while coding full-time?",
        "Any suggestions on how to meet new people in the city who also enjoy \teal{interest}?",
        "Can you give me a one-sentence recommendation for exploring \teal{interest} around New York?",
        "I often hang out with coworkers who share my interest in \teal{interest}. How do we plan something fun after work?",
        "As someone who loves \teal{interest} and coding, how can I unwind effectively?",
        "What are some must-try experiences in \teal{interest} near Manhattan on weekends?",
        "Could you suggest a beginner-friendly way to dive into \teal{interest}?",
        "Any advice on juggling a programming career and daily sessions of \teal{interest} to stay productive?",
        "What is a good way to combine my love of \teal{interest} and my passion for software engineering?",
        "I'd like to find new ways to level up my \teal{interest} skills during lunch breaks. Suggestions?",
        "Any personal organization tips for balancing a full-time software job with \teal{interest} and social life?",
        "In your opinion, what's a fun project or event in NYC that merges \teal{interest} with technology?",
        "I love going out with friends to enjoy \teal{interest}, but I'm also studying new programming languages. How do I manage both?",
        "Could you recommend an app or a platform that helps me track progress on \teal{interest}?",
        "Any recommended communities in New York for people who share my enthusiasm for \teal{interest}?",
        "What's an underrated aspect of \teal{interest} that helps relieve stress from my coding job?",
        "I'm trying to step out of my comfort zone with \teal{interest}. Got any beginner challenges or ideas?"
    ]
\end{tcolorbox}
The following is a list of user's interest.
\begin{tcolorbox}[colback=gray!10, colframe=gray!50, width=\textwidth, sharp corners]
[
        "tennis",
        "dog training",
        "video gaming",
        "breweries and nightlife",
        "coding projects",
        "tech meetups",
        "coffee shops",
        "gym and workouts",
        "weekend getaways",
        "music concerts",
        "NYC events",
        "board game nights",
        "watching sports with friends",
        "side hustle ideas",
        "improving my coding skills",
        "finding time for hobbies",
        "balancing health and tech",
        "staying updated with new frameworks",
        "apartment-friendly dog breeds",
        "managing stress after work",
    ]
\end{tcolorbox}

Formally, we randomly generate $1000$ queries.
For each query generation, we randomly sample a user's interest and template and then format the prompt.
The set of generated queries are used throughout the experiments by randomly sampling from the set.

\subsection{More experiments}
\paragraph{Regret comparison}
It turns out that depending on the parameters and the randomness of the experiments, EOFUL often suffer slower convergence as presented in Figure~\ref{fig:regret_slower}.
Slower convergence might be the artifact of smaller top-$k$ truncation of tokens, as EOFUL does not have much freedom to explore according to what it truly desires.
There could be a trade-off in setting top-$k$ truncations.
For instance, if we set large $k$, this gives the opportunity of EOFUL to learn exactly as it desires, which might ideally lead to fast exploration and learning of the latent parameter.
On the other hand, decreasing $k$ could filter out apparently wrong answers in EOFUL or any other learning-based approaches, \ie it neutralizes the impact of cold start.
One can further adaptively set top-$k$ to fine-tune the algorithms in practice.

\paragraph{Sample response}
Here we provide a few sample responses from the empirical evaluation with synthetic dataset presented in Section~\ref{sec:exp}.
Recall that $\theta$ is set to be $(1/2,1/2,\ldots, 1/2)$ artificially, so the optimal sequence here with respect to our linear utility function may not capture the most preferred answer by the user.

First, below are sample responses and corresponding utilities and regrets in earlier rounds $t=1$ and $t=2$.
\begin{tcolorbox}[colback=gray!10, colframe=gray!50, width=\textwidth, sharp corners]
    \textbf{Earlier rounds samples}
    \begin{enumerate}
        \item Query: I work as a software engineer in New York and I love managing stress after work. Any advice on balancing both?
        \item Optimal greedy's utility: $-1.36$
            \begin{enumerate}
                \item Response: I'm more than happy to provide some helpful tips, my fellow software engineering and wellness-seekering New York-based friend! Here are a few strategies ...
            \end{enumerate}

        \item EOFUL's utility: $-26.64$
            \begin{enumerate}
                \item Response: Ah-ha! Balancing work and stress-mangling (or stress-mangling-then- stress-avoid-then- stress-avoid-then ...
            \end{enumerate}
        \item WrongTheta's utility: $-34.98$
            \begin{enumerate}
                \item Response: Stress after work is common especially after dealing with long code lines, tight deadlines, and high expectations. Balancing work and self-care is ...
            \end{enumerate}
        \item Misaligned greedy's utility: $-9.90$
            \begin{enumerate}
                \item Response: Hi there! I'm happy to help. Balancing work and personal life can be challenging, especially in a fast-paced city like New York. Here...
            \end{enumerate}
    \end{enumerate}
\end{tcolorbox}

\begin{tcolorbox}[colback=gray!10, colframe=gray!50, width=\textwidth, sharp corners]
    \textbf{Earlier rounds samples}
    \begin{enumerate}
        \item Query: Any advice on juggling a programming career and daily sessions of board game nights to stay productive?
        \item Optimal greedy's utility: $3.30$
            \begin{enumerate}
                \item Response: I'd be happy to help with balancing a programming career and leisure activities, which also contribute positively to personal well-being and social connections, which ultimately can ...
            \end{enumerate}

        \item EOFUL's utility: $-17.48$
            \begin{enumerate}
                \item Response: Juggling a programming career and daily board game nights can indeed be challenging, but it's definitely doable and even potentially beneficial in some unexpected ways Here ...
            \end{enumerate}
        \item WrongTheta's utility: $-41.49$
            \begin{enumerate}
                \item Response: Ahah Juggling is all about balance! Firstly, prioritize and set clear boundaries for yourself. Allocate specific time slots for work and board game ...
            \end{enumerate}
        \item Misaligned greedy's utility: $-17.78$
            \begin{enumerate}
                \item Response: Ahaha, that sounds like a fun challenge! Juggling a programming career and board game nights can be done, but it requires some planning and discipline ...
            \end{enumerate}
    \end{enumerate}
\end{tcolorbox}

As one may see in the sample response of the EOFUL, the responses are often grammatically incorrect and write redundant words repeatedly. 
Also, utility seems to be significantly smaller than that of the benchmark (optimal greedy), as learning requires more rounds due to high dimensional nature of the embedding space as well as the complexity of the sequence function learning.

Note also that WrongTheta provides vacuously wrong answers, though it is not expected in hindsight.
For instance, if the wrongly set $\theta$ captures the true preference in hindsight, it should be the case that it produces something meaningful.
Thus, this specific algorithm yielding apparently incorrect outcomes could be simply an artifact of randomness in experiments.
Further, as expected, statically setting $\theta$ as a fixed estimator cannot learn the true $\theta$ over the rounds, showcasing the superiority of EOFUL approach.

On the other hand, below are sample response at the end of the rounds, $t = 9999$ and $t=10000$

\begin{tcolorbox}[colback=gray!10, colframe=gray!50, width=\textwidth, sharp corners]
    \textbf{Later rounds samples}
    \begin{enumerate}
        \item Query: I'd like to find new ways to level up my staying updated with new frameworks skills during lunch breaks. Suggestions? 
        \item Optimal greedy's utility: $-2.99$
            \begin{enumerate}
                \item Response: \teal{Here are a few ideas for leveraging your} 15- or 30-minute lunch breaks to learn something new about your favorite programming frameworks or technologies: ...
            \end{enumerate}

        \item EOFUL's utility: $-1.57$
            \begin{enumerate}
                \item Response: \teal{Here are a few ideas for leveraging your} 12:00 - 1:00 PM lunch breaks to learn about new frameworks and skills: ...
            \end{enumerate}
        \item WrongTheta's utility: $56.36$
            \begin{enumerate}
                \item Response: Ah-ha! Stuck in front of a screen? Don '92 u2019 u20ac u2029 u20ac u2019 ...
            \end{enumerate}
        \item Misaligned greedy's utility: $-11.06$
            \begin{enumerate}
                \item Response: That's a great goal! Here are some ideas to help you level up your skills during lunch breaks: 1. **Code Challenges**: Websites like Hacker ...
            \end{enumerate}
    \end{enumerate}
\end{tcolorbox}

\begin{tcolorbox}[colback=gray!10, colframe=gray!50, width=\textwidth, sharp corners]
    \textbf{Later rounds samples}
    \begin{enumerate}
        \item Query: As someone who loves improving my coding skills and coding, how can I unwind effectively?
        \item Optimal greedy's utility: $22.87$
            \begin{enumerate}
                \item Response: \teal{Great concern, my fellow coding enthusiast! I'm more than happy to} help you unwind effectively. Here are some strategies that many developers find helpful:
            \end{enumerate}

        \item EOFUL's utility: $28.34$
            \begin{enumerate}
                \item Response: \teal{Great concern, my fellow coding enthusiast! I'm more than happy to} share some ways that help many fellow developers, and perhaps, even the best coding skills ...
            \end{enumerate}
        \item WrongTheta's utility: $-36.90$
            \begin{enumerate}
                \item Response: Unwinding after a gruelling coding marathon or after a long day of \\n \\n \\n staring at code? Unwinding ...
            \end{enumerate}
        \item Misaligned greedy's utility: $2.68$
            \begin{enumerate}
                \item Response: Unwinding is essential for a coder's mental and physical well-being. Here are some effective ways to help you relax and recharge: ...
            \end{enumerate}
    \end{enumerate}
\end{tcolorbox}

Perhaps surprisingly, though expected from theoretical results in some perspectives, EOFUL is mimicking the response of the optimal greedy very effectively.
It almost outputs more than $10$ same words from the beginning. 
Once it deviates, however, due to the nature of the sequence function generation, it is doomed to be very different onwards, which is fairly expected consequence.
Further, one can observe that EOFUL's utility often outperform the optimal greedy's one, due to the online nature of the sequence function maximization.
This also indicates that greedy decoding is not necessarily optimal for some circumstances.

\begin{figure*}
     \centering
        \includegraphics[width=0.8\textwidth]{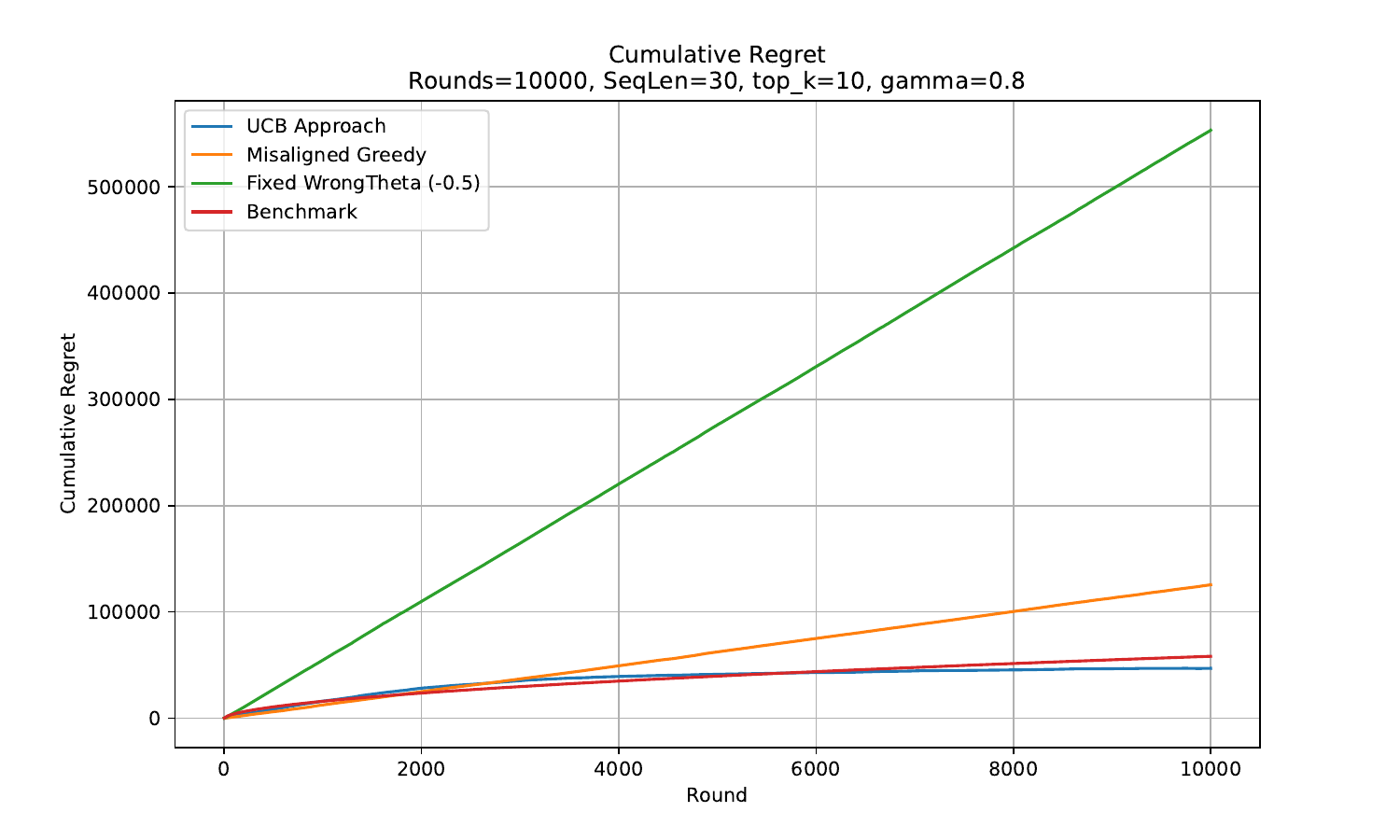}
      \caption{$T = 10000$ and $L = 30$. We truncate to use only top $10$ tokens and set $\gamma = 0.8$.}
      \label{fig:regret_slower}
\end{figure*}


\paragraph{More experiments on DDMC}
In Figure~\ref{fig:further-exp}, we further validate DDMC assumptions on two more datasets: AdvBench (standard jailbreak benchmark)~\cite{zou2023universal} and just-eval-instruct~\cite{justeval} that contains many prompts of various tasks. Further, we verified DDMC assumptions beyond our linear utility function. We considered three more functions: $l_1,l_3,l_4$ distance, beyond the inner product distance and $l_2$ distance presented in Section~\ref{sec:exp}.
In all these scenarios, we observe a tendency that appending more common tokens decreases the utility gaps, validating that DDMC, at least in expectation or its relaxed version, is practical in many scenarios.

\begin{figure*}
     \centering
            \includegraphics[width=0.19\textwidth]{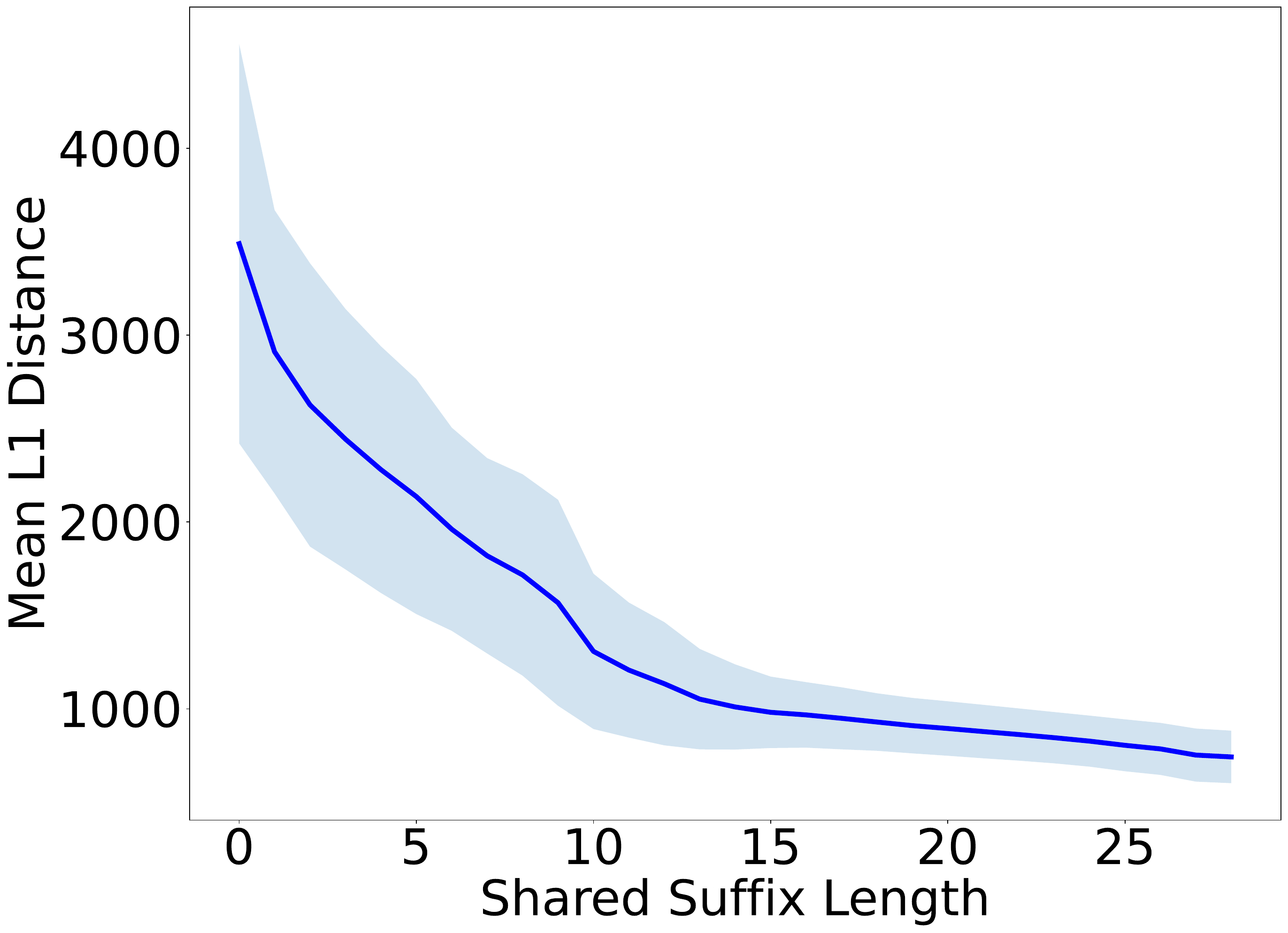}   
            \includegraphics[width=0.19\textwidth]{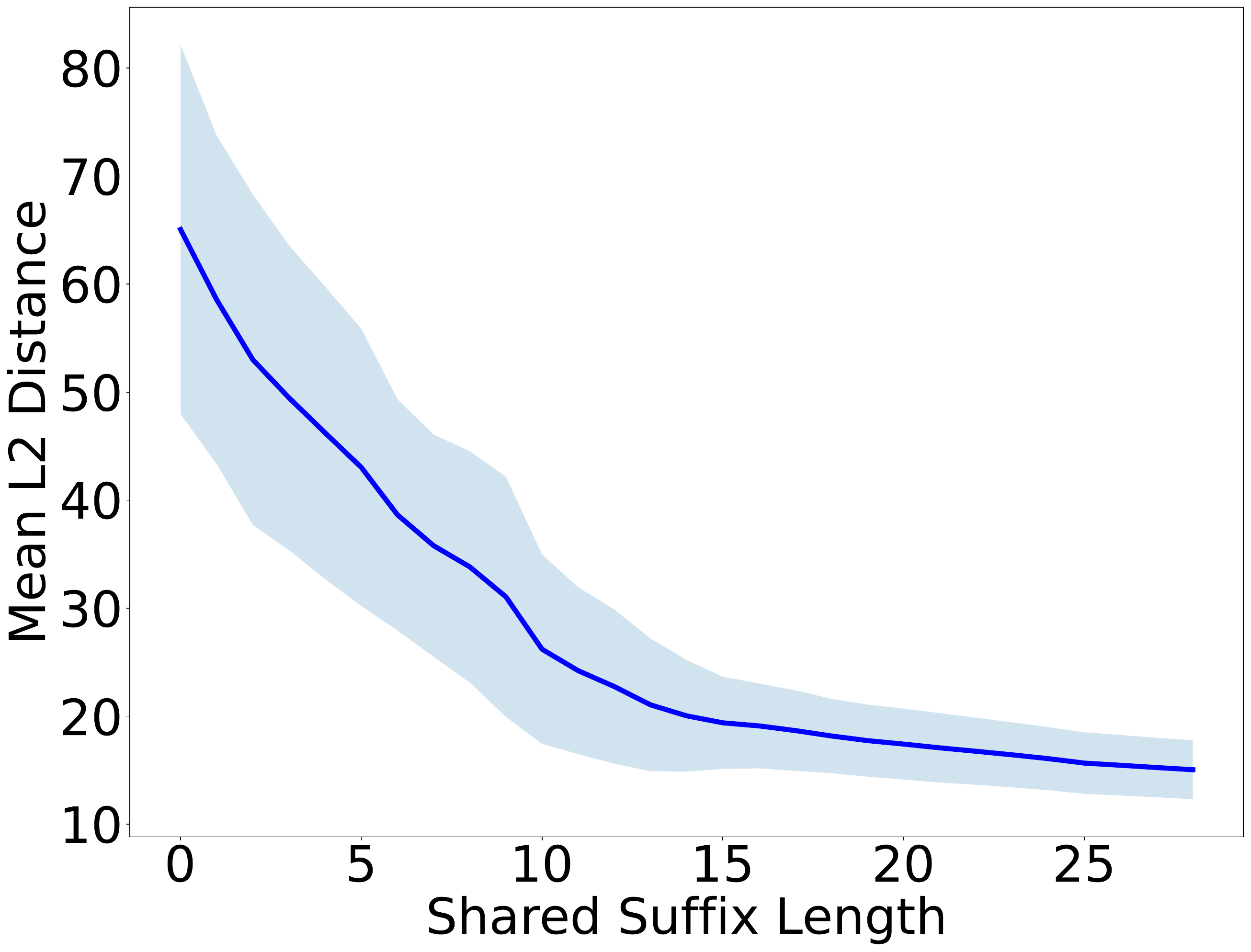}
            \includegraphics[width=0.19\textwidth]{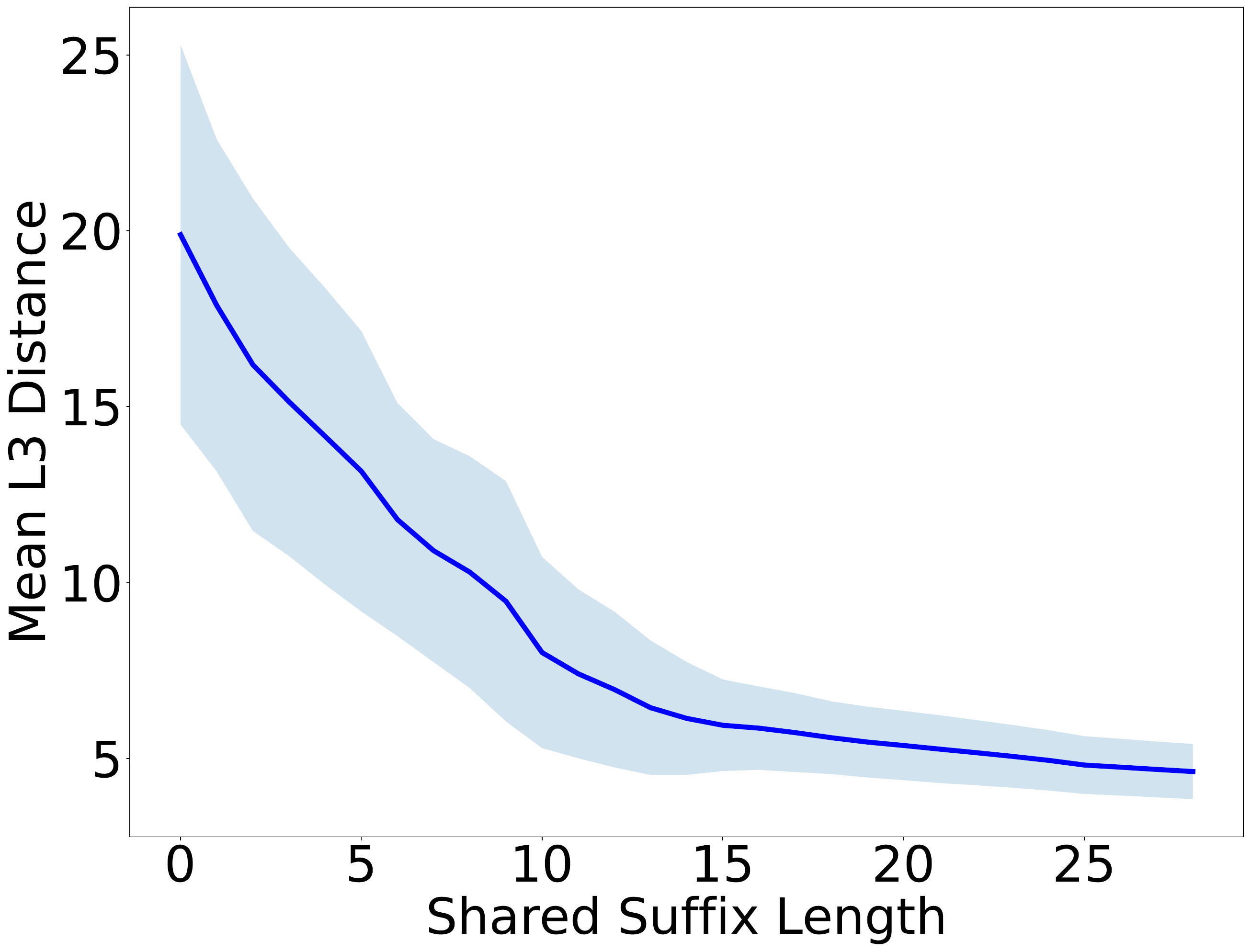}
            \includegraphics[width=0.19\textwidth]{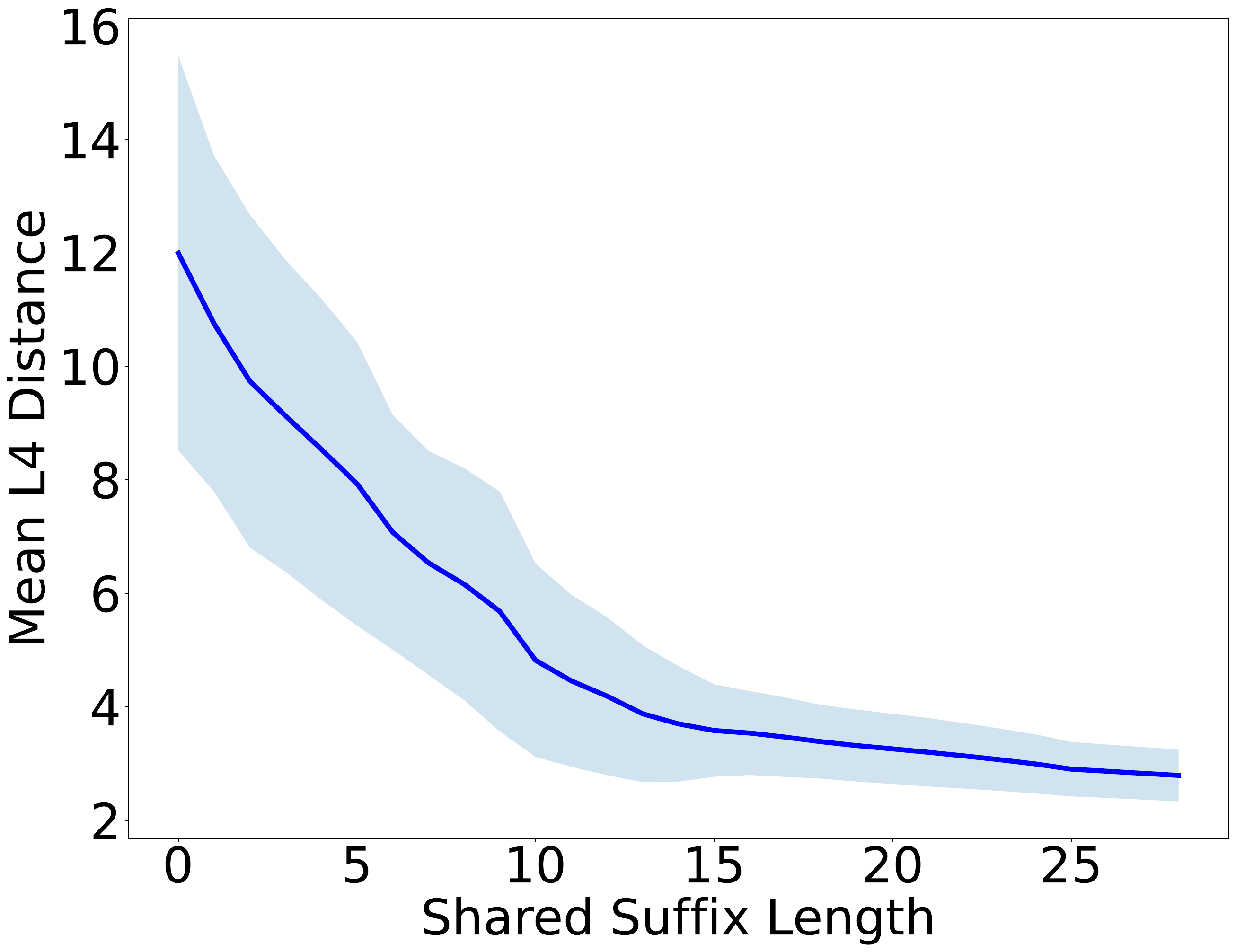}
            \includegraphics[width=0.19\textwidth]{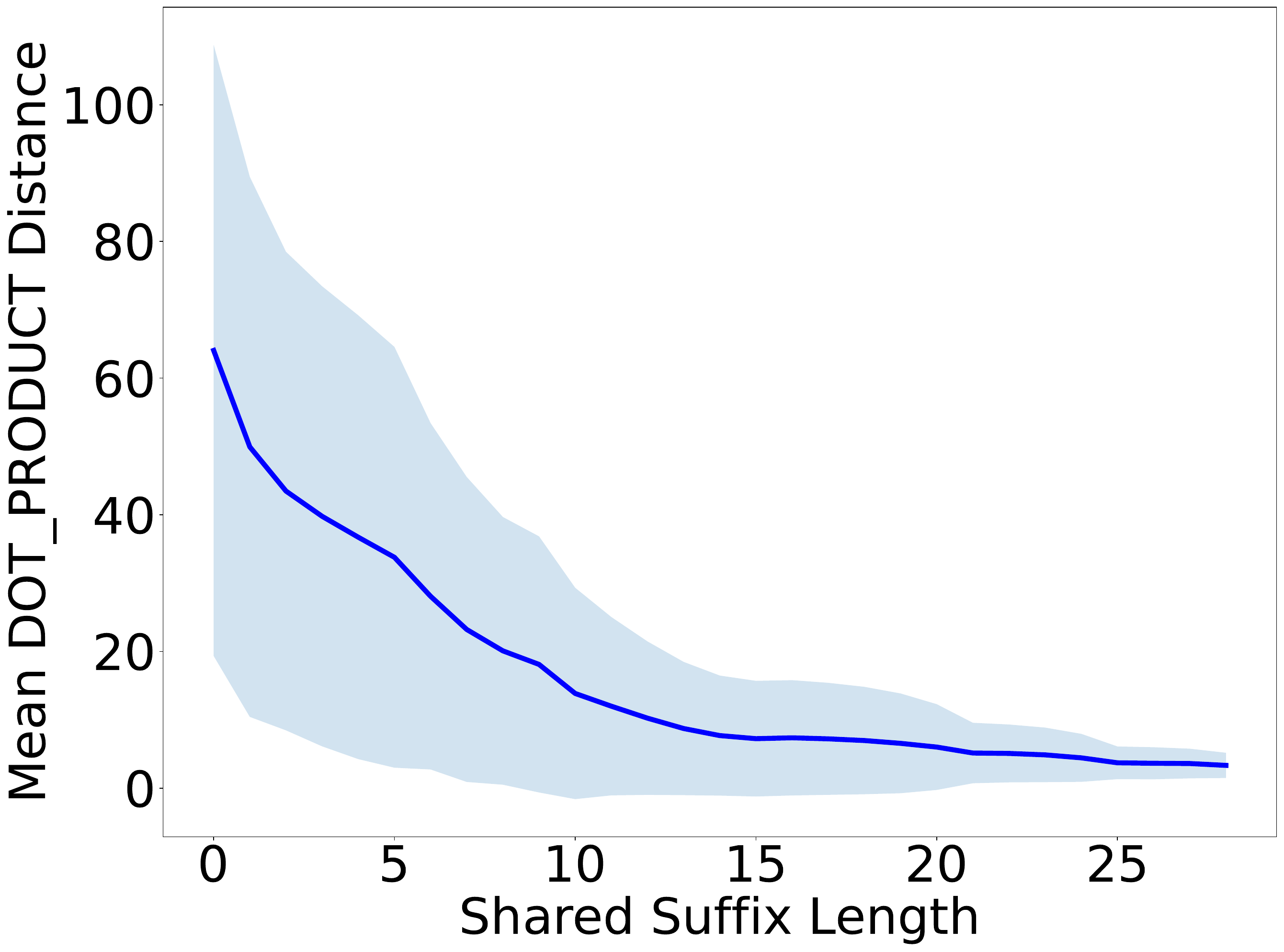}
            \\
            \includegraphics[width=0.19\textwidth]{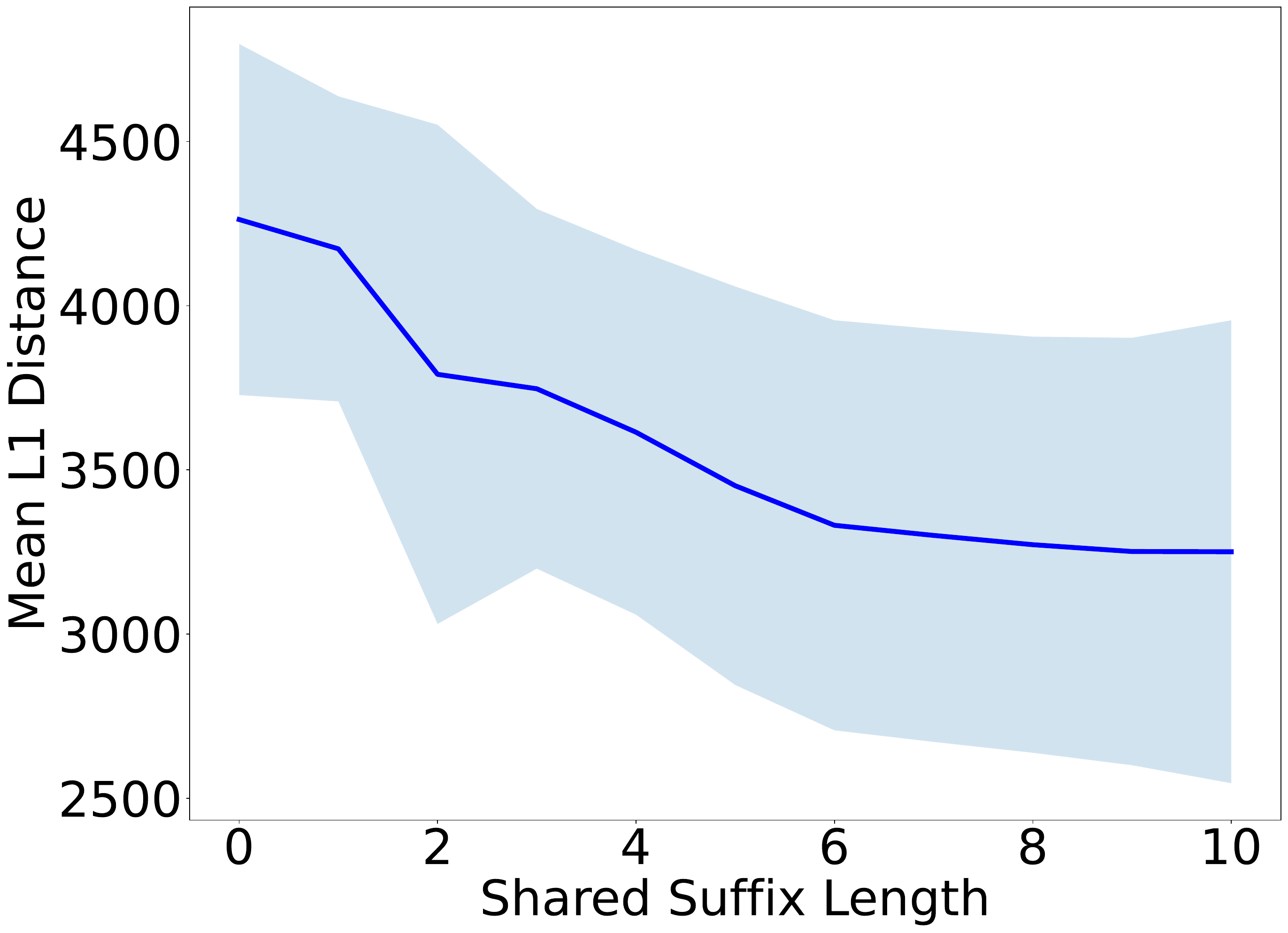}   
            \includegraphics[width=0.19\textwidth]{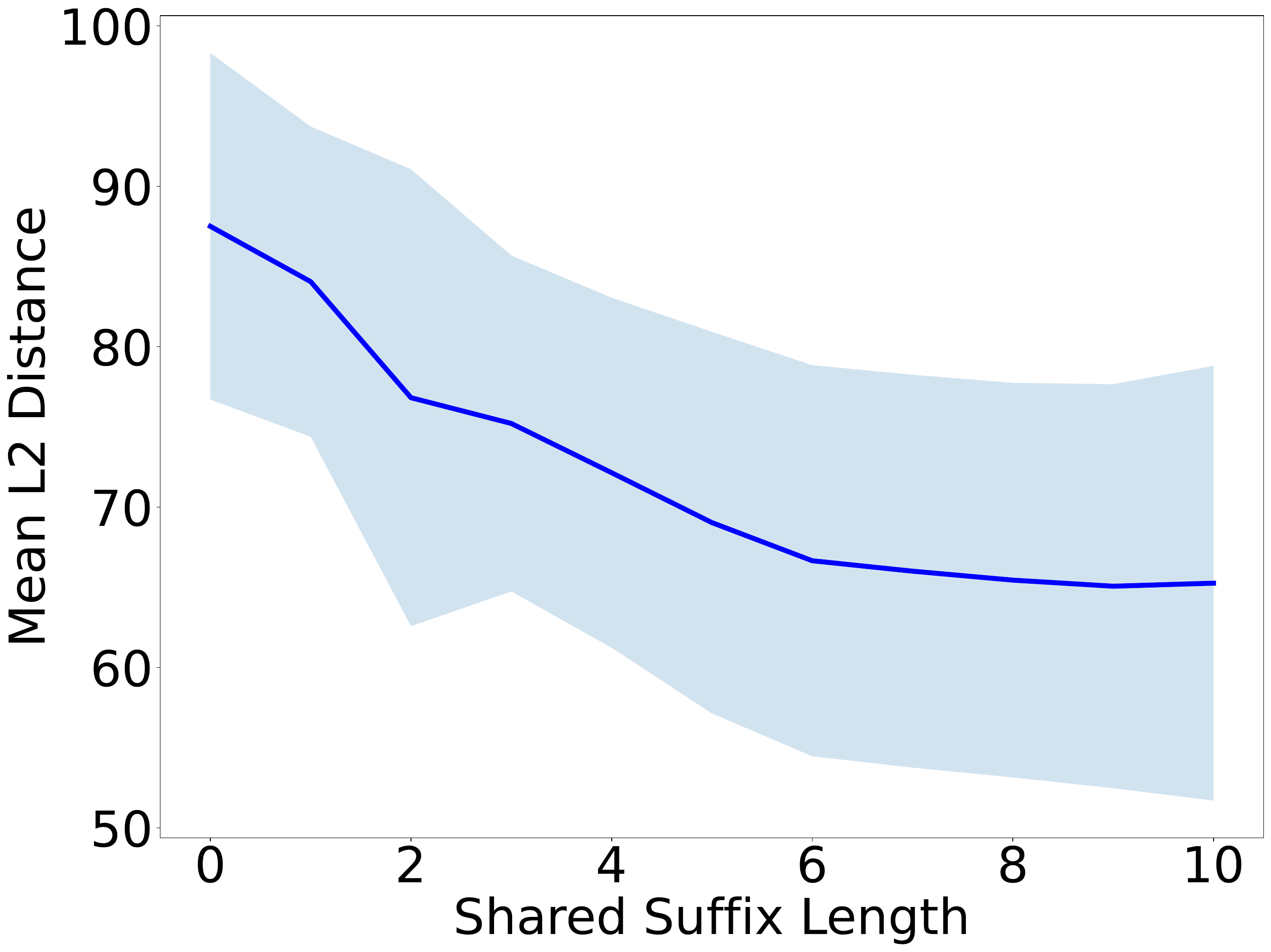}
            \includegraphics[width=0.19\textwidth]{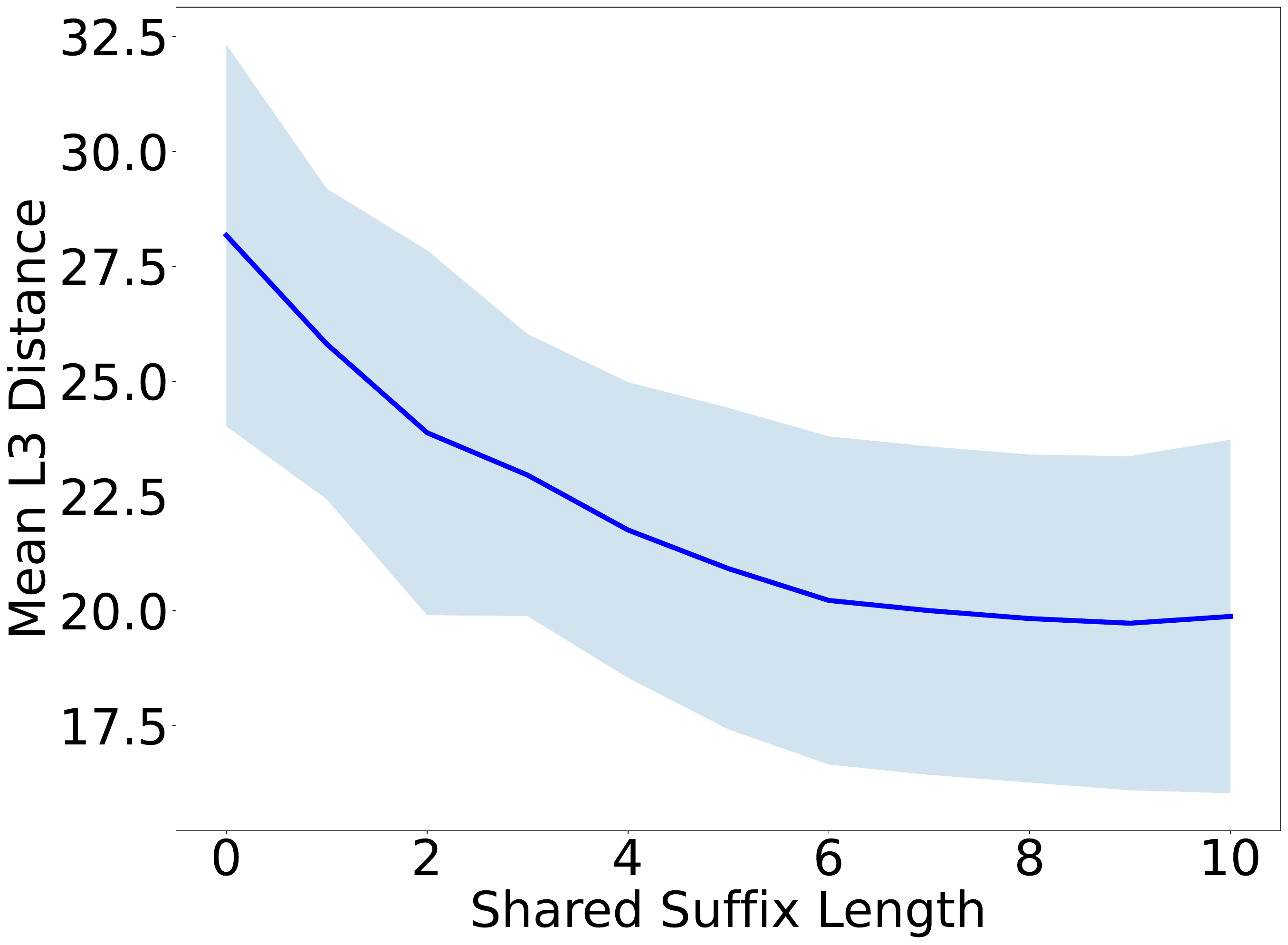}
            \includegraphics[width=0.19\textwidth]{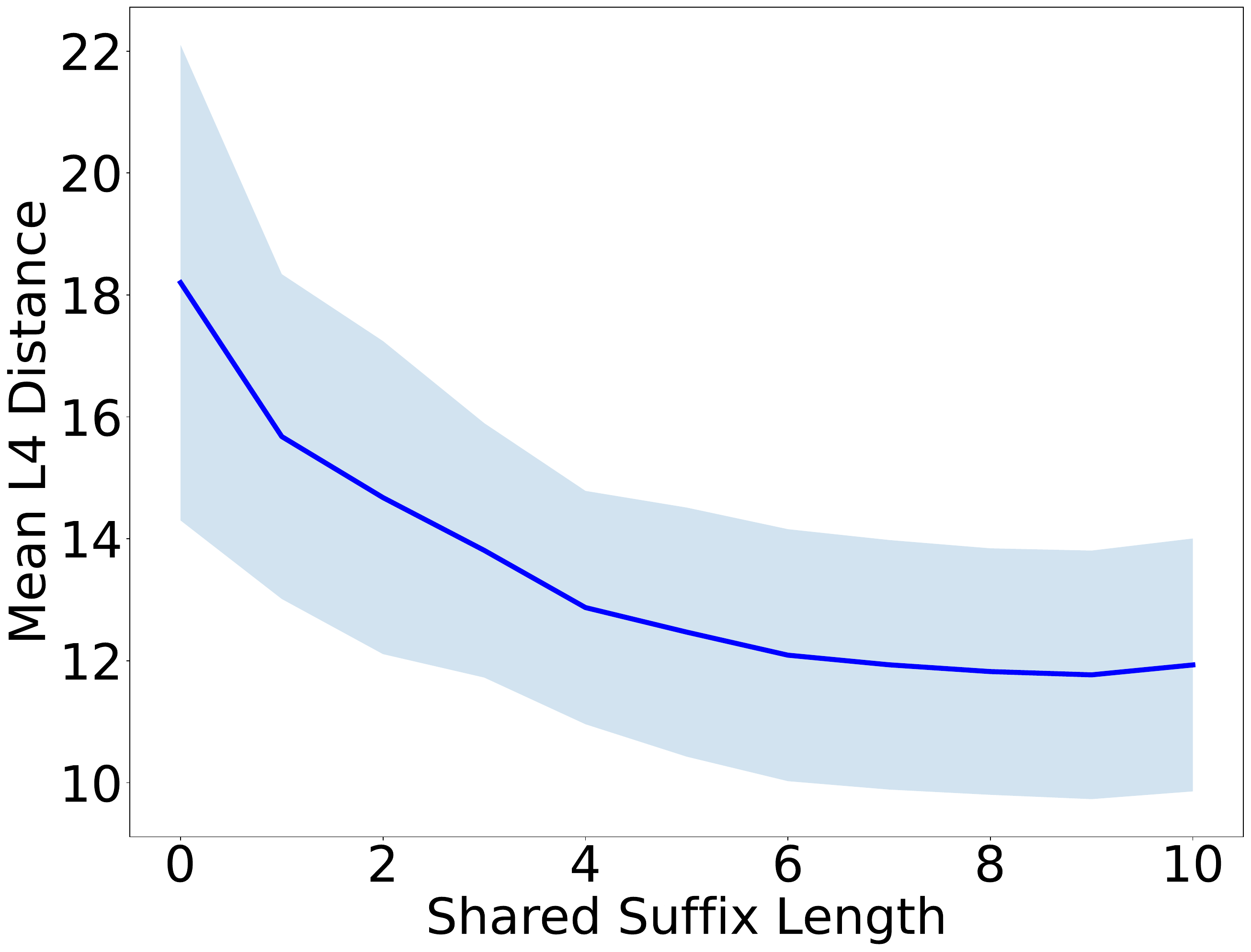}
            \includegraphics[width=0.19\textwidth]{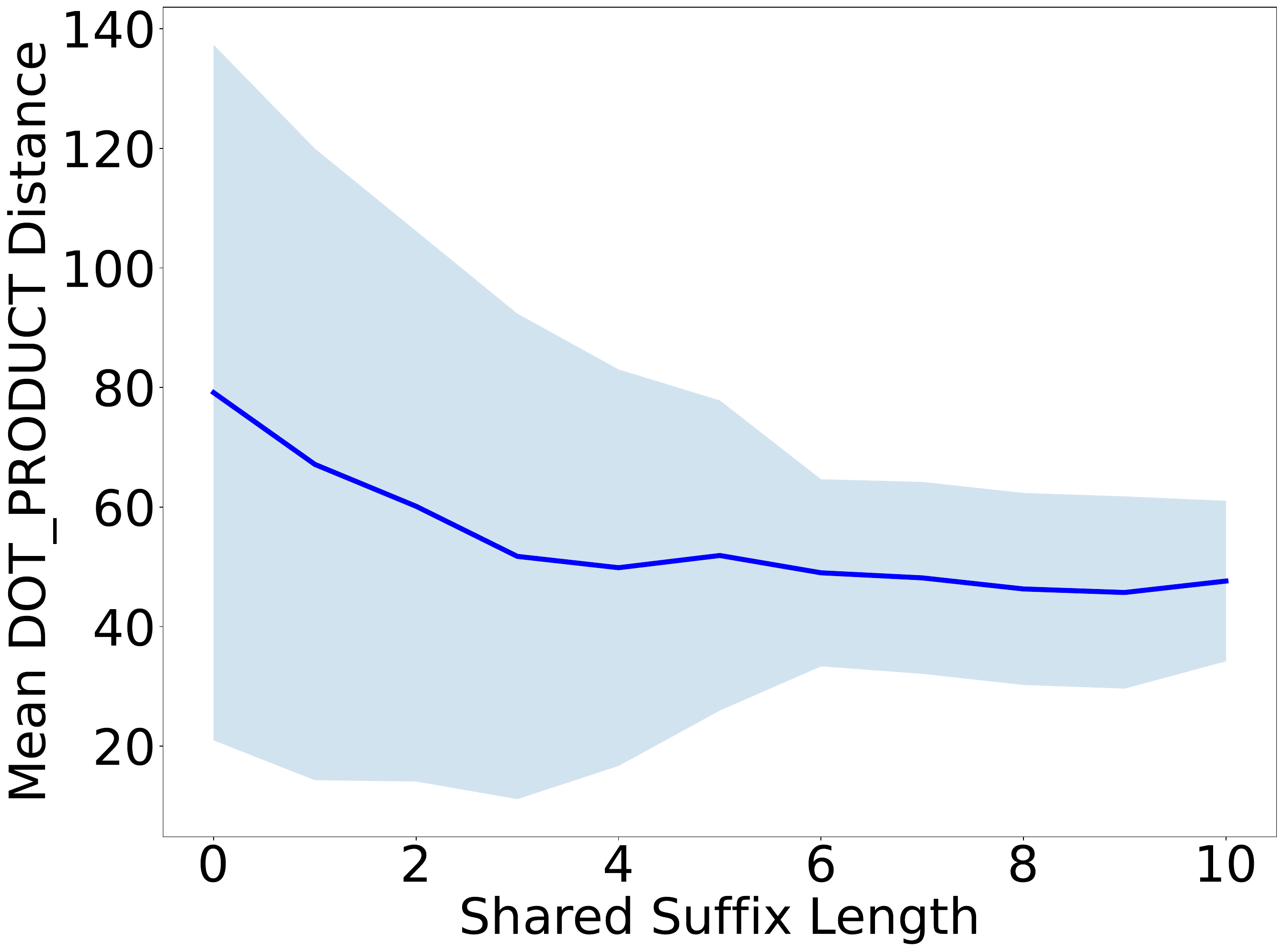}
            \\
            \includegraphics[width=0.19\textwidth]{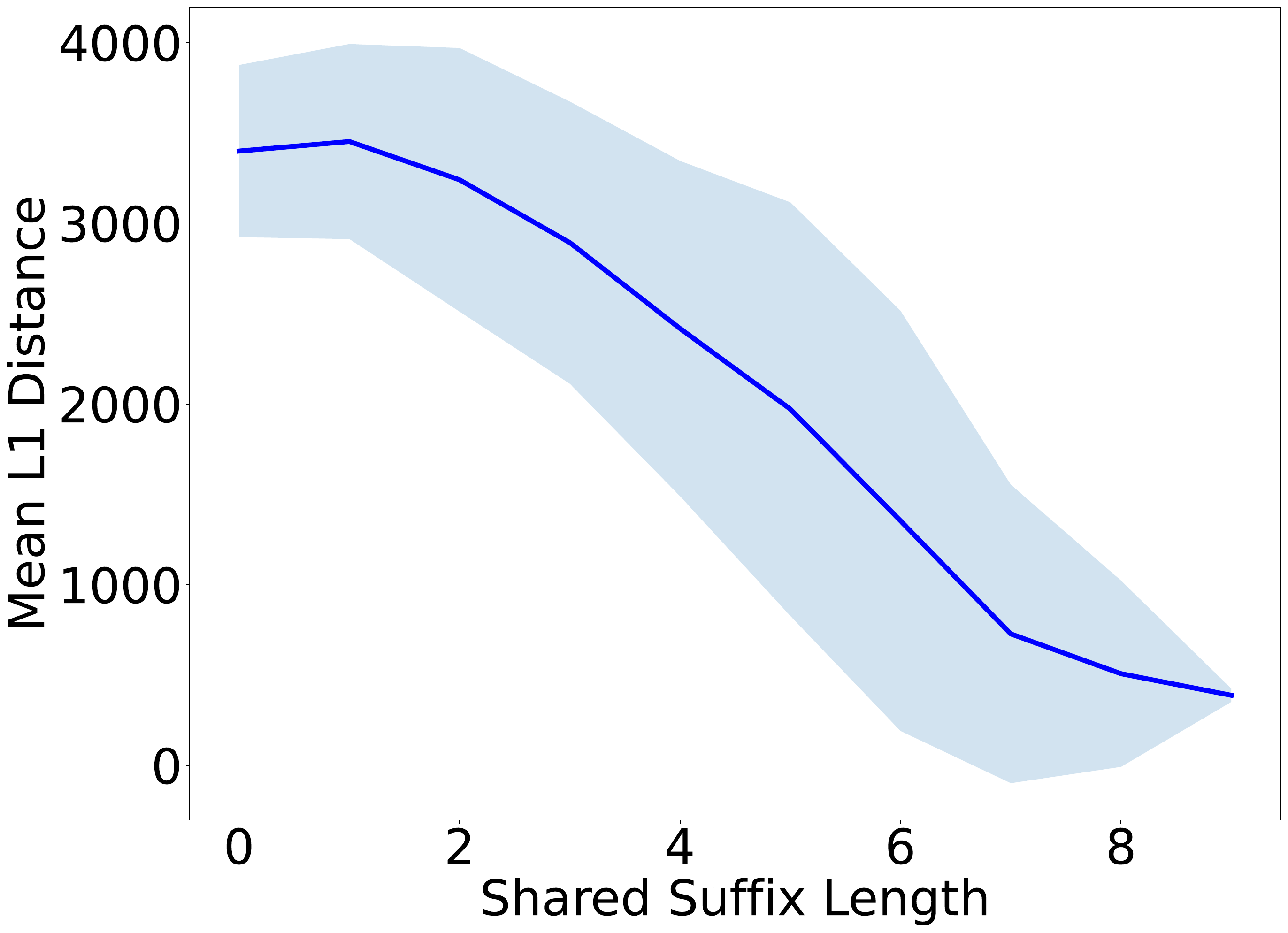}   
            \includegraphics[width=0.19\textwidth]{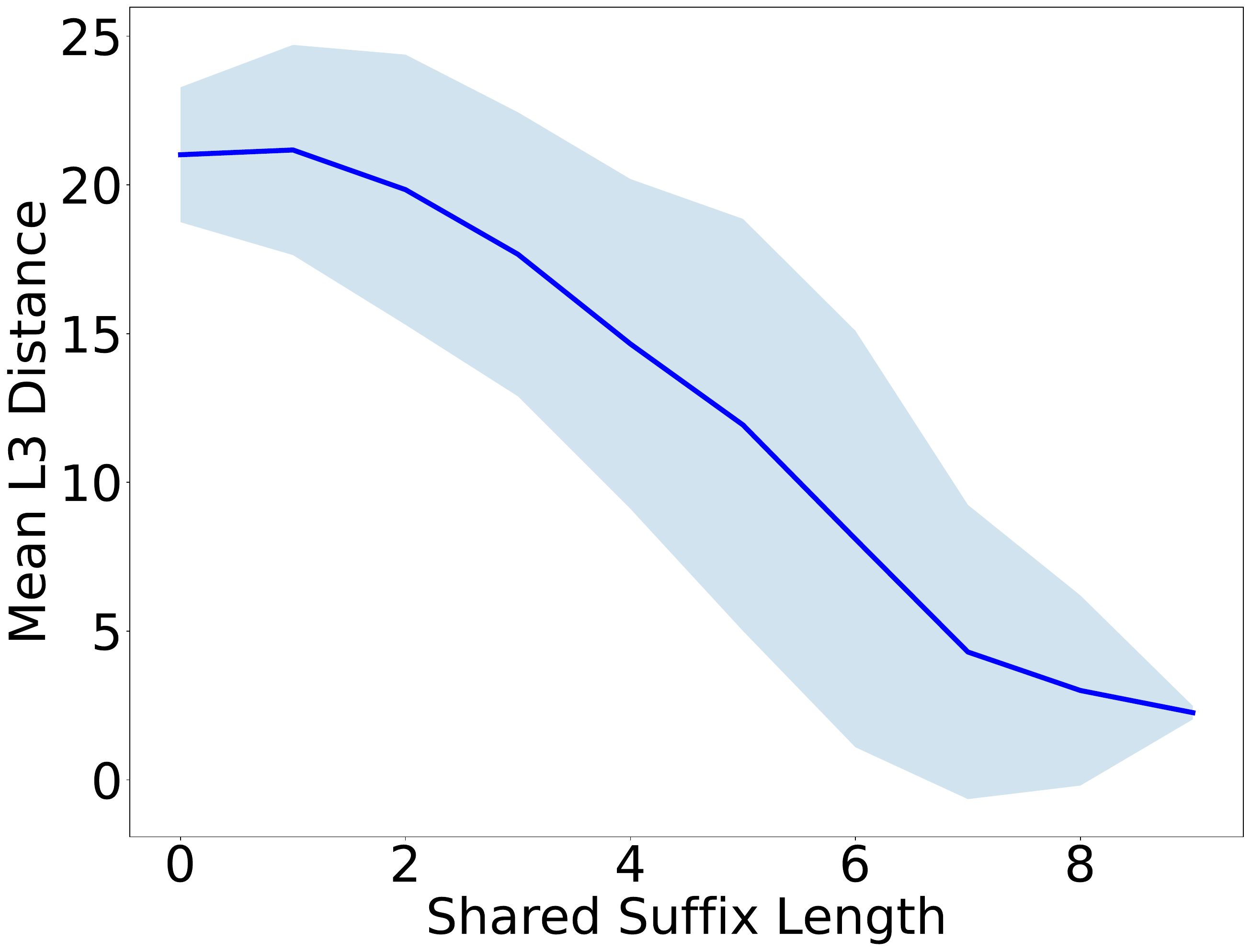}
            \includegraphics[width=0.19\textwidth]{figures/Just-Eval/embedding_distances_suffix_L4.pdf}
            \\
            \includegraphics[width=0.19\textwidth]{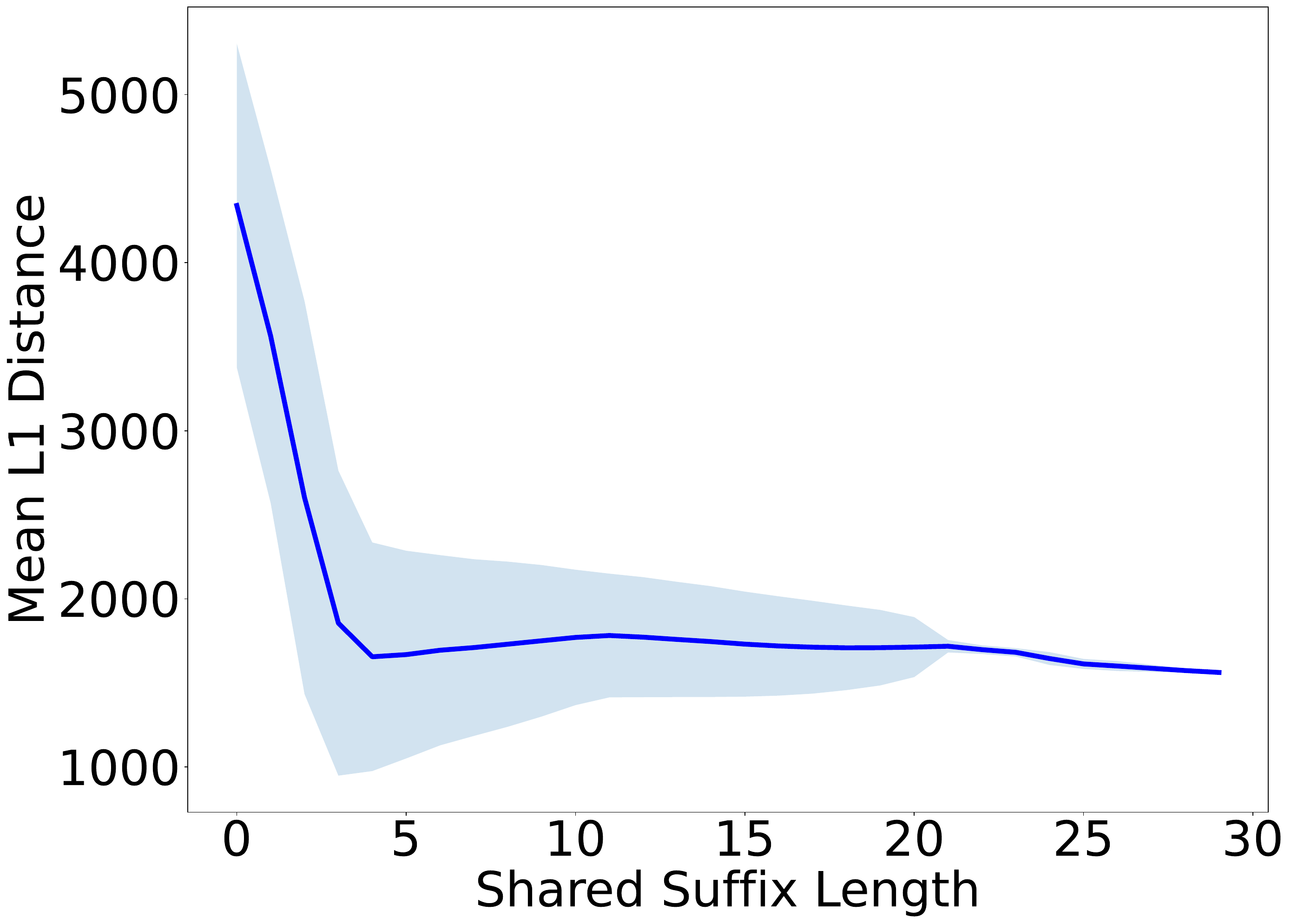}
            \includegraphics[width=0.19\textwidth]{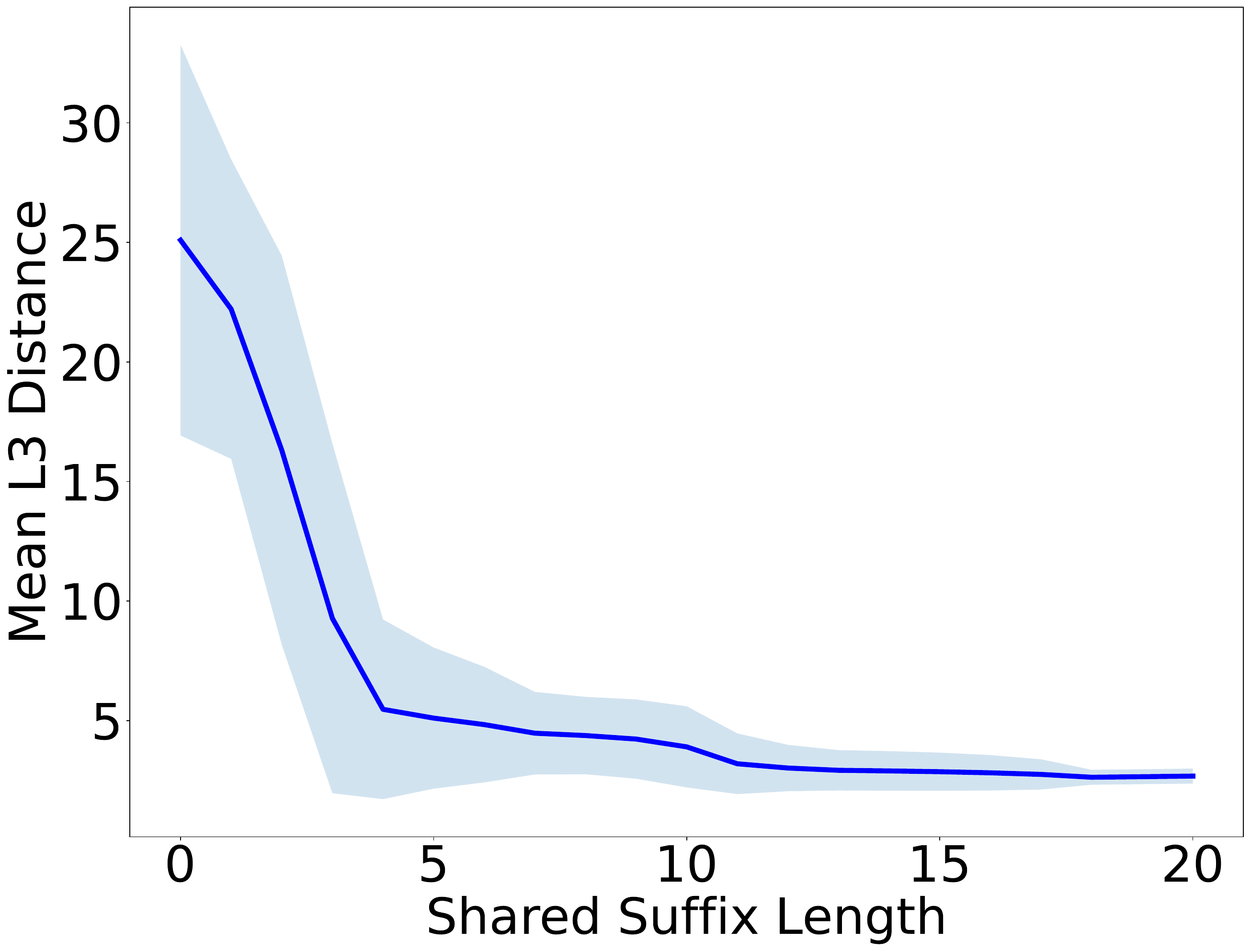}
            \includegraphics[width=0.19\textwidth]{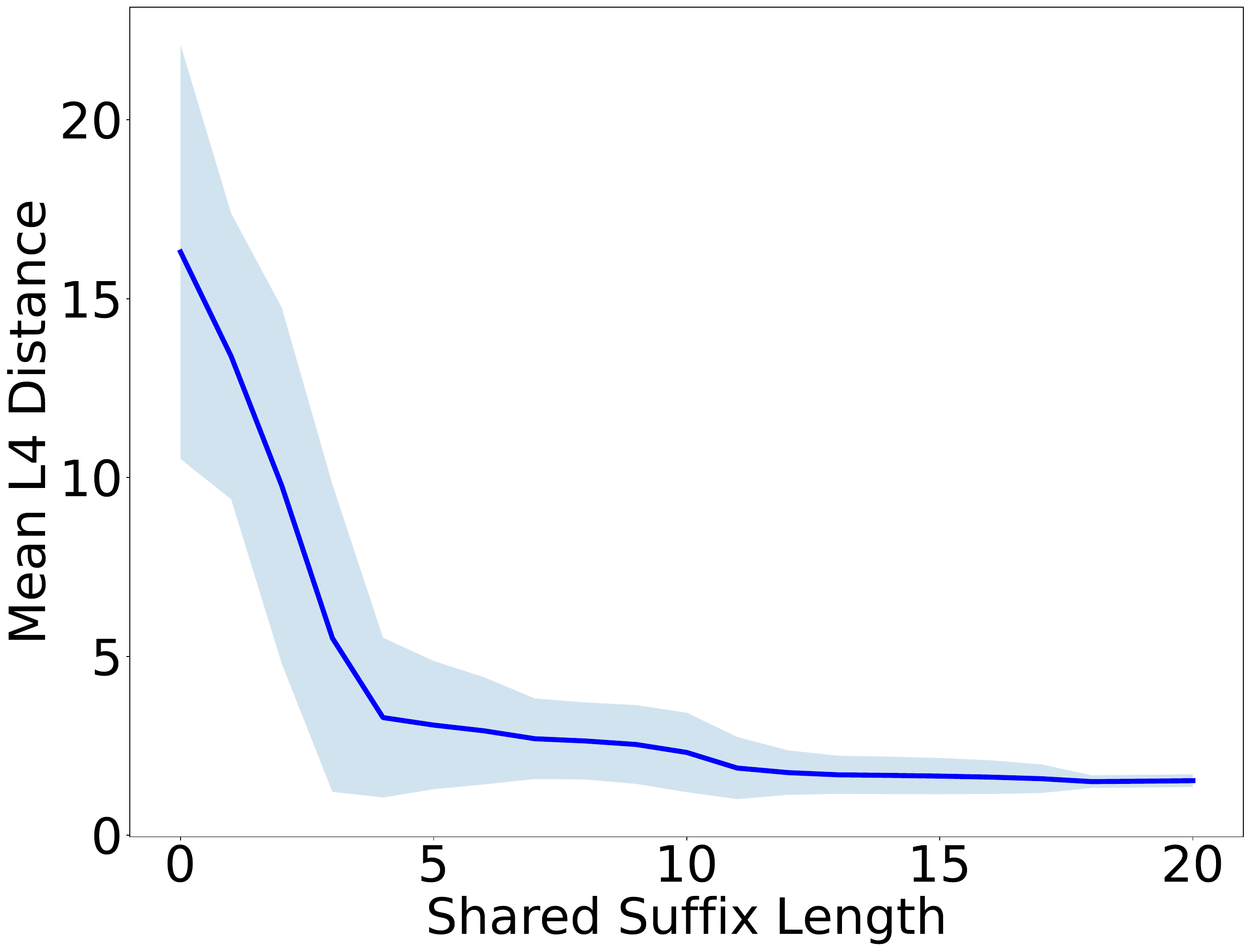}
      \caption{Each column represents each dataset. From top to bottom, each column represents AdvBench, HH-RHLF, Just-Eval, and TruthfulQA.}\label{fig:further-exp}
\end{figure*}


\section{Proofs}\label{apd:proofs}

\subsection{Proof of Proposition~\ref{prop:negative-noddmc}}
\begin{proof}[Proof of Proposition~\ref{prop:negative-noddmc}]
    Let $\cL$ be the set of sequences with length $L$ so that $|\cL| = 2^{L}$ and we arbitrarily index the elements in $\cL$ and write $\cL = \{\bz_1,\bz_2, \ldots, \bz_{2^L}\}$.
    Let $o$ be the one-hot vector who has one only at the first coordinate and otherwise zero, and set $\theta = o$.
    Consider a choice of the query sequence $(x_t)_{t \in [T]}$ and the embedding function such that $e(x_t,\by)$ has a unique subsequence $\by \in \cL$  such that $e(x_t,\by) = o$, and another unique subsequence $\bz \in \cL$ such that $\bz^{(1:L-1)} = \by^{(1:L-1)}$ and $e(x_t,\bz) = (1-\eps,0,0,\ldots, 0)$ for SLD $\eps$, and otherwise $e(x_t,\by) = 0$ for every other $\by \in \cV^*$.
    Further, assume that the sequence $(x_t)_{t \in [T]}$ ensures that such unique $\by \in \cL$ is selected uniformly at random over $\cL$.
    Let the chosen unique such vectors at round $t$ be $\by_t$ and $\bz_t$.
    Note that this is indeed possible by having large enough candidates for $x_t$ whose cardinality scales as $\Theta(T)$ and for each $x_t$ it independently and uniform randomly selects the unique such $\by \in \cL$ with $e(x_t,\by) = 0$ and correspondingly set $\bz_t$.
    It is straightforward to check that the corresponding utility function satisfies Assumption~\ref{ass:monotone} and~\ref{ass:ssld}.
    Observe that such instance is in fact a randomization over deterministic instances each of which deterministically choose $\by_t \in \cL$ for each $t \in [T]$.
    Thus, by Yao's minimax principle~\cite{yao1977probabilistic}, it suffices to consider a class of deterministic algorithms.

    In this instance, there always exists a sequence $\by \in \cL$ such that $e(x_t, \by) = o$, and thus the optimal decoding reward in hindsight is to always select $\by \in \cL$ with $e(x_t,\by) = o$, which induces the cumulative reward of $T \cdot \inner{o,o} = T$.
    On the other hand, since the choice of $\by_t$ is completely independent from the previously chosen sequences and realization of random variables therein, any deterministic algorithm's probability to choose either of $\by_t$ or $\bz_t$ is at most the random guess over $2^{L-2}$ elements until it reaches the subtree with children $\by$ and $\bz$, resulting in the cumulative reward of $T \cdot \inner{o,o}/2^{L-2} = T/2^{L-2}$.
    Note here that this is $2^{L-2}$ instead of $2^{L-2}$ since once the algorithm selects $\by_t^{(1:L-2)}$, it reward is upper bounded by an algorithm that deterministically chooses $\by_t^{(L-1)}$.
    Thus, the regret is at least $T(1-1/2^{L-2})$.
\end{proof}

\subsection{Proof of Theorem~\ref{thm:EOFUL}}
We here provide the proof of Theorem~\ref{thm:EOFUL}.
Statement and proofs of the required technical lemmas are deferred to the end of this section.


\begin{proof}[Proof of Theorem~\ref{thm:EOFUL}]
    Throughout, we assume that $\Clean$ holds defined in Lemma~\ref{lem:confidence}, which occurs with probability at least $\delta$.
    
    Fix a round $t$, and let $\ttheta_t^{(k)} \in C_t$ be the optimistic estimator chosen at round $t$ for $k$-th token.
    Let $\bo_t$ be the optimal sequence chosen by the optimal algorithm in hindsight at round $t$.
    For ease of exposition, in representing utility given sequence $\by \in \cV^*$ and query $x_t$, we abuse $\inner{\theta, \by}$ to denote $\inner{\theta, e(x_t, \by)}$ unless confusion arises, \ie consider $\by$ to be the resulting embedded vector $e(x_t,\by) \in \R^d$.

    \textit{(Length equalization) }
    We will first define a level-$k$ regret that computes the difference of utility between the $k$-prefix of $\bo_t$ and $\by_t$.
    However, this is not well-defined for every $k \in \max(|\bo_t|, |\by_t|)$ if $\bo_t$ and $\by_t$ have different lengths.
    Thus, we consider the \emph{length equalization} operation on $\bo_t$ and $\by_t$ that constructs extensions $\tilde{\bo}$ and $\tilde{\ba}$ of $\bo$ and $\ba$ respectively to have the same length in what follows.
    
    We have two cases: (i) if $|\ba| > |\bo|$, then we append multiple $\eos$ tokens at the end of $\bo$ and construct $\tilde{\bo}$ so that $|\tilde{\bo}| = |\ba|$, and (ii) otherwise $|\ba| < |\bo|$, then we do the same operation for $\ba$ to construct $\tilde{\ba}$ with $|\tilde{\ba}| = |\tilde{\bo}|$.
    Then, we extend both sequences $\tilde{\ba}$ and $\tilde{\bo}$ to exactly have the length $L$ by appending $\eos$.
    By Assumption~\ref{ass:monotone}, we have $u(\bo) = u(\tilde{\bo})$ and $u(\ba) = u(\tilde{\ba})$.
    Thus, we can safely deal with the extended sequences with the same length as the regret remains the same.
    For ease of exposition, we override $\ba_t \gets \tilde{\ba}_t$ and $\bo_t \gets \tilde{\bo}_t$ for every $t \in [T]$.


    \textit{(Level-$k$ regret and fictitious extension)}
    Now, we can formally define level-$k$ regret as follows:
    \begin{align*}
        \Reg_t^{(k)} = \inner{\theta, \bo_t^{(1:k)}} - \inner{\theta, \by_t^{(1:k)}}.
    \end{align*}
    Now we observe the connection between the level-$k$ regret with different $k$.
    For $k =2,\ldots, L$, we consider an one-step \emph{fictitious extension} of $\by_t^{(1:k-1)}$ that appends the optimal sequence $\bo_t$'s $k$-th token to $\by_t$'s ($k$-$1$)-th prefix, \ie 
    \begin{align*}
        \bbf_t^{(1:k)} = \by_t^{(1:k-1)}:\bo_t^{(k)}
    \end{align*}
    Notice that $\bo_t^{(1:k)}$ and $\bbf_t^{(1:k)}$ shares the same common suffix $\bo^{(k)}$.


    Thus, by DDMC assumption, behold:
    \begin{align}
        \inner{\theta, \bo_t^{(1:k)}} - \inner{\theta, \bbf_t^{(1:k)}}
        &\le 
        \inner{\theta, \bo_t^{(1:k-1)}} - \inner{\theta, \bbf_t^{(1:k-1)}} \nonumber
        \\
        &= 
        \inner{\theta, \bo_t^{(1:k-1)}} - \inner{\theta, \by_t^{(1:k-1)}}
        \\
        &= \Reg_t^{(k-1)} \label{eq:01232256}
    \end{align}
    
    \textit{(Regret decomposition) }
    Now, for $k \ge 2$, the level-$k$ regret can be decomposed as follows:
    \begin{align*}
        \Reg_t^{(k)} &= \inner{\theta,\bo_t^{(1:k)}} - \inner{\theta, \by_t^{(1:k)}}
        \\
        &= 
        (a) + (b) + (c),
    \end{align*}
    where
    \begin{align*}
        (a) &= \inner{\theta,\bo_t^{(1:k)}} - \inner{\theta, \bbf_t^{(1:k)}}
        \\
        (b) &= \inner{\theta,\bbf^{(1:k)}} - \inner{\ttheta_t^{(k)}, \by_t^{(1:k)}}
        \\
        (c) &= \inner{\ttheta_t^{(k)}, \by_t^{(1:k)}} - \inner{\theta, \by_t^{(1:k)}}
    \end{align*}
    
    Assume, first, that $\Reg_t^{(k)} \ge 0$ for every $k$.
    By~\eqref{eq:01232256}, we immediately have $(a) \le \abs{\Reg_t^{(k-1)}}$.

    Since our algorithm chooses $\by_t^{(k)}$ instead of $\bo_t^{(k)}$ given $\by_t^{(1:k-1)}$ for $k$-th token, by the optimistic choice we have $(b) \le 0$.\footnote{We note that this argument remains the same regardless of the length equalization process above. For instance, even if $\by_t^{(k-1)} = \eos$ and thus $\by_t^{(k)} = \eos$, by Assumption~\ref{ass:monotone}, appending $\eos$ is the greedy choice.}
    
    Let us denote the term in $(c)$ by $W_t^{(k)}$ for $k\in [M]$.
    
    Thus, we have
    \begin{align*}
        \Reg_t^{(k)} \le \abs{\Reg_t^{(k-1)}} + W_t^{(k)}.
    \end{align*}
    Let $i$ be the first level that $\by_t$ deviates from $\bo_t$.
    Telescoping over $k$ from $i$ to $L$:
    \begin{align*}
        \Reg_t 
        &\le \sum_{k=i}^L \abs{W_t^{(k)}} + \abs{\Reg_t^{(i)}}.
    \end{align*}
    Due to SSLD assumption, we have $\Reg_t^{(i)} \le O(1/\sqrt{T})$, so we can effectively ignore this term as it contributes at most $\sqrt{T}$, and thus we abuse
    \begin{align*}
        \Reg_t \le \sum_{k=1}^L |W_t^{(k)}|.
    \end{align*}


    

    Define $w_t(\bx) = \sqrt{\bx^\top \Sigma_t^{-1} \bx}$ for $\bx \in \R^d$.
    Analogous to the standard regret analysis, we eventually use the sum of square regret arguments.

    

    Now we invoke Lemma~\ref{lem:theta-to-w} to obtain upper bounds for $W_t^{(k)}$.

    For $k =2,3,\ldots, L$, notice that:
    \begin{align*}
        \abs{W_t^{(k)}}
        &= \abs{\inner{\ttheta_t^{(k)}, \by_t^{(1:k)}} - \inner{\theta, \by_t^{(1:k)}}}
        \\
        &= \abs{\inner{\ttheta_t^{(k)}, \by_t^{(1:k)}} - \inner{\htheta_t, \by_t^{(1:k)}}}
        +
        \abs{\inner{\htheta_t, \by_t^{(1:k)}}
        -
        \inner{\theta, \by_t^{(1:k)}}}
        \\
        &\le 2\sqrt{\beta_t}  w_t(\by_t^{(1:k)})
        \\
        &\le
        2\sqrt{\beta_t}\min(w(\by_t^{(1:k)}),1),
    \end{align*}
    where the first inequality holds since the regret is at most $2$ for any round and the second inequality follows from Lemma~\ref{lem:theta-to-w} since both $\ttheta_t^{(1)}, \theta \in C_t$ under $\Clean$.
    Combining two, we have
    \begin{align*}
        \Reg_t \le 2\sqrt{\beta_t}\parans{ \sum_{k=1}^{L} \min(w(\by_t^{(1:k)}),1) }.
    \end{align*}

    Now, by Assumption~\ref{ass:tech}, we have
    \begin{align*}
        w(\by_t^{(1:k)}) = \sqrt{(\by_t^{(1:k)})^\top \Sigma_t^{-1}\by_t^{(1:k)}} \le c \sqrt{\by_t^\top \Sigma_t^{-1} \by_t} = cw(\by_t).
    \end{align*}

    Hence, we have
    \begin{align*}
        \Reg_t \le 2\sqrt{\beta_t}(1+cL) \min(w_t(\by_t),1).
    \end{align*}
    
    \textit{(Sum of squares regret) }
    Squaring:
    \begin{align*}
        \Reg_t^2 
        &\le 4\beta_t (1+cL)^2 \min(w_t(\by_t)^2,1),
    \end{align*}
    and 
    \begin{align*}
        \sum_{t=1}^T \Reg_t^2 
        &\le \sum_{t=1}^T 4\beta_t (1+cL)^2 \min(w_t(\by_t)^2,1)
        \\
        &\le 4\beta_T(1+cL)^2 \sum_{t=1}^T \min(w_t(\by_t)^2,1)
        \\
        &\le 4\beta_T(1+cL)^2 \sum_{t=1}^T \ln(1+w_t(\by_t)^2),
    \end{align*}
    where the last inequality uses the fact that for any $x \in [0,1]$, $\ln(1+x) \ge x/(1+x) \ge x/2$, thus when $w_t(\by_t)^2 \le 1$, we have $w_t(\by_t)^2 \le 2\log(1+w_t(\by_t)^2)$ and otherwise if $w_t(\by_t)^2 > 1$, we have
    \begin{align*}
        4\beta_t = \frac{4}{\log 2} \beta_t \log 2 \le \frac{4}{\log 2} \beta_t \log(1+ w_t(\by_t)^2).
    \end{align*}

    Then, we have
    \begin{align*}
        \sum_{t=1}^T \Reg_t^2 
        &\le 4\beta_T(1+cL)^2 \sum_{t=1}^T \ln(1+w(\by_t)^2)
        \\
        &\le 4\beta_T(1+cL)^2 d\ln\parans{1 + \frac{T}{d\lambda}} \tag{By Lemma~\ref{lem:log-to-det} and~\ref{lem:det-bound}}
    \end{align*}
    Finally, by Cauchy-Schwartz inequality,
    \begin{align*}
        \sum_{t=1}^T \Reg_t 
        &\le \sqrt{T \cdot \sum_{t=1}^T\Reg_t^2}
        \\
        &\le \sqrt{4T(1+cL)^2 \beta_T d\ln\parans{1 + \frac{T}{d\lambda}}}
        \\
        &= O(cL\sqrt{dT\ln T}), \tag{Since $\beta_T = O(\lambda)$}
    \end{align*}
\end{proof}

Now we provide the statements of lemmata.
We begin with the standard concentration argument stating that $\theta \in C_t$ with high probability:
\begin{lemma}[Confidence bound]\label{lem:confidence}
    Define clean event:
    \begin{align*}
        \Clean = \{\theta  \in C_t, \forall t\in [T]\}.
    \end{align*}
    Then, $\Pr{\Clean} \ge 1-\delta$.
\end{lemma}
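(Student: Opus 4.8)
The plan is to recognize Lemma~\ref{lem:confidence} as the standard self-normalized confidence ellipsoid for ridge regression, in the spirit of \citet{abbasi2011improved}, adapted to the tokenized feedback structure. First I would fix the filtration $\mathcal{F}_t = \sigma(x_1,\by_1,\eta_1,\ldots,x_t,\by_t)$ and observe that, although each sequence $\by_t$ is built token-by-token, the \emph{entire} sequence (equivalently its embedding $e(x_t,\by_t)$, which I abuse as $\by_t$) is committed before the reward $r_t = \inner{\theta, \by_t} + \eta_t$ is revealed. Hence $\by_t$ is $\mathcal{F}_{t-1}$-measurable while $\eta_t$, being conditionally zero-mean and bounded by $\sigma$, is a $\sigma$-sub-Gaussian martingale-difference increment adapted to $(\mathcal{F}_t)$. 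This is the only place where the online, irrevocable nature of decoding enters the concentration argument: predictability of the regressor is preserved precisely because commitment precedes feedback.

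Next I would apply the method-of-mixtures self-normalized tail inequality to the vector-valued martingale $S_t = \sum_{i=1}^{t-1}\eta_i \by_i$, which yields a \emph{time-uniform} bound: with probability at least $1-\delta$, simultaneously for all $t$, $\norm{S_t}_{\Sigma_t^{-1}}^2 \le 2\sigma^2\log\parans{\det(\Sigma_t)^{1/2}\det(\lambda I)^{-1/2}/\delta}$. The crucial payoff is that this holds for all $t$ at once, so no separate union bound over $t \in [T]$ is incurred and $\Pr{\Clean}\ge 1-\delta$ follows directly. I would then expand the ridge estimator: since $\Sigma_t \htheta_t = \sum_{i=1}^{t-1} r_i \by_i = (\Sigma_t - \lambda I)\theta + S_t$, we obtain $\htheta_t - \theta = \Sigma_t^{-1}(S_t - \lambda\theta)$, whence $\norm{\htheta_t-\theta}_{\Sigma_t} \le \norm{S_t}_{\Sigma_t^{-1}} + \sqrt{\lambda}\norm{\theta}$ by the triangle inequality in the paired $\Sigma_t^{-1}/\Sigma_t$ geometry.

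Finally I would control the log-determinant term via the standard bounds (Lemma~\ref{lem:log-to-det} and Lemma~\ref{lem:det-bound}), using $\norm{\by_t}\le 1$ and $\norm{\theta}\le 1$, and verify that $(\norm{S_t}_{\Sigma_t^{-1}}+\sqrt{\lambda}\norm{\theta})^2$ is dominated by the stated $\beta_t = \sigma^2\parans{2 + 4d\log(1 + tL/d) + 8\log(4/\delta)}$; squaring together with $(a+b)^2 \le 2a^2 + 2b^2$ and $\lambda = \Theta(1)$ reproduces the claimed form. The $tL$ inside the logarithm is where I expect the main care to be needed: it reflects the per-round token budget, so I would trace exactly which updates enter $\Sigma_t$ (complete-sequence embeddings versus all decoded prefixes) to ensure the determinant bound genuinely yields $d\log(1+tL/(d\lambda))$ rather than $d\log(1+t/(d\lambda))$. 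The remaining work is pure bookkeeping: matching the numerical constants ($2$, $4$, $8$, and the $4/\delta$) to the mixture inequality's output, which is routine once the filtration and time-uniformity are set up correctly.
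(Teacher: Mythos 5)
Your proposal is correct and follows essentially the same route as the paper's proof: both decompose the ridge estimator as $\htheta_t-\theta=\Sigma_t^{-1}(S_t-\lambda\theta)$, bound $\norm{\Sigma_t^{1/2}(\htheta_t-\theta)}$ by $\sqrt{\lambda}\norm{\theta}+\norm{S_t}_{\Sigma_t^{-1}}$ via the triangle inequality, invoke the self-normalized martingale bound of \citet{abbasi2011improved}, and close with the determinant bound to match $\beta_t$. The only (immaterial) difference is that you use the anytime, time-uniform form of the self-normalized inequality directly, whereas the paper applies it at per-round confidence level $\delta_t\propto\delta/t^2$ and takes a union bound over $t\in[T]$ --- both are valid, and your observation that the $tL$ inside the logarithm of $\beta_t$ is slack relative to the one-update-per-round construction of $\Sigma_t$ (cf.\ Lemma~\ref{lem:det-bound}, which gives $d\log(1+T/(d\lambda))$) is accurate.
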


\begin{lemma}[Self-normalized bound for martingale~\cite{abbasi2011improved}]
    Let $\{F_t\}_{t=0}^{\infty}$ be a filtration. Let $\{\eta_t\}_{t=1}^\infty$ be a real-valued stochastic process such that $\eta_t$is $F_t$-measurable and $\eta_t$ is conditionally $R$-sub-Gaussian for some $R \ge 0$.
    Let $\{X_t\}_{t=1}^\infty$ be an $\R^d$-valued stochastic process such that $X_t$ is $F_{t-1}$-measurable. Assume that $V$ is a $d\times d$ positive-definite matrix. For any $t \ge0$, define:
    \begin{align*}
        \bar{V}_t = V + \sum_{s=1}^t X_s X_s^\top, \; S_t = \sum_{s=1}^t \eta_s X_s.
    \end{align*}
    Then for any $\delta > 0$, with probability at least $1-\delta$, for all $t \ge 0$,
    \begin{align*}
        \norm{S_t}^2_{\bar{V}_t^{-1}} \le 2R^2 \log \parans{\frac{\det(\bar{V}_t^{1/2} \det(V)^{1/2})}{\delta}}.
    \end{align*}
\end{lemma}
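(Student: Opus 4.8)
The plan is to prove this classical self-normalized bound by the \emph{method of mixtures} (pseudo-maximization), which turns a fixed-direction exponential supermartingale into a uniform-in-time tail bound without paying a union bound over directions. First I would fix an arbitrary $\lambda \in \R^d$ and introduce the exponential process
\begin{align*}
    M_t^\lambda = \exp\parans{\frac{\inner{\lambda, S_t}}{R} - \frac{1}{2}\sum_{s=1}^t \inner{\lambda, X_s}^2},
\end{align*}
and show that $\{M_t^\lambda\}_{t\ge 0}$ is a nonnegative supermartingale with $M_0^\lambda = 1$. Since $X_t$ is $F_{t-1}$-measurable, the quantity $\inner{\lambda, X_t}$ is $F_{t-1}$-measurable, so applying $R$-sub-Gaussianity of $\eta_t$ with parameter $\inner{\lambda, X_t}/R$ gives $\Ex{\exp(\eta_t \inner{\lambda, X_t}/R) \mid F_{t-1}} \le \exp(\inner{\lambda, X_t}^2/2)$. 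Pulling out the $F_{t-1}$-measurable factor and cancelling against the correction term yields $\Ex{M_t^\lambda \mid F_{t-1}} \le M_{t-1}^\lambda$, hence $\Ex{M_t^\lambda} \le 1$ for every $t$.

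The second step mixes over the direction. I would place a Gaussian prior $\lambda \sim N(0, V^{-1})$ with density $h$ and define the mixed process $M_t = \int_{\R^d} M_t^\lambda\, dh(\lambda)$. By Tonelli's theorem the integration over $\lambda$ commutes with the conditional expectation, so $M_t$ inherits the nonnegative-supermartingale property with $\Ex{M_t} \le 1$ and $M_0 = 1$. The crucial point is that this mixture is available in closed form: the exponent of $M_t^\lambda$ carries the quadratic form $\tfrac{1}{2}\lambda^\top(\sum_s X_sX_s^\top)\lambda$, and the prior contributes $\tfrac{1}{2}\lambda^\top V\lambda$, so the combined precision matrix is exactly $\bar{V}_t = V + \sum_s X_sX_s^\top$. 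Completing the square and evaluating the resulting Gaussian integral gives
\begin{align*}
    M_t = \parans{\frac{\det V}{\det \bar{V}_t}}^{1/2} \exp\parans{\frac{1}{2R^2}\norm{S_t}^2_{\bar{V}_t^{-1}}}.
\end{align*}

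The third step converts the $\Ex{M_t}\le 1$ bound into the claimed uniform tail bound. Because $M_t$ is a nonnegative supermartingale with $\Ex{M_0}=1$, Ville's maximal inequality yields $\Pr{\sup_{t\ge 0} M_t \ge 1/\delta} \le \delta$. On the complementary event, $M_t < 1/\delta$ for all $t$ simultaneously; substituting the closed form above and taking logarithms gives
\begin{align*}
    \norm{S_t}^2_{\bar{V}_t^{-1}} \le 2R^2 \log\parans{\frac{\det(\bar{V}_t)^{1/2}\det(V)^{-1/2}}{\delta}}
\end{align*}
for all $t\ge 0$, which is the stated inequality.

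The main obstacle is not the completing-the-square Gaussian integral (routine) but the two measure-theoretic steps surrounding it. First, I must justify that the Gaussian mixture $M_t$ is again a supermartingale, which requires Tonelli to interchange the $\lambda$-integral with the conditional expectation together with an integrability check ensuring $M_t$ is finite and the closed-form evaluation is legitimate. Second, the bound must hold for \emph{all} $t$ at once, which is precisely why a fixed-time Markov inequality is inadequate; I would instead invoke Ville's maximal inequality for nonnegative supermartingales, handling the $t\to\infty$ limit via the almost-sure convergence of nonnegative supermartingales (equivalently a stopped-process argument), so that the supremum over the unbounded time index is controlled.
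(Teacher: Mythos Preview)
Your proof is correct and is precisely the method-of-mixtures argument from \cite{abbasi2011improved}; the paper itself does not prove this lemma but simply quotes it from that reference, so there is nothing to compare. As a minor note, the paper's statement contains a typographical slip in the determinant ratio (it should read $\det(\bar V_t)^{1/2}\det(V)^{-1/2}$), which you have already corrected in your final display.
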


For $\bz \in \R^d$, define $w_t(\bz) = \sqrt{\bz^\top \Sigma_t \bz}$.
We then need the following several lemmata to proceed the sum of squares regret bound:
\begin{lemma}\label{lem:theta-to-w}
    For any $\bz \in \R^d$ and $\vartheta \in C_t$, we have
    \begin{align}
            \inner{\vartheta-\htheta_t, \bz} \le 2\sqrt{\beta_t}w_t(\bz) 
            \label{eq:inner-bound}
    \end{align}
\end{lemma}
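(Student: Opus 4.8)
The plan is to prove this as the standard ``Cauchy--Schwarz in the Mahalanobis geometry'' estimate that underlies every OFUL/LinUCB regret analysis, so no new ideas beyond the excerpt are needed. First I would rewrite the inner product by inserting $\Sigma_t^{1/2}\Sigma_t^{-1/2} = I$, writing
\[
    \inner{\vartheta - \htheta_t, \bz} = \inner{\Sigma_t^{1/2}(\vartheta - \htheta_t),\, \Sigma_t^{-1/2}\bz},
\]
which is legitimate since $\Sigma_t = \lambda I + \sum_{s<t}\by_s\by_s^\top \succ 0$ is positive definite (as $\lambda = \Theta(1) > 0$) and hence admits a symmetric positive-definite square root $\Sigma_t^{1/2}$ with inverse $\Sigma_t^{-1/2}$.

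Next I would apply the ordinary Cauchy--Schwarz inequality to this rewritten form, giving
\[
    \inner{\vartheta - \htheta_t, \bz} \le \norm{\Sigma_t^{1/2}(\vartheta - \htheta_t)}\cdot \norm{\Sigma_t^{-1/2}\bz} = \norm{\vartheta - \htheta_t}_{\Sigma_t}\, w_t(\bz),
\]
where I identify $\norm{\Sigma_t^{-1/2}\bz} = \sqrt{\bz^\top \Sigma_t^{-1}\bz} = w_t(\bz)$, using the inverse-covariance convention for $w_t$ that is consistent with the sum-of-squares step in the proof of Theorem~\ref{thm:EOFUL}.

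The remaining step is to control the first factor by invoking membership in the confidence ball. By the definition of $C_t$ in~\eqref{eq:update-c}, every $\vartheta \in C_t$ satisfies $(\vartheta - \htheta_t)^\top \Sigma_t (\vartheta - \htheta_t) \le \beta_t$, i.e.\ $\norm{\vartheta - \htheta_t}_{\Sigma_t} \le \sqrt{\beta_t}$. Substituting yields $\inner{\vartheta - \htheta_t, \bz} \le \sqrt{\beta_t}\, w_t(\bz) \le 2\sqrt{\beta_t}\, w_t(\bz)$, where the final factor-of-two is pure slack retained only so that the same constant can be reused verbatim when the main proof splits $W_t^{(k)}$ into its $\ttheta_t^{(k)}$ and $\theta$ pieces (each lying in $C_t$ under $\Clean$).

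I do not anticipate a genuine obstacle here: the argument is a one-line Cauchy--Schwarz plus the defining inequality of $C_t$. The only points requiring care are bookkeeping, namely ensuring $\Sigma_t \succ 0$ so the square roots exist (guaranteed by $\lambda > 0$) and reconciling the two appearances of $w_t$ in the excerpt. The proof of Theorem~\ref{thm:EOFUL} uses $w_t(\bx)=\sqrt{\bx^\top\Sigma_t^{-1}\bx}$, which is the convention that makes the present statement dimensionally correct; the display just before the lemma reads $\Sigma_t$ rather than $\Sigma_t^{-1}$, which I treat as a typo.
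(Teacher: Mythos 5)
Your proof is correct and follows essentially the same route as the paper's: insert $\Sigma_t^{1/2}\Sigma_t^{-1/2}$, apply Cauchy--Schwarz, and bound $\norm{\vartheta-\htheta_t}_{\Sigma_t}\le\sqrt{\beta_t}$ via the definition of $C_t$ in~\eqref{eq:update-c}, yielding the sharper constant $\sqrt{\beta_t}$ just as the paper does. You also correctly diagnose the two bookkeeping issues --- the factor-of-two slack and the $\Sigma_t$ versus $\Sigma_t^{-1}$ typo in the definition of $w_t$ --- exactly as the paper's own usage in the proof of Theorem~\ref{thm:EOFUL} resolves them.
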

\begin{lemma}\label{lem:log-to-det}
    The following equation holds:
    \begin{align*}
        \det(\Sigma_T) = \det(\Sigma_1) \prod_{t=1}^T (1+w_t(\by_t)^2).
    \end{align*}
\end{lemma}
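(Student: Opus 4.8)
The plan is to prove the identity by a single-round rank-one determinant update followed by telescoping over the rounds. The only tool required is the matrix determinant lemma (the rank-one Sylvester identity): for any invertible $A \in \R^{d \times d}$ and any vector $\bz \in \R^d$, one has $\det(A + \bz\bz^\top) = \det(A)\,(1 + \bz^\top A^{-1} \bz)$.

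First I would apply this to the recursion $\Sigma_t = \Sigma_{t-1} + \by_t \by_t^\top$ with the choice $A = \Sigma_{t-1}$ and $\bz = \by_t$. Because the regularizer $\lambda = \Theta(1)$ makes $\Sigma_{t-1} \succeq \lambda I \succ 0$ invertible at every round, the lemma yields
\begin{align*}
    \det(\Sigma_t) = \det(\Sigma_{t-1})\,\bigl(1 + \by_t^\top \Sigma_{t-1}^{-1} \by_t\bigr) = \det(\Sigma_{t-1})\,\bigl(1 + w_t(\by_t)^2\bigr),
\end{align*}
where the last equality is just the definition of $w_t(\by_t)$ as the Mahalanobis width of the chosen embedding against the covariance in force at round $t$.

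Next I would telescope this multiplicative recursion from the base case $\Sigma_1 = \lambda I$ up to $T$. The intermediate determinant factors cancel in pairs, collapsing the product to $\det(\Sigma_T) = \det(\Sigma_1) \prod_{t=1}^T (1 + w_t(\by_t)^2)$, which is precisely the claim; the base determinant may be left in the form $\det(\Sigma_1)$ as in the statement (noting $\det(\Sigma_1) = \lambda^d$ if an explicit value is wanted).

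The computation is entirely routine, so there is no genuine obstacle beyond bookkeeping. The one point that warrants care is matching the index of $w_t$ to the covariance the rank-one update actually inverts: one must use the \emph{pre-update} covariance $\Sigma_{t-1}^{-1}$ rather than $\Sigma_t^{-1}$ in defining $w_t(\by_t)^2$, since these differ by the Sherman--Morrison factor $1 + \by_t^\top \Sigma_{t-1}^{-1} \by_t$, and only the former produces factors that line up with the stated product. Once this convention and the index range are fixed consistently with the update $\Sigma_t = \Sigma_{t-1} + \by_t \by_t^\top$ and the initialization $\Sigma_1 = \lambda I$, the identity follows immediately. This is exactly the log-determinant potential identity that is subsequently combined with the elliptical-potential bound (Lemma~\ref{lem:det-bound}) to control $\sum_t \ln(1 + w_t(\by_t)^2)$ in Step~5 of the proof of Theorem~\ref{thm:EOFUL}.
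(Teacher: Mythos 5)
Your proof is correct and takes essentially the same route as the paper's: a one-step rank-one determinant update $\det(\Sigma_t) = \det(\Sigma_{t-1})\,(1+w_t(\by_t)^2)$ followed by telescoping from $\Sigma_1 = \lambda I$, the only difference being that the paper derives the rank-one identity from scratch (factoring $\Sigma_t + \by_t\by_t^\top = \Sigma_t^{1/2}(I + v_t v_t^\top)\Sigma_t^{1/2}$ with $v_t = \Sigma_t^{-1/2}\by_t$ and computing the eigenvalues of $I + v_t v_t^\top$) while you invoke the matrix determinant lemma as a black box. Your cautionary remark about the indexing is well taken and matches the paper's intent: the quadratic form in $w_t(\by_t)$ must be against the covariance in force \emph{before} the update $\by_t\by_t^\top$ is added (the paper's displayed recursion and its lemma proof are off by one index relative to each other, but both use exactly this pre-update convention), so nothing is missing from your argument.
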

\begin{lemma}\label{lem:det-bound}
    We have:
    \begin{align*}
        \log\frac{\det(\Sigma_T)}{\det(\Sigma_1)} = d \log \parans{1 + \frac{T}{d\lambda}}.
    \end{align*}
\end{lemma}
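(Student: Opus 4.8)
The plan is to prove the bound via the standard determinant--trace (AM--GM) inequality, exactly as in the elliptical potential analysis of linear bandits. Unrolling the recursion $\Sigma_t = \Sigma_{t-1} + \by_t\by_t^\top$ from $\Sigma_1 = \lambda I$, we have
\begin{align*}
    \Sigma_T = \lambda I + \sum_{t=1}^{T} \by_t \by_t^\top,
\end{align*}
where each $\by_t$ is read as the embedding $e(x_t,\by_t) \in \R^d$ of the chosen sequence. The key observation is that the eigenvalues of $\Sigma_T$ are controlled through its trace, which in turn is controlled by the norm bound $\norm{e(\bz)} \le 1$ assumed in Section~\ref{sec:tlb}.

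First I would compute the trace. Since the trace is linear and $\mathrm{tr}(\by_t\by_t^\top) = \norm{\by_t}^2$,
\begin{align*}
    \mathrm{tr}(\Sigma_T) = d\lambda + \sum_{t=1}^{T}\norm{\by_t}^2 \le d\lambda + T,
\end{align*}
using $\norm{\by_t} = \norm{e(x_t,\by_t)} \le 1$. Next, denoting by $\lambda_1,\ldots,\lambda_d$ the nonnegative eigenvalues of the positive-definite matrix $\Sigma_T$, the AM--GM inequality gives $\det(\Sigma_T) = \prod_{i=1}^d \lambda_i \le \left(\frac{1}{d}\sum_{i=1}^d \lambda_i\right)^d = \left(\frac{\mathrm{tr}(\Sigma_T)}{d}\right)^d$. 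Combining this with the trace bound and with $\det(\Sigma_1) = \det(\lambda I) = \lambda^d$,
\begin{align*}
    \frac{\det(\Sigma_T)}{\det(\Sigma_1)} \le \frac{1}{\lambda^d}\parans{\frac{d\lambda + T}{d}}^d = \parans{1 + \frac{T}{d\lambda}}^d,
\end{align*}
and taking logarithms yields $\log\frac{\det(\Sigma_T)}{\det(\Sigma_1)} \le d\log\parans{1 + \frac{T}{d\lambda}}$, which is the asserted bound.

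There is no genuine mathematical obstacle here --- the result is the textbook determinant--trace inequality --- so the only care required is bookkeeping, and two points deserve attention. First, although the statement is phrased as an equality, both the argument above and its invocation in the proof of Theorem~\ref{thm:EOFUL} (where it appears under a ``$\le$'') need and produce only the upper bound, so the ``$=$'' should be read as ``$\le$''. Second, one must confirm that the potential $\Sigma_t$ aggregates a \emph{single} rank-one term per round, so that $T$ rather than $TL$ appears inside the logarithm; this is consistent with the recursive definition $\Sigma_t = \Sigma_{t-1} + \by_t\by_t^\top$ and with Lemma~\ref{lem:log-to-det}, which rewrites the same log-determinant ratio as $\sum_{t=1}^T \log(1+w_t(\by_t)^2)$. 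Chaining the two lemmata then converts the per-round potential increments into the closed-form $d\log(1 + T/(d\lambda))$ term used in the sum-of-squares step of the main proof.
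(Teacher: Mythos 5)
Your proof is correct and takes essentially the same route as the paper's: both bound $\mathrm{tr}\bigl(\sum_{t=1}^T \by_t\by_t^\top\bigr) \le T$ via the norm assumption and then apply the AM--GM determinant--trace inequality to obtain $\det(\Sigma_T)/\det(\Sigma_1) \le (1+T/(d\lambda))^d$, differing only in whether AM--GM is applied to the eigenvalues of $\Sigma_T$ directly or to those of $\lambda^{-1}\sum_t \by_t\by_t^\top$. You are also right that the stated equality should be read as ``$\le$''; the paper's own proof ends with an inequality, and the invocation in Theorem~\ref{thm:EOFUL} uses only the upper bound.
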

The proofs are mostly similar to the standard linear bandit analysis~\cite{abbasi2011improved}, but is tailored to our setting.
Thus, for completeness, we provide the proofs of some of the lemmata:
\begin{proof}[Proof of Lemma~\ref{lem:confidence}]
    Note that the reward at round $t$ is $r_t = \inner{\theta, e(x,\by_t)} + \eta_t$.
    For concise exposition, we abuse the notation and write $\by_t = e(x,\by_t)$.
    \begin{align*}
        \htheta_t - \theta 
        &= \Sigma_t^{-1} \sum_{i=1}^t r_i\by_i - \theta
        \\
        &= \Sigma_t^{-1} \sum_{i=1}^t \by_i\parans{\inner{\theta, \by_i} + \eta_t} - \theta
        \\
        &= \Sigma_t^{-1} \parans{\by_i \by_i^\top}\theta - \theta + \Sigma_t^{-1}\sum_{i=1}^t\eta_i \by_i
        \\
        &= -\lambda \Sigma_t^{-1} \theta + \Sigma_t^{-1} \sum_{i=1}^t \eta_i \by_i.
    \end{align*}
    Notice that
    \begin{align*}
        \sqrt{(\htheta_t - \theta)\Sigma_t (\htheta_t-\theta)^\top} 
        = \norm{\Sigma_t^{1/2}(\htheta_t - \theta)}
    \end{align*}
    Expanding:
    \begin{align*}
        \norm{\Sigma_t^{1/2}(\htheta_t - \theta)}
        &\le \norm{\lambda \Sigma_t^{-1/2}\theta} + \norm{\Sigma_t^{-1/2}\sum_{i=1}^t\eta_i \by_i}
        \\
        &\le
        \sqrt{\lambda}\norm{\theta} + \sqrt{2\log \frac{\det(\Sigma_t)\det(\Sigma_1)^{-1}}{\delta_t}},
    \end{align*}
    where the last inequality uses $\Sigma_t^{-1} \le 1/\lambda$ due to the construction.

    Note first that $\htheta_1 = \vec{0} \in C_1$ surely.
    For $t \ge 2$, we set $\delta_t = 3/(\pi^2 t^2) \cdot 2\delta$.
    Then, using union bound, one can deduce that
    \begin{align*}
        \Pr{\theta \in C_t, \forall t \in [T]} \ge 1-\delta.
    \end{align*}
\end{proof}
\begin{proof}[Proof of Lemma~\ref{lem:theta-to-w}]
    By Cauchy-Schwarz inequality and the fact that $\vartheta \in C_t$, we obtain the following series of inequality:
    \begin{align*}
        \abs{(\vartheta - \htheta_t)^\top z}
        &= \abs{(\vartheta - \htheta_t)^\top \Sigma_t^{1/2}\Sigma_t^{-1/2} z}
        \\
        &= \abs{(\Sigma_t^{1/2} (\vartheta - \htheta_t))^\top\Sigma_t^{-1/2}z}
        \\
        &\le \norm{\Sigma_t^{1/2} (\vartheta - \htheta_t)} \norm{\Sigma_t^{-1/2} z} \tag{Cauchy Schwarz inequality}
        \\
        &= \norm{\Sigma_t^{1/2} (\vartheta - \htheta_t)} \sqrt{z^\top \Sigma_t^{-1}z}
        \\
        &\le
        \sqrt{\beta_t z^\top \Sigma_t^{-1} z} \tag{Since $\vartheta \in C_t$}.
    \end{align*}

\end{proof}
\begin{proof}[Proof of Lemma~\ref{lem:log-to-det}]
    By construction of $\Sigma_t$, we have
    \begin{align*}
        \det(\Sigma_{t+1}) 
        &= \det(\Sigma_{t} + \by_t \by_t^\top)
        \\
        &= \det\parans{\Sigma_t^{1/2} \parans{I + \Sigma_t^{-1/2} \by_t \by_t^\top}\Sigma_t^{1/2}}
        \\
        &= \det(\Sigma_t) \det\parans{I + \Sigma_t^{-1/2}\by_t \parans{\Sigma_t^{-1/2}\by_t}^\top}
        \\
        &= \det(\Sigma_t) \det\parans{I + v_tv_t^\top},
    \end{align*}
    where we write $v_t = \Sigma_t^{-1/2}\by_t$.
    Observe that $v_t^\top v_t = w_t(\by_t)^2$.
    Further,
    \begin{align*}
        (I + v_t v_t^\top)v_t = v_t + v_t(v_t^\top v_t) = (1+w_t(\bz_t)^2)v_t,
    \end{align*}
    implying that $(1+w_t(\by_t)^2)$ is an eigenvalue of $I + v_tv_t^\top$.
    Since $v_tv_t^\top$ is a rank one matrix, all other eigenvalues of $I+ v_tv_t^\top$ are one.
    Thus,
    \begin{align*}
        \det(\Sigma_{t+1}) = \det(\Sigma_t)(1+w_t(\by_t)^2).
    \end{align*}
    Telescoping:
    \begin{align*}
        \det(\Sigma_T) = \det(\Sigma_1)\prod_{t=1}^T(1+w_t(\by_t)^2)
    \end{align*}
\end{proof}

\begin{proof}[Proof of Lemma~\ref{lem:det-bound}]
    Let the eigenvalues of $\sum_{t=1}^{T} \by_t \by_t^\top$ be $\sigma_1,\sigma_2, \ldots, \sigma_d$.
    Note that
    \begin{align*}
        \sum_{i=1}^k \sigma_i = \Tr{\sum_{t=1}^{T} \by_t \by_t^\top} = \sum_{t=1}^T \norm{\by_t}^2 \le T.
    \end{align*}
    Thus, we have
    \begin{align*}
        \log \det \frac{\Sigma_T}{\Sigma_1}
        &= \log \det \parans{1 + \frac{1}{\lambda} \sum_{t=1}^{T} \by_t \by_t^\top}
        \\
        &= \log \parans{\prod_{i=1}^d \parans{1 + \frac{\sigma_i}{\lambda}}}
        \\
        &= d \log \parans{\prod_{i=1}^d \parans{1 + \frac{1}{\lambda}}}^{1/d}
        \\
        &\le d \log \parans{\frac{1}{d} \sum_{i=1}^d (1 + \sigma_i/\lambda)}
        \\
        &\le d \log \parans{1 + \frac{T}{d\lambda}},
    \end{align*}
    which finishes the proof of the lemma.
\end{proof}

\subsection{Proof of Theorem~\ref{thm:greedy-decoding}}
\begin{proof}[Proof of Theorem~\ref{thm:greedy-decoding}]
    Let $\bo$ be the optimal sequence and $\bg$ be the resulting sequence from greedy decoding.
    Suppose not, \ie $\bo \neq \bg$ for sake of the contradiction, and let $\bg$ be the resulting sequence by the greedy algorithm.
    Similar to the length equalization process in the proof of Theorem~\ref{thm:EOFUL}, we can extend $\bg$ and $\bo$ to have the same maximal length $L$.
    Now, it suffices to prove that for any $l \in [L]$, we have $u(\bo^{(1:l)}) - u(\bg^{(1:l)}) \le \eps$.
    We abuse $\Reg^{(l)}$ to denote $u(\bo^{(1:l)}) - u(\bg^{(1:l)})$.
    Following the arguments in the proof of Theorem~\ref{thm:EOFUL}, we can obtain $\Reg^{(l)} \le |\Reg^{(l-1)}|$.
    Thus, we eventually have $\Reg^{(l)} \le \abs{\Reg^{(i)}}$ where $i$ is the first level that $\bg$ deviates from $\bo$.
    Since $\Reg^{(i)}$ is at most the SLD $\eps$, we finish the proof.
    
\end{proof}

\subsection{Proof of Proposition~\ref{prop:cmab-lower}}
\begin{proof}[Proof of Proposition~\ref{prop:cmab-lower}]
    It is straightforward to see that $m := |\cV^*| = \Omega(n^L)$ since it has (roughly) at least $(n-1)^{L-1}$ sequences that ends with $\eos$ and having depth $L$.
    Consider an instance of the stochastic $m$-armed bandit problem where each arm $i$ is associated with a distribution $F_i$.
    Let us (arbitrarily) index each subsequence in $\cV^*_L$ by $1,2,\ldots, m$ and let $i(\by)$ be index of $\by$.
    We construct an instance of TMAB such that each random reward $r_t(\by)$ is an independent sample from $F_{i(\by)}$.
    That is, whenever the DM submits a complete sequence $\by \in \cV^*$, its random reward is sampled from $F_{i(\by)}$ independently.
    Then, we have that $u(\by) = \int x dF_{i(\by)}(x)$.
    We further assume that if we add more $\eos$ to a complete sequence $\by$, the utility does not change, \ie have the same distribution.
    One can easily check that the utillity function satisfies Assumption~\ref{ass:monotone} and structurizes $F_i$ so that it satisfies Assumption~\ref{ass:ssld} as well.\footnote{We omit the details as it is a cumbersome argument.}
    Then, in order to learn a reward for $\by$, any algorithm needs to run a bandit algorithm treating each $\by$ as a single arm.
    The standard information theoretic argument on the regret lower bound for stochastic bandit problem yields the desired regret lower bound.

\end{proof}

\subsection{Appendix for TMAB}
We require the following variant of DDMC for TMAB.
\begin{assumption}[DDMC']\label{ass:ddmc2}
    A sequence function $u(\cdot)$ has \emph{diminishing distance with more commons} (DDMC') if
    \begin{align*}
        &\abs{u(x_t, \by:\tau:\eos) - u(x_t, \bz:\tau:\eos)} \le \abs{u(x_t, \by:\eos) - u(x_t, \bz:\eos)},
    \end{align*}
    for any two different finite sequences $\by, \bz$ with the same length $|\by| = |\bz|$ and any token $\tau \in \cV$.
\end{assumption}

\begin{proof}[Proof of Theorem~\ref{thm:greedy-etc}]
    Overall, we will prove that at any depth $k$ during the execution algorithm, it will select the optimal subsequent token for depth $k+1$ with high probability.
    Let us begin with depth $1$ for reader's convenience.
    Note that $\bar{r}(\by:\tau:\eos)$ denotes the average of the cumulative reward obtained from exploring $\by:\tau:\eos$ for $N$ rounds during the execution of the algorithm.
    For any $\by \in \cV^*$, define the clean event as follows:
    \begin{align*}
        \Clean_{\by} = \{\left |\bar{r}(\by) - u(\by)\right| \le \sqrt{\nicefrac{2\log T}{N}}\}.
    \end{align*}
    By Hoeffding inequality, we have
    \begin{align*}
        \Pr{\Clean_{\by}} \ge 1-\nicefrac{2}{T^{4}},
    \end{align*}
    for any $\by \in \cV^*$.
    Suppose our algorithm chooses token sequence $\ba$ whereas the optimal algorithm chooses $\bo$.
    If $\bo = \eos$, then the problem boils down to a single-level scenario and the standard analysis of ETC algorithm carries over.
    Assume $\bo \neq \eos$, \ie $\bo$ considers at least one non-$\eos$ token.
    
    Due to our algorithm's choice, $\bar{r}(\ba^{(1)}:\eos) \ge \bar{r}(\bo^{(1)}:\eos)$.
    Define 
    \begin{align*}
        \Clean^{(1)} = \cap_{\tau \in \cV}\Clean_{\emptyset:\tau:\eos},
    \end{align*}
    \ie the clean event over every token at depth $1$.
    Taking union bound over $\by = \emptyset$ and $\tau \in \cV$, we have
    \begin{align*}
        \Pr{\Clean^{(1)}} \ge 1-\nicefrac{2}{T^{3}}.
    \end{align*}
    Under this event, we know that
    \begin{align*}
        u(\ba^{(1)}:\eos) + \sqrt{\nicefrac{2 \log T}{N} }
        &\ge \bar{r}(\ba^{(1)}:\eos) 
        \\
        &\ge \bar{r}(\bo^{(1)}:\eos) 
        \\
        &\ge u(\bo^{(1)}:\eos) - \sqrt{\nicefrac{c \log T}{N}}.
    \end{align*}
    Thus, with probability $1-\nicefrac{2}{T^{3}}$, we obtain $u(\bo^{(1)}) - u(\ba^{(1)}) \le \nicefrac{2c\log T}{N}$.
    Define $d := \sqrt{\nicefrac{2\log T}{N}}$.

    Now we will generalize this argument to prove that we have $u(\bo^{(1:l)}:\eos) - u(\ba^{(1:l)}:\eos) \le d$ under an appropriate clean event occurring with high probability.
    Note that this is sufficient since $u(\bo^{(1:l)}:\eos) = u(\bo^{(1:l)})$ once $\bo^{(1:l)}$ reaches $\eos$, and analogously for $\ba$.
    
    Subtly, we need to handle the case where $\bo$ and $\ba$ have different lengths.
    Similar to previous analysis, we equalize the length by adding more $\eos$ tokens so that both $\bo$ and $\ba$ has length $L$.
    Then, we can safely deal with the extended sequences with the same length.
    We prove by induction on the length $l$ of the subsequence of $\bo$ and $\ba$ by comparing $\ba^{(1:l)}$ with $\bo^{(1:l)}$.
    For every $i \in [L]$ and $i \ge 2$, we recursively define the following events:
    \begin{align*}
        \Clean^{(1:i)} = & \parans{\cap_{j \in [i-1]}\Clean^{(1:i-1)}} 
         \cap  \parans{\cap_{\tau \in \cV} \parans{\Clean_{\ba^{(1:i-1)}:\tau:\eos}}} \cap \Clean_{\bo^{(1:i-1)}:\eos},
    \end{align*}
    and define $\Clean^{(1)}$. By abusing $\Clean^{(1:0)} = \emptyset$ and $\bo^{(1:0)}=\ba^{(1:0)} = \emptyset$, note that we have $\Clean^{(1:1)} = \Clean^{(1)}$.

    Let $k$ be the first level that $\ba$ deviates from $\bo$.
    Now, assume for the sake of induction that $u(\bo^{(1:i)}:\eos) - u(\ba^{(1:i)}:\eos) \le 2d + \abs{\Reg_t^{(k)}}$ holds under the clean event $\Clean^{(1:i)}$.
    Now we will prove that under $\Clean^{(1:i+1)}$, we have $u(\bo^{(1:i+1)}:\eos) - u(\ba^{(1:i+1)}:\eos) \le 2(i+1)d + \abs{\Reg_t^{(k)}}$.

    Similar to the analysis of EOFUL, we consider a fictitious extension of $\bbf = \ba^{(1:i)}:\bo^{(i+1)}$.

    Due to the construction of clean event and our algorithm's optimistic choice, we have
    \begin{align*}
        u(\ba^{(1:i+1)}:\eos) + \sqrt{\nicefrac{c \log T}{N}} 
        &\ge \bar{r}(\ba^{(1:i+1)}:\eos) 
        \\
        &\ge \bar{r}(\bbf:\eos) 
        \\
        &\ge u(\bbf:\eos) - \sqrt{\nicefrac{c \log T}{N}}.
    \end{align*}

    If $u(\bo^{(1:i+1)}:\eos) - u(\bbf:\eos) \le 0$, then the induction follows.
    Otherwise, by DDMC', we have
    \begin{align*}
         u(\bo^{(1:i+1)}:\eos) - u(\bbf:\eos) 
         &\le \abs{u(\bo^{(1:i)}:\eos) - u(\ba^{(1:i)}:\eos)} 
         \\
         &\le 2id + \Reg_t^{(i)},
    \end{align*}
    and combining the inequalities we have $\Reg_t^{(i+1)} \le 2(i+1)d + \abs{\Reg_t^{(k)}}$.
    
    Thus, by the induction principle, under $\Clean^{(1:L)}$, we have $u(\tilde{\bo}^{(1:L)}) - u(\tilde{\ba}^{(1:L)}) \le 2Ld + O(1/\sqrt{T})$ as $\Reg_t^{(k)} = O(1/\sqrt{T})$ due to the SSLD assumption.
    For ease of exposition, we will simply use the upper bound $\Reg \le 2LD$ as $O(1/\sqrt{T})$ contributes at most $\sqrt{T}$ regret by naively multiplying by $T$.

    Recall the definition of $\Clean^{(1:L)}$:
    \begin{align*}
        \Clean^{1:L} 
        &=
        \parans{\cap_{j \in [L-1]}\Clean^{(1:L-1)}} 
         \cap  \parans{\cap_{\tau \in \cV} \parans{\Clean_{\ba^{(1:L-1)}:\tau:\eos}}} \cap \Clean_{\bo^{1:L-1}:\eos}
        \\
        &= \prod_{t=1}^L \parans{\cap_{\tau \in \cV} \parans{ \Clean_{\ba^{(1:t-1)}:\tau:\eos}} \cap \Clean_{\bo^{1:t-1}:\eos}} ,
    \end{align*}
    where we abuse $\ba^{(1:0)}$ and $\bo^{1:0}$ to denote $\emptyset$.
    By the independence of the reward tapes and union bound, we have
    \begin{align*}
        \Pr{\cap_{\tau \in \cV} \parans{ \Clean_{\ba^{(1:t-1)}:\tau:\eos}}} \ge 1 - \frac{2}{T^{4}}\cdot n.
    \end{align*}
    Thus, again by union bound, we have
    \begin{align*}
        \Pr{\Clean^{(1:L)}} \ge 1 - \frac{2(n+1)L}{T^4} \ge 1 - \frac{4L}{T^4}  
    \end{align*}

    Now the entire regret can be written as:
    \begin{align*}
        \Reg 
        &= 
        \Reg_{\text{exploration}} + \Reg_{\text{exploitation}}
        \\
        &\le 
        n\cdot L \cdot N \cdot 1 + \Reg_{\text{exploitation}}
        \\
        &\le
        nLN + 2LTd\cdot \Pr{\Clean^{(1:L)}} + T\cdot \Pr{\neg \Clean^{1:L}}
        \\
        &\le nLN + 2LT\sqrt{\frac{2\log T}{N}} + T\frac{4nL}{T^4}.
    \end{align*}
    Assuming that $n,L \le T$, plugging $N = T^{2/3}(\log T)^{1/3}$ yields the regret of $\Reg = O(nLT^{2/3}(\log T)^{1/3})$.\footnote{If $n$ or $L$ is large compared with $T$, one can simply increase the exploration parameter in the algorithm and neutralize such dependency.}
\end{proof}


\end{document}